\newcommand{\beq}{\vspace{0mm}\begin{equation}}
\newcommand{\eeq}{\vspace{0mm}\end{equation}}
\newcommand{\beqs}{\vspace{0mm}\begin{eqnarray}}
\newcommand{\eeqs}{\vspace{0mm}\end{eqnarray}}
\newcommand{\barr}{\begin{array}}
\newcommand{\earr}{\end{array}}
\newcommand{\Imat}{{\bf I}}
\newcommand{\bv}[0]{{\boldsymbol{b}}}
\newcommand{\mv}[0]{{\boldsymbol{m}}}
\newcommand{\rv}{\boldsymbol{r}}
\newcommand{\xv}{\boldsymbol{x}}
\newcommand{\yv}{\boldsymbol{y}}
\newcommand{\cdotv}{\boldsymbol{\cdot}}
\newcommand{\Phimat}{\boldsymbol{\Phi}}
\newcommand{\thetav}{\boldsymbol{\theta}}
\newcommand{\phiv}{\boldsymbol{\phi}}
\newcommand{\E}{\mathbb{E}}
\newtheorem{thm}{Theorem} 
\newtheorem{cor}[thm]{Corollary}
\newtheorem{lem}[thm]{Lemma}
\newcolumntype{L}[1]{>{\raggedright\let\newline\\\arraybackslash\hspace{0pt}}m{#1}}
\newcolumntype{C}[1]{>{\centering\let\newline\\\arraybackslash\hspace{0pt}}m{#1}}
\newcolumntype{R}[1]{>{\raggedleft\let\newline\\\arraybackslash\hspace{0pt}}m{#1}}
\begin{document}

\title{Augmentable Gamma Belief Networks}

\author{\name Mingyuan Zhou\,\thanks{Correspondence should be addressed to M. Zhou 
or B. Chen. 
 } 
 \email mingyuan.zhou@mccombs.utexas.edu\\
       \addr Department of Information, Risk, and Operations Management\\
       McCombs School of Business\\
        The University of Texas at Austin\\
       Austin, TX 78712, USA
       \AND
       \name Yulai Cong \email yulai\_cong@163.com\\
       \name Bo Chen\,$^*$ 
       \email bchen@mail.xidian.edu.cn \\
       \addr National Laboratory of Radar Signal Processing\\ 
       Collaborative Innovation Center of Information Sensing and Understanding\\
       Xidian University\\
       Xi'an, Shaanxi 710071, China}

\editor{Francois Caron}

\maketitle

\begin{abstract}
To infer multilayer deep representations of high-dimensional discrete and nonnegative real vectors, we propose an augmentable gamma belief network (GBN) that factorizes each of its hidden layers into the product of a sparse connection weight matrix and the nonnegative real hidden units of the next layer. The GBN's hidden layers are jointly trained with an upward-downward Gibbs sampler that solves each layer with the same subroutine. The gamma-negative binomial process combined with a layer-wise training strategy allows inferring the width of each layer given a fixed budget on the width of the first layer. Example results illustrate interesting relationships between the width of the first layer and the inferred network structure, and demonstrate that the GBN can add more layers to improve its performance in both unsupervisedly extracting features and predicting heldout data. For exploratory data analysis, we extract trees and subnetworks from the learned deep network to visualize how the very specific factors discovered at the first hidden layer and the increasingly more general factors discovered at deeper hidden layers are related to each other, and we generate synthetic data by propagating random variables through the deep network from the top hidden layer back to the bottom data layer.
\end{abstract}

\begin{keywords}
  Bayesian nonparametrics, deep learning, multilayer representation, Poisson factor analysis, topic modeling, 
  unsupervised learning
\end{keywords}

\section{Introduction}

There has been significant recent interest in deep learning. Despite its tremendous success in supervised learning, inferring a multilayer data representation in an unsupervised manner remains a challenging problem \citep{Bengio+chapter2007,ranzato2007unsupervised,Bengio-et-al-2015-Book}. To generate data with a deep network, it is often unclear how to set the structure of the network, including the depth (number of layers) of the network and the width (number of hidden units) of each layer. 
In addition, for some commonly used deep generative models, including the sigmoid belief network (SBN),  deep belief network (DBN), and deep Boltzmann machine (DBM),  the hidden units are often restricted to be binary. More specifically, 
the SBN, which connects the binary units of adjacent layers via the sigmoid functions, infers a deep representation of multivariate binary vectors \citep{neal1992connectionist,saul1996mean}; the DBN \citep{hinton2006fast} is a SBN whose top hidden layer is replaced by the restricted Boltzmann machine (RBM) \citep{POE} that is undirected; and the DBM is an undirected deep network that connects the binary units of adjacent layers using the RBMs \citep{salakhutdinov2009deep}. 
 All these three 
  deep networks are designed to model binary observations, without principled ways to set the network structure. Although one may modify the bottom layer 
 to model Gaussian and multinomial observations, the hidden units of these networks are still typically restricted to be binary \citep{salakhutdinov2009deep,larochelle2012neural,salakhutdinov2013learning}. 
  To generalize these models, one may consider the exponential family harmoniums \citep{welling2004exponential,xing2005mining} to construct more general networks with non-binary hidden units, but often at the expense of noticeably increased complexity in training and data fitting. 
  To model real-valued data without restricting the hidden units to be binary, one may consider 
  the general framework of 
  nonlinear Gaussian belief networks \citep{frey1999variational}  that constructs continuous  hidden units by nonlinearly transforming Gaussian distributed latent variables, including as special cases both the continuous SBN of \citet{frey1997continuous,frey1997variational}
 and the rectified Gaussian nets of \citet{hinton1997generative}. More recent scalable generalizations under that framework include  variational auto-encoders \citep{kingma2013auto} and deep latent Gaussian models \citep{rezende2014stochastic}.
 
Moving beyond conventional deep generative models using binary or nonlinearly transformed Gaussian  hidden units and setting the network structure in a heuristic manner,
we construct  deep networks using gamma distributed nonnegative real hidden units, and combine the gamma-negative binomial process \citep{NBP2012,NBP_CountMatrix} with a greedy-layer wise training strategy to automatically infer the network structure.
The proposed model is called the augmentable gamma belief network, referred to hereafter for  brevity as the GBN, which  factorizes the observed or latent count vectors under the Poisson likelihood into the product of a factor loading matrix and the gamma distributed hidden units (factor scores) of layer one; and further factorizes the shape parameters of the gamma hidden units of each layer  into the product of a connection weight matrix and the gamma hidden units of the next layer. The GBN together with Poisson factor analysis can unsupervisedly infer a multilayer representation from multivariate count vectors, with a simple but powerful mechanism to capture the correlations between the visible/hidden features across all layers and handle highly overdispersed counts. With the Bernoulli-Poisson link function \citep{EPM_AISTATS2015}, the GBN is further applied to high-dimensional sparse binary vectors by truncating latent counts, and with a Poisson randomized gamma distribution, the GBN is further applied to high-dimensional sparse nonnegative real data by randomizing the gamma shape parameters with latent counts. 

For tractable inference of a deep generative model, one often applies either a sampling based procedure \citep{neal1992connectionist,frey1997continuous} or variational inference \citep{saul1996mean,frey1997variational,ranganath2014deep,kingma2013auto}. However, conjugate priors on the model parameters that connect adjacent layers are often unknown, making it difficult to develop fully Bayesian inference that infers the posterior distributions of these parameters. It was not until recently that a Gibbs sampling algorithm,  imposing priors on the network connection weights and sampling from their conditional  posteriors,
was developed for the SBN by \citet{gan2015learning}, using the Polya-Gamma data augmentation technique developed for logistic models \citep{PolyaGamma}. In this paper, we will develop data augmentation technique unique for the augmentable GBN, allowing us to develop a fully Bayesian upward-downward Gibbs sampling algorithm to infer the posterior distributions of not only the hidden units, but also the connection weights between adjacent layers.    



Distinct from previous deep networks that often require tuning both the width (number of hidden units) of each layer and the network depth (number of layers), the GBN employs nonnegative real hidden units and automatically infers the widths of subsequent layers given a fixed budget on the width of its first layer.
Note that the budget could be infinite and hence the whole network can grow without bound as more data are being observed. 
Similar to other belief networks that can often be improved by adding more hidden layers 
\citep{hinton2006fast,sutskever2008deep,Bengio-et-al-2015-Book},
for the proposed model, 
when the budget on the first layer is finite and hence the ultimate capacity of the network could be limited, our experimental results also show that a GBN equipped with a narrower first layer could increase its depth to match or even outperform a shallower one with a substantially wider first layer.

The gamma distribution density function has the highly desired strong non-linearity for deep learning, 
but the existence of neither a conjugate prior nor a closed-form maximum likelihood estimate \citep{choi1969maximum} 
for its shape parameter makes a deep network with gamma hidden units appear unattractive. Despite seemingly difficult, we discover that, by generalizing the data augmentation and marginalization techniques for discrete data modeled with the Poisson, gamma, and negative binomial distributions \citep{NBP2012}, one may propagate latent counts one layer at a time from the bottom data layer to the top hidden layer, with which one may derive an efficient upward-downward Gibbs sampler that, 
one layer at a time in each iteration, upward samples Dirichlet distributed connection weight vectors and then downward samples gamma distributed hidden units, with the latent parameters of each layer solved with the same subroutine. 

With extensive experiments in text and image analysis, we demonstrate that the deep GBN with two or more hidden layers clearly outperforms the shallow GBN with a single hidden layer in both unsupervisedly extracting latent features for classification and predicting heldout data. Moreover, we demonstrate the excellent ability of the GBN in exploratory data analysis: by extracting trees and subnetworks from the learned deep network, we can follow the paths of each tree to visualize 
various aspects of the data, from very general to very specific, and understand how they are related to each other. 

In addition to constructing a new deep network that well fits high-dimensional sparse binary, count, and nonnegative real data, developing an efficient upward-downward Gibbs sampler, and applying the learned deep network for exploratory data analysis, other contributions of the paper include: 1) proposing novel link functions, 2) combining the gamma-negative binomial process \citep{NBP2012,NBP_CountMatrix} with a layer-wise training strategy to automatically infer the network structure; 3) revealing the relationship between the upper bound imposed on the width of the first layer and the inferred widths of subsequent layers; 4) revealing the relationship between the depth of the network and the model's ability to model overdispersed counts; 
and 5) generating  multivariate high-dimensional discrete or nonnegative real vectors, whose distributions are governed by the GBN, 
 by propagating 
the gamma hidden units of
the top hidden layer back to the bottom data layer. We note this paper significantly extends 
our recent conference publication \citep{PGBN_NIPS2015} that proposes the Poisson GBN. 

\section{Augmentable Gamma Belief Networks}

Denoting $\thetav_j^{(t)}\in\mathbb{R}_+^{K_{t}}$ as the $K_{t}$ hidden units of sample $j$ at layer $t$, where $ \mathbb{R}_+=\{x:x\ge 0\}$, the generative model of the augmentable gamma belief network (GBN) with $T$ hidden layers, from top to bottom, 
 is expressed as
\beqs\displaystyle
&\thetav_j^{(T)}\sim\mbox{Gam}\left(\rv,1\big/c_j^{(T+1)}\right),\notag\\ 
&\vdots\notag\\
&\thetav_j^{(t)}\sim\mbox{Gam}\left(\Phimat^{(t+1)}\thetav_j^{(t+1)},1\big/c_j^{(t+1)}\right),\notag\\
&\vdots\notag\\
&
\thetav_j^{(1)}\sim\mbox{Gam}\left(\Phimat^{(2)}\thetav_j^{(2)},
{p_j^{(2)}}\big/{\big(1-p_j^{(2)}\big)}\right), 
 \label{eq:PGBN}
\eeqs
where $x\sim\mbox{Gam}(a,1/c)$ represents a gamma distribution with mean $a/c$ and variance $a/c^2$.
For $t=1,2,\ldots,T-1$, the GBN 
factorizes the shape parameters of the gamma distributed hidden units $\thetav_j^{(t)}\in\mathbb{R}_+^{K_{t}}$ of layer $t$ into the product of the connection weight matrix $\Phimat^{(t+1)}\in\mathbb{R}_+^{K_{t}\times K_{t+1}}$ and the hidden units $\thetav_j^{(t+1)}\in\mathbb{R}_+^{ K_{t+1}}$ of layer $t+1$; the top layer's hidden units $\thetav_j^{(T)}$ share the same vector $\rv=(r_1,\ldots,r_{K_T})' $ as their gamma shape parameters; and the $p_j^{(2)}$ are probability parameters and 
$\{1/c^{(t)}\}_{3,T+1}$ are gamma scale parameters, with $c_j^{(2)}:=\big(1-p_j^{(2)}\big)\big/p_j^{(2)}$. We will discuss later how to measure the connection strengths between the nodes of adjacent layers and the overall popularity of a factor at a particular hidden layer.

For scale identifiability and ease of inference and interpretation, each column of $\Phimat^{(t)}\in\mathbb{R}_+^{K_{t-1}\times K_{t}}$ is restricted to have a unit $L_1$ norm and hence $0\le\Phimat^{(t)}(k',k)\le 1$. 
To complete the hierarchical model, for $t\in\{1,\ldots,T-1\}$, we let 
\beqs
&\phiv_k^{(t)}\sim\mbox{Dir}\big(\eta^{(t)},\ldots,\eta^{(t)}\big), ~~r_k\sim\mbox{Gam}\big(\gamma_0/K_T,1/c_0\big) \label{eq:Phi}
\eeqs 
where $\phiv_k^{(t)}\in\mathbb{R}_+^{K_{t-1}}$ is the $k$th column of $\Phimat^{(t)}$; we impose $c_0\sim\mbox{Gam}(e_0,1/f_0)$ 
and $\gamma_0\sim\mbox{Gam}(a_0,1/b_0)$; 
and for $t\in\{3,\ldots,T+1\}$, we let
\beqs
&p_j^{(2)}\sim\mbox{Beta}(a_0,b_0),~~~c_j^{(t)}\sim\mbox{Gam}(e_0,1/f_0). \label{eq:c_j}
\eeqs
We expect 
the correlations between
the $K_t$ rows (latent features) of $(\thetav_1^{(t)}, \ldots,\thetav_J^{(t)})$ to be captured by the columns of $\Phimat^{(t+1)}$. Even if $\Phimat^{(t)}$ for $t\ge 2$ are all identity matrices, indicating no correlations between the latent features to be captured, our analysis in Section \ref{sec:distributed} will show that a deep structure with $T\ge 2$ could still benefit data fitting by better modeling the variability of the latent features $\thetav_j^{(1)}$. 
Before further examining the network structure, below we first introduce a set of distributions that will be used to either model different types of data or augment the model for simple inference. 


\subsection{Distributions for Count, Binary, and Nonnegative Real Data}
Below we  first describe some useful count distributions that will be used later. 
\subsubsection{Useful Count Distributions and Their Relationships}

Let the Chinese restaurant table (CRT) distribution $l\sim\mbox{CRT}(n,r)$ represent the random number of tables seated by $n$ customers in a Chinese restaurant process \citep{PolyaUrn,DP_Mixture_Antoniak, aldous,csp} with concentration parameter~$r$. Its probability mass function (PMF) can be expressed as $$
 P(l\,|\,n,r) = \frac{\Gamma(r)r^l}{\Gamma(n+r)}|s(n,l)|,\notag
$$
where $l\in\mathbb{Z}$, $\mathbb{Z}:=\{0,1,\ldots,n\}$, and $|s(n,l)|$ 
 are unsigned Stirling numbers of the first kind. 
A CRT distributed sample can be
generated by taking the summation of $n$ independent Bernoulli random variables as
$$
l=\sum_{i=1}^{n} b_i,~b_i\sim\mbox{Bernoulli}\left[{r}/{(r+i-1)}\right].\notag
$$
Let $u\sim\mbox{Log}(p)$ denote the logarithmic distribution \citep{Fisher1943,Sampling_NB_1950,johnson1997discrete} with PMF
$$
P(u\,|\,p) = \frac{1}{-\ln(1-p)}\frac{p^u}{u},\notag
$$
where $u\in\{1,2,\ldots\}$, and let $n\sim\mbox{NB}(r,p)$ denote the negative binomial (NB) distribution \citep{Yule,NB_Fitting_53} with PMF
$$
P(n\,|\,r,p)=\frac{\Gamma(n+r)}{n!\Gamma(r)} p^n(1-p)^r,\notag
$$
where $n\in\mathbb{Z}$. The NB distribution $n\sim\mbox{NB}(r,p)$ can be generated as a gamma mixed Poisson distribution as
$$
n\sim\mbox{Pois}(\lambda),~\lambda\sim\mbox{Gam}\left[r,{p}/({1-p})\right],
$$
where $p/(1-p)$ is the gamma scale parameter. 

As shown in \citep{NBP2012},
the joint distribution of $n$ and $l$ given $r$ and $p$ in 
$$
l\sim\mbox{CRT}(n,r),~n\sim\mbox{NB}(r,p),\notag
$$
where $l\in\{0,\ldots,n\}$ and $n\in\mathbb{Z}$, is the same as that in
\beq
n = \textstyle \sum_{t=1}^l u_t,~u_t\sim\mbox{Log}(p), ~l\sim\mbox{Pois}[-r\ln(1-p)], \label{eq:CompoundPo}
\eeq
which is called the Poisson-logarithmic bivariate distribution, with PMF 
$$
P(n,l\,|\,r,p)=\frac{|s(n,l)|r^l}{n!}p^n(1-p)^r\label{eq:Po-log}
.$$ We will exploit these relationships to derive efficient inference for the proposed models. 

\subsubsection{Bernoulli-Poisson Link and Truncated Poisson Distribution}

As in \citet{EPM_AISTATS2015}, the Bernoulli-Poisson (BerPo) link thresholds a random count at one to obtain a binary variable as
\beq
b =\mathbf{1}(m\ge 1),~ m\sim\mbox{Pois}(\lambda),\label{eq:BerPo}
\eeq
where $b=1$ if $m\ge 1$ and $b=0$ if $m=0$.  If $m$ is marginalized out from (\ref{eq:BerPo}), then given $\lambda$, one obtains a Bernoulli random variable as
$
b\sim\mbox{Ber}\big(1-e^{-\lambda}\big).\notag 
$
The conditional posterior of the latent count $m$ can be expressed as
$$(m\,|\,b,\lambda)\sim b\cdotv \mbox{Pois}_{+}(\lambda),$$ where $x\sim\mbox{Pois}_{+}(\lambda)$ follows a truncated Poisson distribution, with 
$P(x=k) =(1-e^{-\lambda})^{-1} {\lambda^k e^{-\lambda}}/{k!}$ for $k\in\{1,2,\ldots\}$.
Thus if $b=0$, then $m= 0$ almost surely (a.s.), and if $b=1$, then 
$m\sim\mbox{Pois}_{+}(\lambda)$, which can be simulated with a rejection sampler that has a minimal acceptance rate of 63.2\% at $\lambda=1$ \citep{EPM_AISTATS2015}. 
Given the latent count $m$ and a gamma prior on $\lambda$, one can then update $\lambda$ using the gamma-Poisson conjugacy. 
The BerPo link shares some similarities with the probit link that thresholds a normal random variable at zero, and the logistic link that lets $b\sim\mbox{Ber}[e^{x}/(1+e^{x})]$. 
We advocate the BerPo link as an alternative to the probit and logistic links since if $b=0$, then $m= 0$ a.s., which could lead to significant computational savings if the binary vectors are sparse. 
In addition, 
 the conjugacy between the gamma and Poisson distributions makes it convenient to construct hierarchical Bayesian models amenable to posterior simulation.

\subsubsection{Poisson Randomized Gamma and Truncated Bessel Distributions}
To model nonnegative data that include both zeros and positive observations,
we introduce the Poisson randomized gamma (PRG) distribution as $$x\sim\mbox{PRG}(\lambda,c),$$ whose distribution has a point mass at $x=0$ and is continuous for $x>0$. The PRG distribution is generated as a Poisson mixed gamma distribution as
\beq
x\sim\mbox{Gam}(n,1/c),~n\sim\mbox{Pois}(\lambda),\notag
\eeq
in which we define $\mbox{Gam}(0,1/c)=0$ a.s. and hence $x=0$ if and only $n=0$.
Thus the PMF of $x\sim\mbox{PRG}(\lambda,c)$ can be expressed as
\begin{align}
f_X(x\,|\,\lambda,c) &= \sum_{n=0}^\infty \mbox{Gam}(x;n,1/c) \mbox{Pois}(n;\lambda)\notag\\
&= \left(e^{-\lambda}\right)^{\mathbf{1}(x=0)} \left[{e^{-\lambda-cx}} \sqrt{\frac{\lambda c}{x}} ~ { I_{-1}\left(2\sqrt{\lambda c x }\right)}\right]^{\mathbf{1}(x>0)}\,,
\end{align}
where 
\beq\notag
I_{-1}(\alpha) = \left(\frac{\alpha}{2}\right)^{-1} \sum_{n=1}^\infty \frac{\left(\frac{\alpha^2}{4}\right)^n}{n!\Gamma(n)},~~\alpha>0
\eeq
is the modified Bessel function of the first kind $I_{\nu}(\alpha)$ with $\nu$ fixed at $-1$.
Using the laws of total expectation and total variance, or using the PMF directly, one may show that 
\beq
\E[x\,|\,\lambda,c] = \lambda/c,~~ \mbox{var}[x\,|\,\lambda,c] = 2\lambda/c^2.\notag
\eeq
Thus the variance-to-mean ratio of the PRG distribution is $2/c$, as controlled by $c$. 

The conditional posterior of $n$ given $x$, $\lambda$, and $c$ can be expressed as
\begin{align}
f_N(n\,|\,x,\lambda,c) &= \frac{\mbox{Gam}(x;n,1/c) \mbox{Pois}(n;\lambda)}{\mbox{PRG}(x;\lambda,c)} \notag\\
&= \mathbf{1}(x=0)\delta_0 +  \mathbf{1}(x>0)\sum_{n=1}^\infty \frac{ 1} { I_{-1}\left(2\sqrt{\lambda c x }\right)}\frac{(\lambda c x)^{n-\frac{1}{2}}}{n!\Gamma(n)} \delta_n\notag\\
& = \mathbf{1}(x=0)\delta_0 +  \mathbf{1}(x>0)\sum_{n=1}^\infty \mbox{Bessel}_{-1}(n;2\sqrt{cx\lambda}) \delta_n\,,\end{align}
where we define $n\sim \mbox{Bessel}_{-1}(\alpha)$ as the truncated Bessel distribution, with PMF
\beq\notag
\mbox{Bessel}_{-1}(n;\alpha)
=\frac{\left(\frac{\alpha}{2}\right)^{2n-1}}{I_{-1} (\alpha) n! \Gamma(n)}, ~~n\in\{1,2,\ldots\}.
\eeq
Thus $n=0$ if and only if $x=0$, and $n$ is a positive integer drawn from a truncated Bessel distribution if $x>0$. In Appendix \ref{app:bessel}, we plot the probability distribution functions of  the proposed PRG and truncated Bessel distributions and show how they differ from the randomized gamma  and Bessel distributions \citep{yuan2000bessel}, respectively.   

\subsection{Link Functions for Three Different Types of Observations}

If the observations are multivariate count vectors $\xv_j^{(1)}\in\mathbb{Z}^{V}$, where $V:=K_0$, then we link the integer-valued visible units to the nonnegative real hidden units at layer one using Poisson factor analysis (PFA) as
\beq\label{eq:PFA}
\xv_j^{(1)} \sim \mbox{Pois}\left(\Phimat^{(1)}\thetav_j^{(1)}\right).
\eeq
Under this construction, the correlations between the $K_0$ rows (features) of $(\xv_1^{(1)},\ldots,\xv_J^{(1)})$ are captured by the columns of $\Phimat^{(1)}$.
Detailed descriptions on how PFA is related to a wide variety of discrete latent variable models, 
including nonnegative matrix factorization \citep{NMF}, latent Dirichlet allocation \citep{LDA}, the gamma-Poisson model \citep{CannyGaP}, discrete Principal component analysis \citep{DCA}, and the focused topic model \citep{FocusTopic}, can be found in \citet{BNBP_PFA_AISTATS2012} and \citet{NBP2012}. 

We call PFA using the GBN in (\ref{eq:PGBN}) as the prior on its factor scores as the Poisson gamma belief network (PGBN), as proposed in \citet{PGBN_NIPS2015}. The PGBN can be naturally applied to factorize the term-document frequency count matrix of a text corpus, not only extracting semantically meaningful topics at multiple layers, but also capturing the relationships between the topics of different layers using the deep network, as discussed below in both Sections \ref{sec:interprete_network_structure} and \ref{sec:examples}.

If the observations are high-dimensional sparse binary vectors $\bv_j^{(1)}\in\{0,1\}^{V}$, then we factorize them 
using Bernoulli-Poisson factor analysis (Ber-PFA) as
\beq
 \bv_j^{(1)}=\mathbf{1}\big( \xv_j^{(1)} \ge 0\big),~ \xv_j^{(1)} \sim \mbox{Pois}\left(\Phimat^{(1)}\thetav_j^{(1)}\right).
\eeq
We call Ber-PFA with the augmentable GBN as the prior on its factor scores $\thetav_j^{(1)}$ as the Bernoulli-Poisson gamma belief network (BerPo-GBN).

If the observations are high-dimensional sparse nonnegative real-valued vectors $\yv_j^{(1)}\in\mathbb{R}_+^{V}$, then we 
factorize them 
using Poisson randomized gamma (PRG) factor analysis as
\beq
 \yv_j^{(1)}\sim\mbox{Gam}( \xv_j^{(1)}, 1/a_j ),~ \xv_j^{(1)} \sim \mbox{Pois}\left(\Phimat^{(1)}\thetav_j^{(1)}\right).
\eeq
We call PRG factor analysis with the augmentable GBN as the prior on its factor scores $\thetav_j^{(1)}$ as the PRG gamma belief network (PRG-GBN).

\begin{figure}[!tb]
\begin{center}
\includegraphics[width=0.45\textwidth]{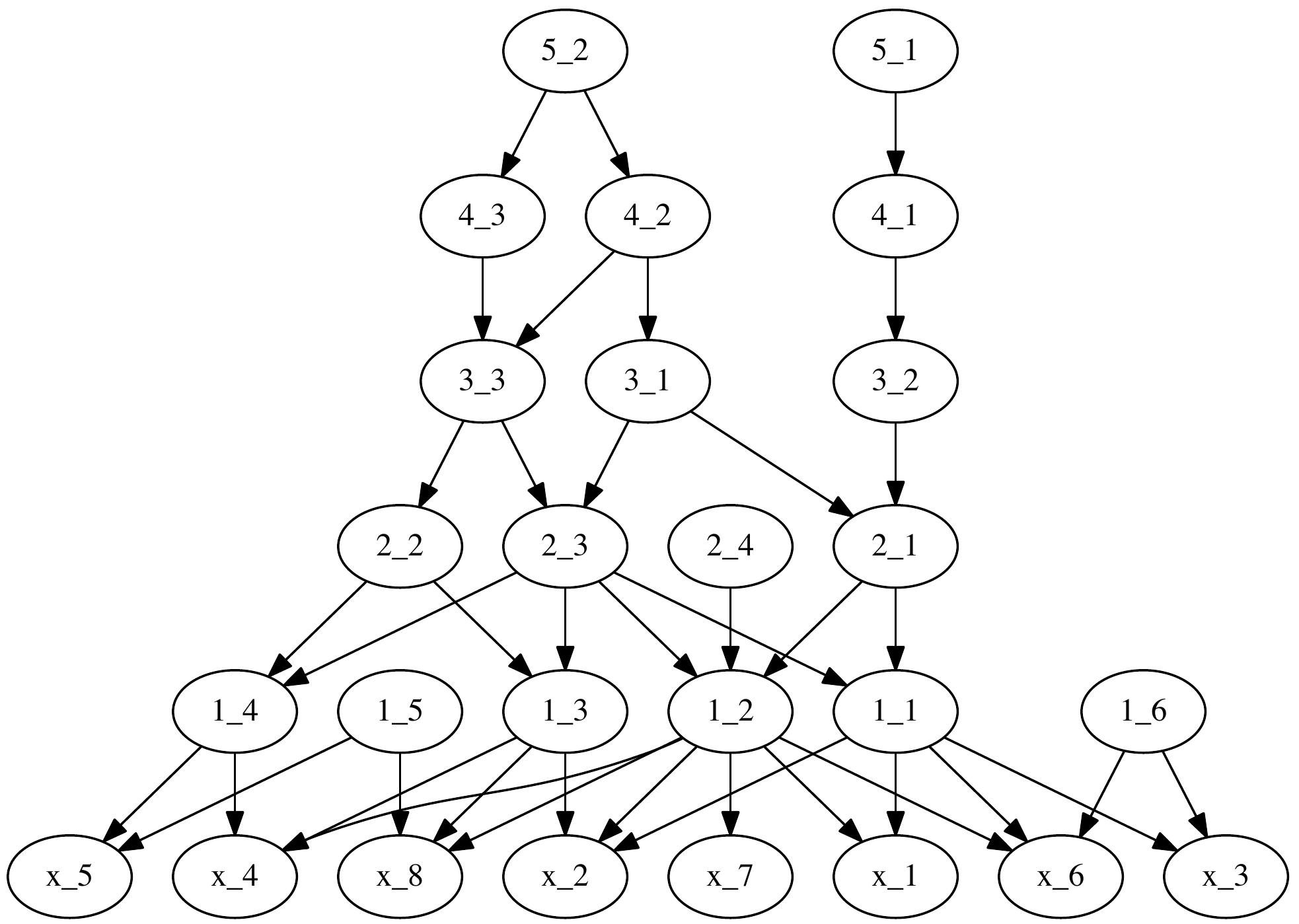}
\end{center}
\vspace{-7.mm}
\caption{\small \label{fig:exampletree} 
An example directed network of five hidden layers, with $K_0=8$ visible units, $[K_1,K_2,K_3,K_4,K_5]=[6,4,3,3,2]$, and sparse connections between the units of adjacent layers.
 }
\end{figure}

We show in Figure \ref{fig:exampletree} an example directed belief network of five hidden layers, with $K_0=8$ visible units, with $6$, $4$, $3$, $3$, and $2$ hidden units for layers one, two, three, four, and five, respectively, and with sparse connections between the units of adjacent layers.

\subsection{Exploratory Data Analysis} 
\label{sec:interprete_network_structure}
To interpret the network structure of the GBN, we notice that
\beq
\E\big[\xv_j^{(1)}\,\big|\,\thetav_j^{(t)}, \{ \Phimat^{(\ell)},c_j^{(\ell)} \}_{1,t} \big] = \left[\prod_{\ell =1}^t \Phimat^{(\ell )} \right] \frac{\thetav_j^{(t)}}{\prod_{\ell =2}^t c_j^{(\ell)}},\label{eq:x_given_theta}
\eeq
\beq
\E\big[\thetav_j^{(t)}\,\big|\, \{ \Phimat^{(\ell)},c_j^{(\ell)} \}_{t+1,T}, \rv \big] = \left[\prod_{\ell ={t+1}}^T \Phimat^{(\ell )} \right] \frac{\rv}{\prod_{\ell ={t+1}}^{T+1} c_j^{(\ell)}}.\label{eq:phi_given_r}
\eeq
Thus for visualization, it is straightforward to project the $K_t$ topics/hidden units/factor loadings/nodes of layer $t\in\{1,\ldots,T\}$ to the bottom data layer 
as the columns of the $V\times K_t$ matrix 
\beq\label{eq:projection}
\prod_{\ell =1}^t \Phimat^{(\ell )},
\eeq 
and rank their popularities 
using the $K_t$ dimensional nonnegative weight vector
\beq\label{eq:weight}
\rv^{(t)}:=
 \left[\prod_{\ell ={t+1}}^T \Phimat^{(\ell )} \right] \rv\,. 
 \eeq
To measure the connection strength between node $k$ of layer $t$ and node $k'$ of layer $t-1$, we use 
the value of $$\Phimat^{(t)}(k',k),$$ which is also expressed as $\phiv_k^{(t)}(k')$ or $\phi_{k'k}^{(t)}$. 

\begin{figure}[!tb]
\begin{center}
\includegraphics[width=0.1\textwidth]{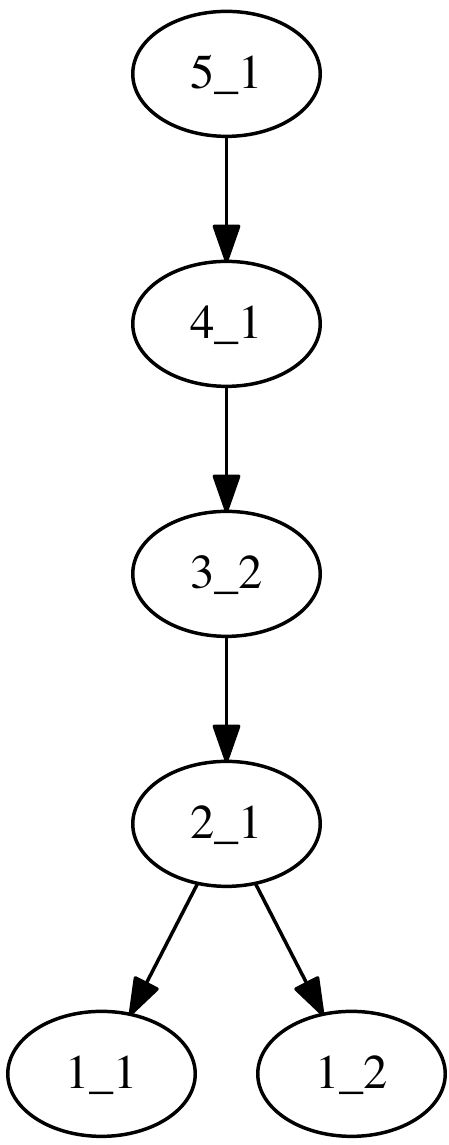}\,\,\,\,\,\,\,\,\,\,\,\,\,\,\,\,
\includegraphics[width=0.215\textwidth]{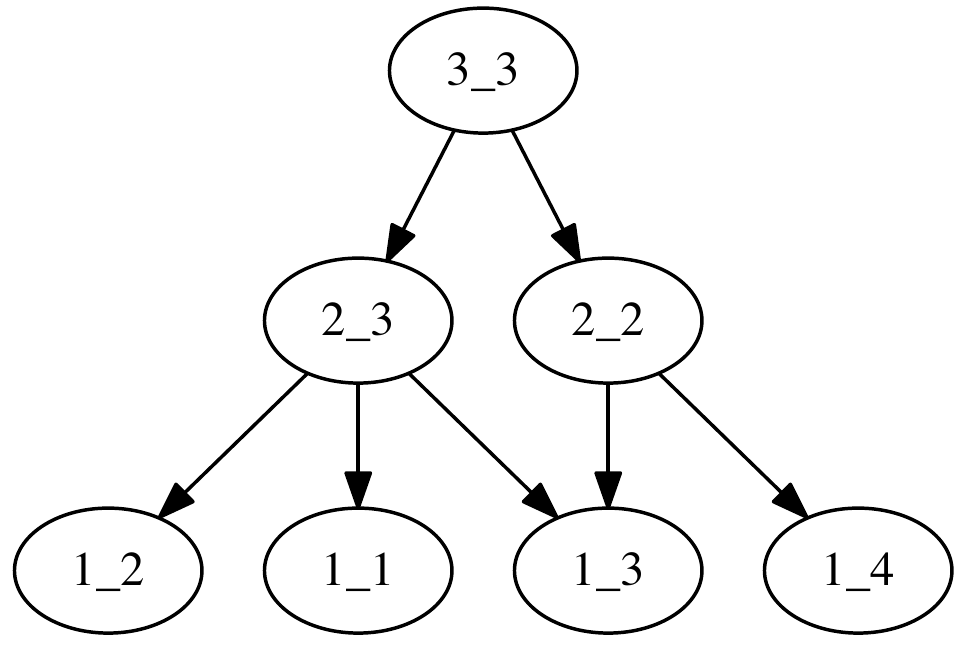}\,\,\,\,\,\,\,\,\,\,\,\,\,\,\,\,
\includegraphics[width=0.21\textwidth]{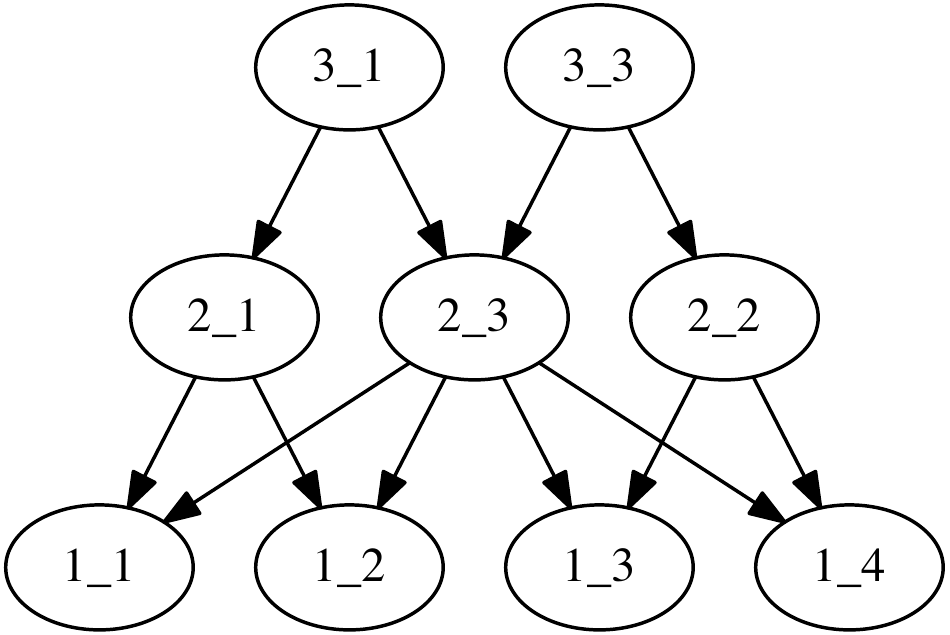}
\end{center}
\vspace{-7.mm}
\caption{\small \label{fig:exampletree1} 
Extracted from the network shown in Figure \ref{fig:exampletree}, the left plot  is a tree rooted at node $5\_1$, the middle plot is a tree rooted at node $3\_3$, and the right plot is a subnetwork consisting of both the tree rooted at node $3\_1$ and the tree rooted at node $3\_3$.
 }
\end{figure}

Our intuition is that examining the nodes of the hidden layers, via their projections to the bottom data layer, from the top to bottom layers will gradually reveal less general and more specific aspects of the data. 
 To verify this intuition and further understand the relationships between the general and specific aspects of the data, we consider extracting a tree for each node of layer $t$, where $t\ge 2$, to help visualize the inferred multilayer deep structure. 
To be more specific, to construct a tree rooted at a node of layer $t$, 
 we grow the tree downward by linking the root node (if at layer $t$) or each leaf node of the tree (if at a layer below layer $t$) to all the nodes at the layer below that are connected to the root/leaf node with non-negligible weights. Note that a tree in our definition permits a node to have more than one parent, which means that different branches of the tree can overlap with each other.  In addition, we also consider extracting subnetworks, each of which consists of multiple related trees from the full deep network.
 For example, shown in the left of Figure \ref{fig:exampletree1} is the tree extracted from the network in Figure \ref{fig:exampletree} using node $5\_1$ as the root, shown in the middle is the tree
 using node $3\_3$ as the root, and shown in the right is a subnetwork consisting of two related trees that are rooted at nodes $3\_1$ and $3\_3$, respectively. 
%

\subsubsection{Visualizing Nodes of Different Layers}
Before presenting the technical details, we first provide some example results obtained with the PGBN on extracting multilayer representations from the 11,269 training documents of the 20newsgroups 
data set ({\href{http://qwone.com/~jason/20Newsgroups/}{http://qwone.com/$\sim$jason/20Newsgroups/}}). Given a fixed budget of $K_{1\max}=800$ on the width of the first layer, with $\eta^{(t)}=0.1$ for all $t$,  a five-layer deep network inferred by the PGBN has a network structure as $[K_1,K_2,K_3,K_4,K_5]=[386 ,  63 , 58,  54 ,  51]$, meaning that there are $386$, $63$, $58$, $54$, and $51$ nodes at layers one to five, respectively.

\begin{figure}[!tb]
\begin{center}
 \includegraphics[width=0.47\textwidth]{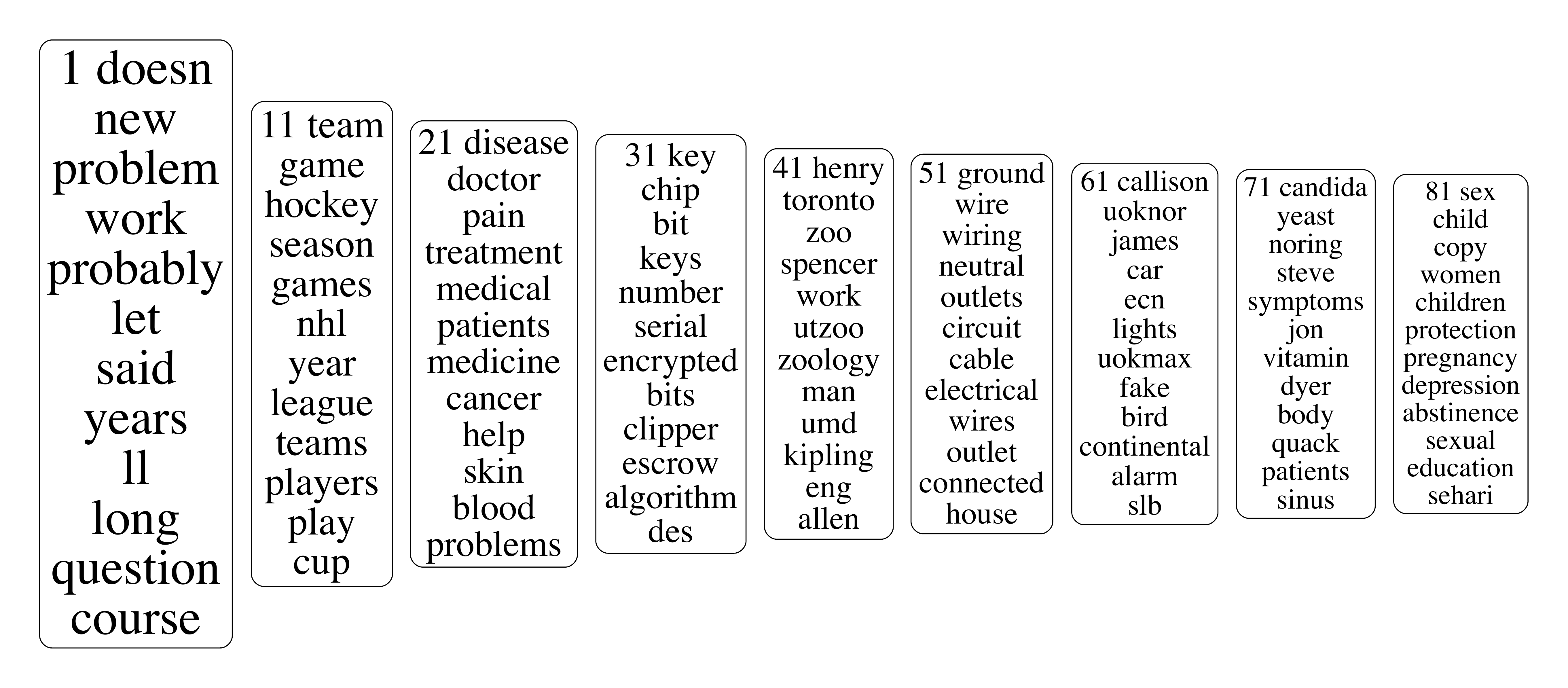}\!\!
 \includegraphics[width=0.48\textwidth]{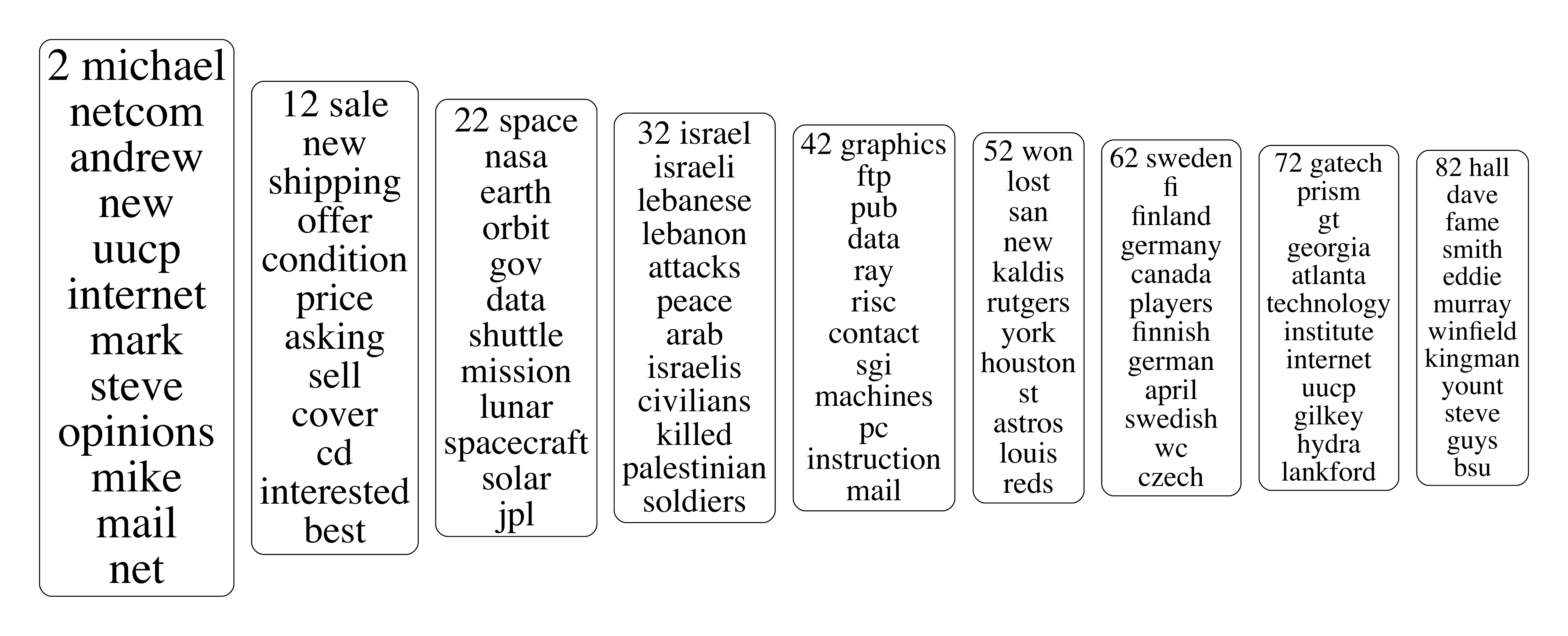}
 \end{center}
 \vspace{-12mm}
 \begin{center}
  \includegraphics[width=0.48\textwidth]{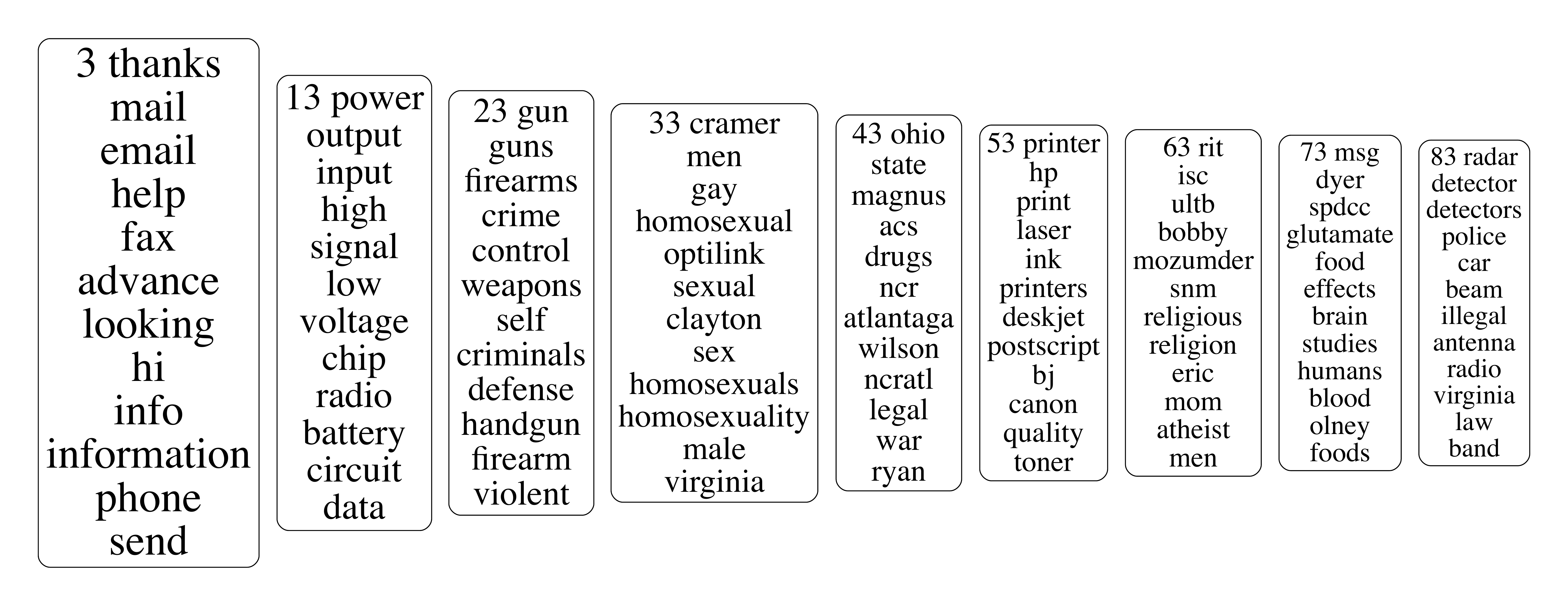}\!\!
  \includegraphics[width=0.50\textwidth]{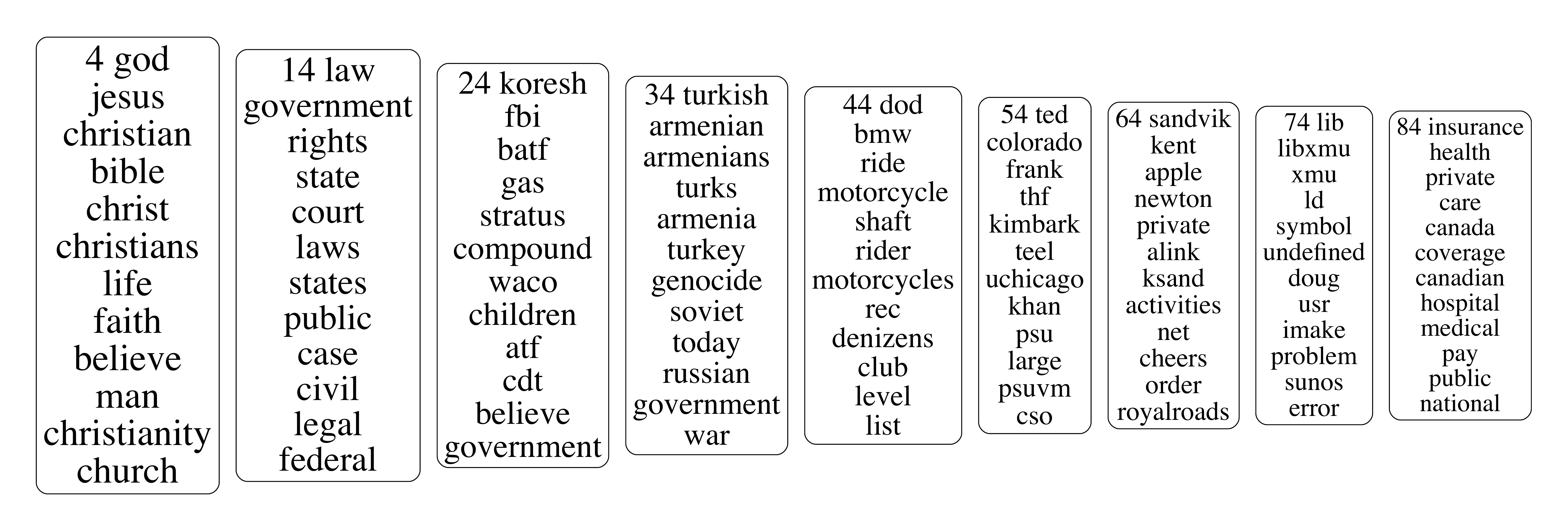}
   \end{center}
 \vspace{-12mm}
 \begin{center}
   \includegraphics[width=0.48\textwidth]{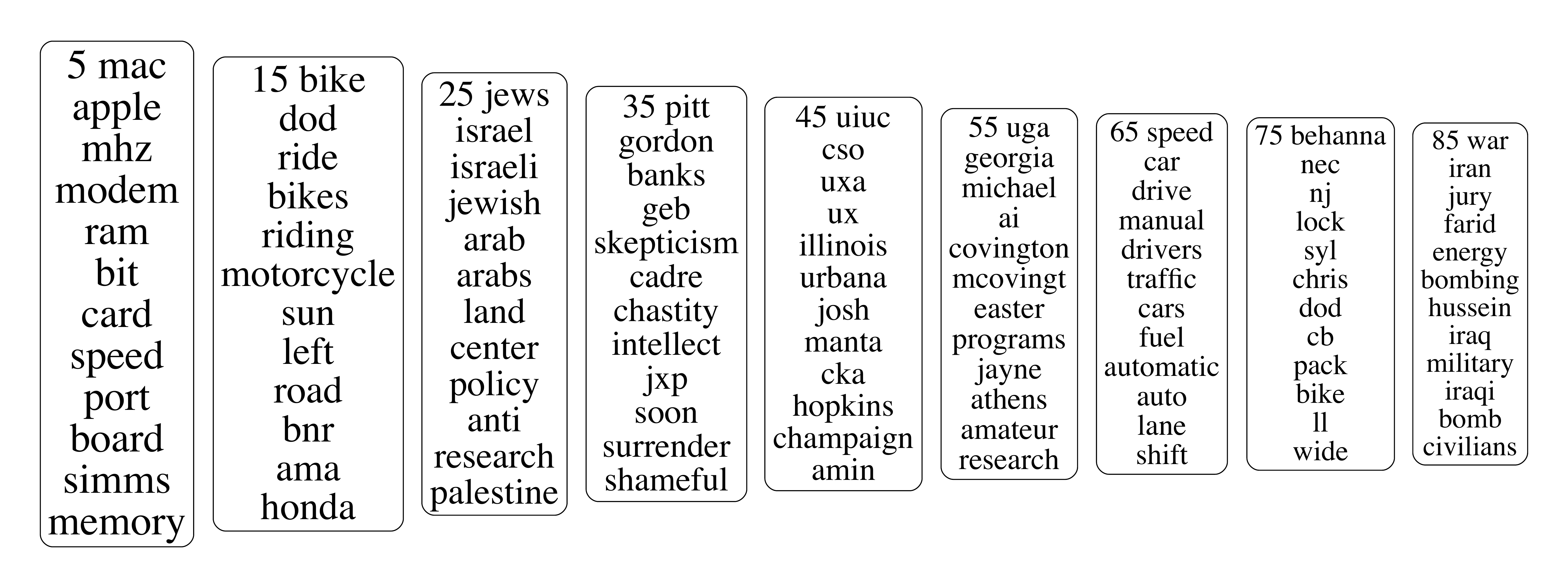}\!\!
   \includegraphics[width=0.48\textwidth]{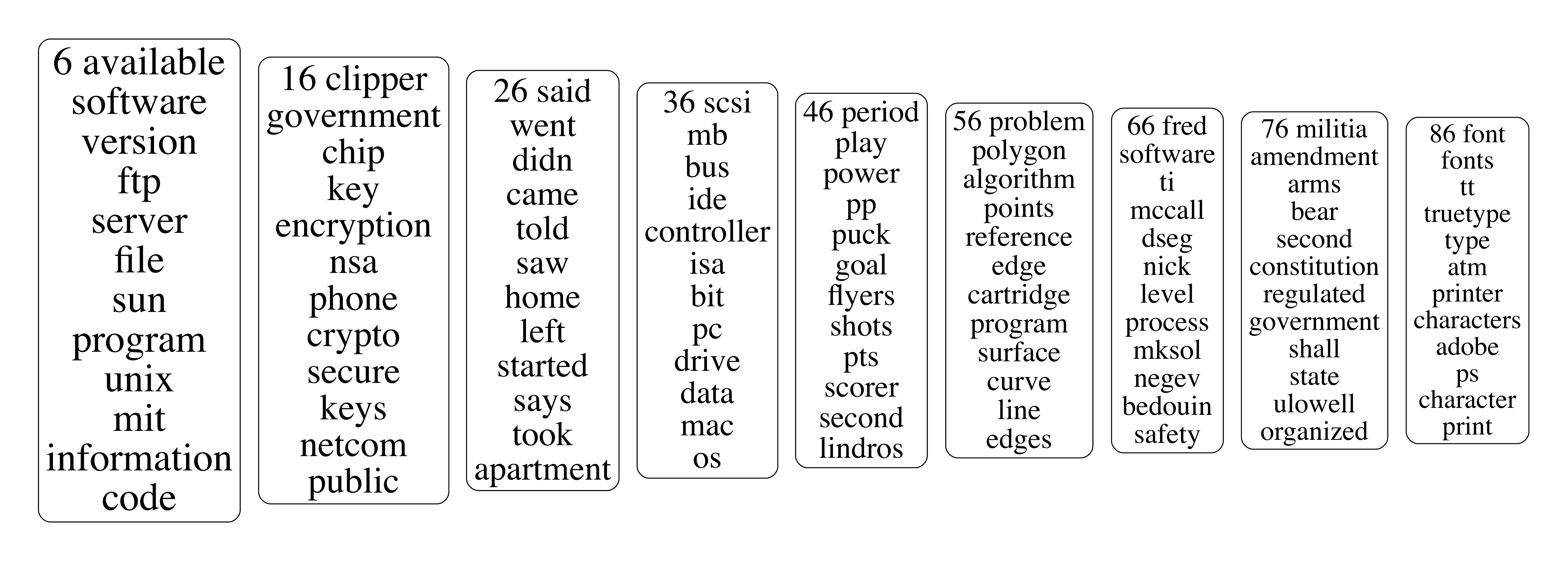}
\end{center}
\vspace{-9mm}
\caption{\small \label{fig:topic_layer1}
Example topics of layer one of the PGBN trained on the 20newsgroups corpus.
}

\begin{center}
  \includegraphics[width=1\textwidth]{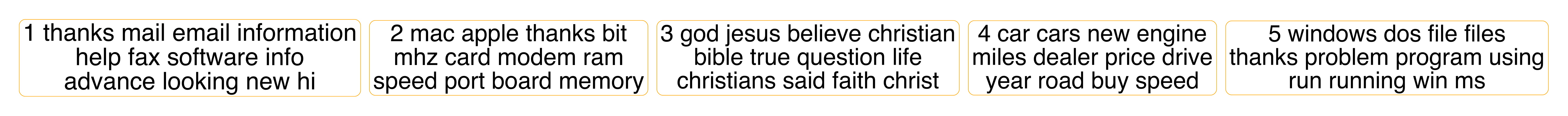}\vspace{-3mm}
 \includegraphics[width=.985\textwidth]{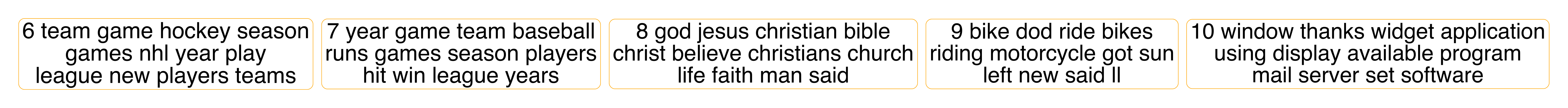}\vspace{-3mm}
  \includegraphics[width=.97\textwidth]{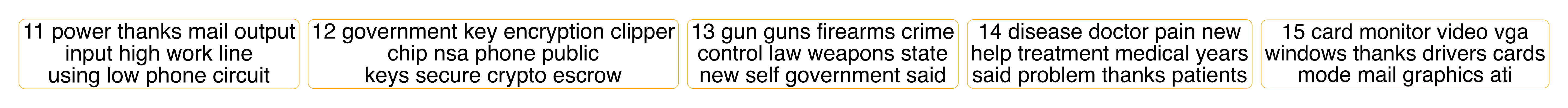}\vspace{-3mm}
  \includegraphics[width=.955\textwidth]{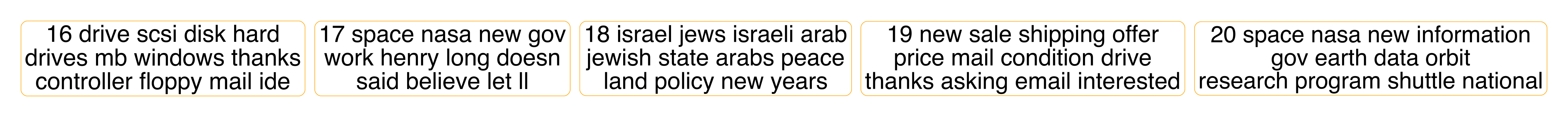}\vspace{-3mm}
   \includegraphics[width=.94\textwidth]{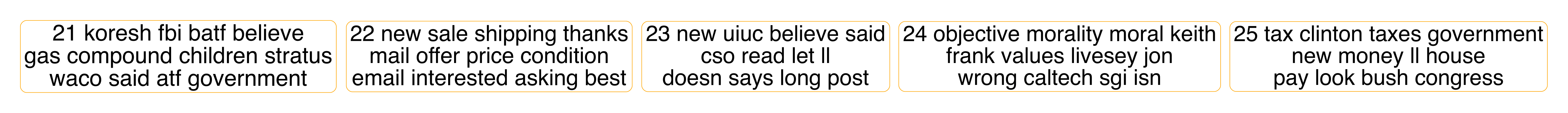}\vspace{-3mm}
   \includegraphics[width=0.925\textwidth]{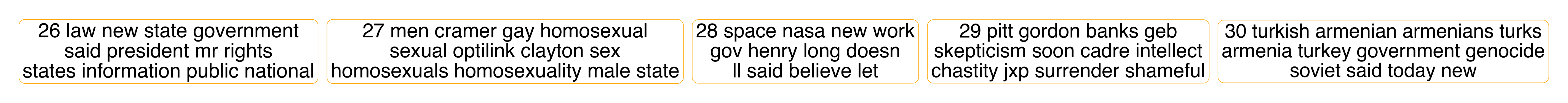}

\end{center}
\vspace{-9mm}
\caption{\small \label{fig:topic_layer3}
The top 30 topics of layer three of the PGBN trained on the 20newsgroups corpus. 
}\vspace{-0mm}

\begin{center}
  \includegraphics[width=1\textwidth]{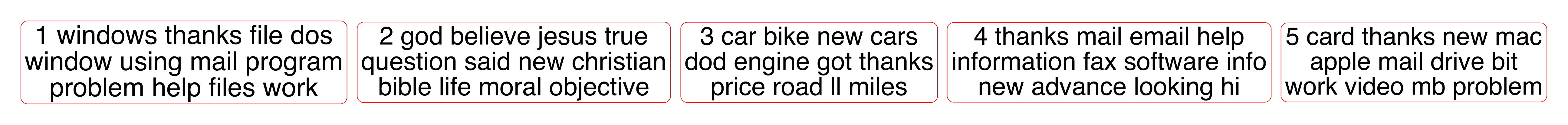}\vspace{-3mm}
 \includegraphics[width=.985\textwidth]{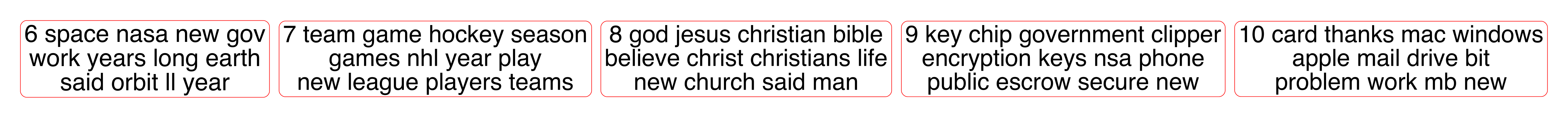}\vspace{-3mm}
  \includegraphics[width=.97\textwidth]{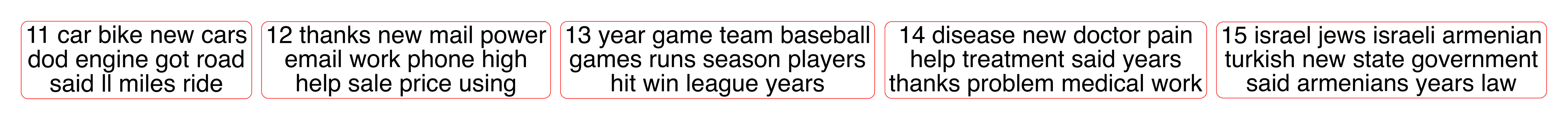}\vspace{-3mm}
  \includegraphics[width=.955\textwidth]{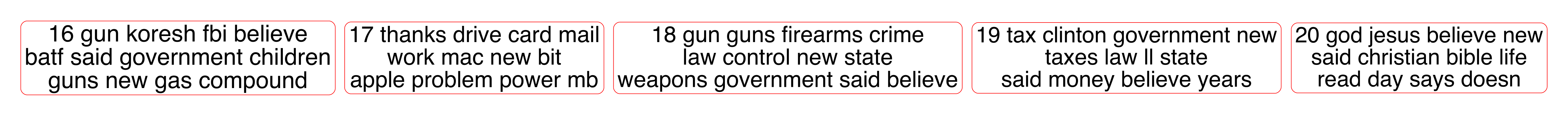}\vspace{-3mm}
   \includegraphics[width=.94\textwidth]{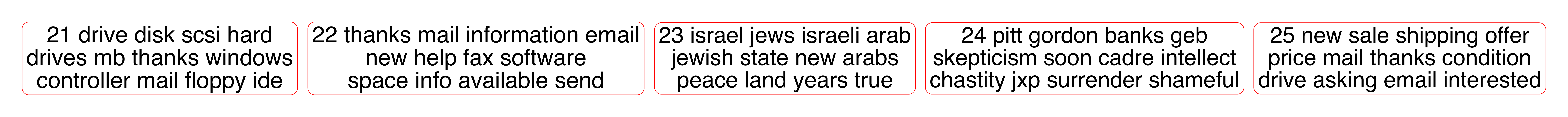}\vspace{-3mm}
   \includegraphics[width=0.925\textwidth]{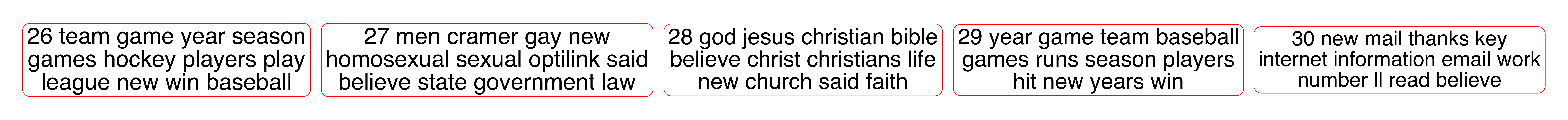}
\end{center}
\vspace{-8mm}
\caption{\small \label{fig:topic_layer5}
The top 30 topics of layer five of the PGBN trained on the 20newsgroups corpus. 
}
\end{figure}

For visualization, we first relabel the nodes at each layer based on their weights $\{r^{(t)}_k\}_{1,K_t}$, calculated as in \eqref{eq:weight}, 
 with a more popular (larger weight) node assigned with a smaller label. We visualize node $k$ of layer $t$ by displaying its top 12 words ranked according to their probabilities in $\big(\prod_{\ell =1}^{t-1} \Phimat^{(\ell )}\big) \phiv_{k}^{(t)}$, the $k$th column of the projected representation calculated as in \eqref{eq:projection}. We set the font size of node $k$ of layer $t$ proportional to 
$\big(r^{(t)}_k/r^{(t)}_1\big)^{\frac 1 {10}}$ in each subplot, and color the outside border of a text box as red, green, orange, blue, or black for a node of layer five, four, three, two, or one, respectively. For better interpretation, we also exclude from the vocabulary the top 30 words of node 1 of layer one: ``don just like people think know time good make way does writes edu ve want say really article use right did things point going better thing need sure used little,'' and the top 20 words of node 2 of layer one: ``edu writes article com apr cs ca just know don like think news cc david university john org wrote world.'' These 50 words are not in the standard list of stopwords but can be considered as stopwords specific to the 20newsgroups corpus discovered by the PGBN. 

For the $[386 ,  63 , 58,  54 ,  51]$ PGBN learned on the 20newsgroups corpus, we plot 54 example topics of layer one in Figure \ref{fig:topic_layer1}, the top 30 topics of layer three in Figure \ref{fig:topic_layer3}, and the top 30 topics of layer five in  Figure \ref{fig:topic_layer5}. 
Figure \ref{fig:topic_layer1} clearly shows that the topics of layer one, except for topics 1-3  that mainly consist of common functional words of the corpus, are all very specific. For example, topics 71 and 81 shown in the first row are about ``candida yeast symptoms'' and ``sex,'' respectively, topics 53, 73, 83, and 84 shown in the second row are about ``printer,'' ``msg,'' ``police radar detector,'' and ``Canadian health care system,'' respectively, and topics 46 and 76 shown in third row are about ``ice hockey'' and ``second amendment,'' respectively. 
By contrast, the topics of layers three and five, shown in Figures~\ref{fig:topic_layer3} and \ref{fig:topic_layer5}, respectively, 
are much less specific and can in general be matched to one or two news groups out of the 20 news groups, including comp.\{graphics, os.ms-windows.misc, sys.ibm.pc.hardware, sys.mac.hardware, windows.x\},
rec.\{autos, motorcycles\}, rec.sport.\{baseball, hockey\},	sci.\{crypt, electronics, med, space\},
misc.forsale, talk. politics.\{misc, guns, mideast\},	and \{talk.religion.misc,
alt.atheism,
soc.religion.christian\}. 

\subsubsection{Visualizing Trees Rooted at The Top-Layer Hidden Units}

\begin{figure}[!tb]
\begin{center}
\!\!\!\!\!\!\!\!\!\!\!\!\! \includegraphics[width=1.0\textwidth]{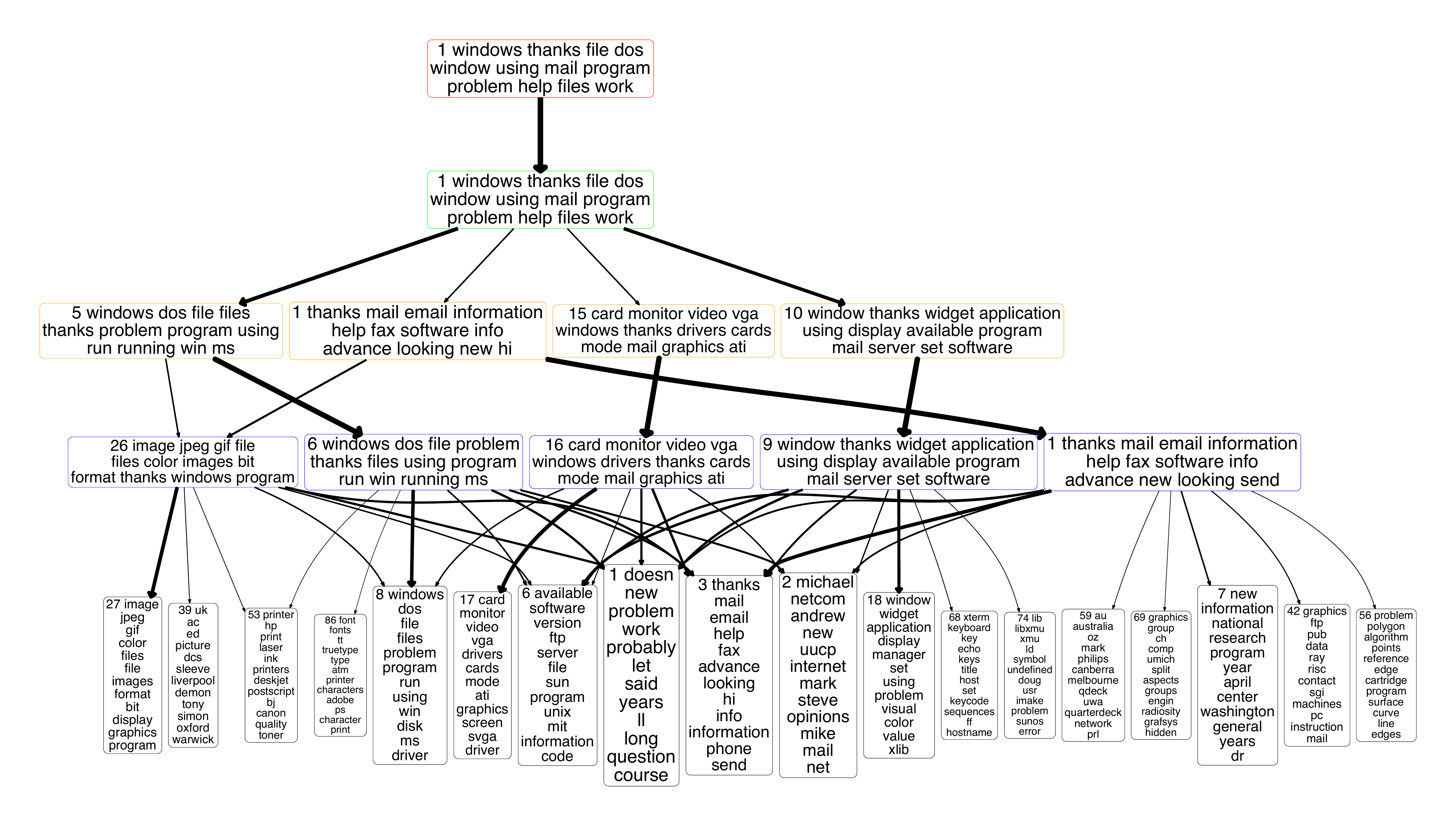}
\end{center}
\vspace{-9mm}
\caption{\small \label{fig:windows}
A $[18,5, 4 ,1 , 1]$ tree that includes all the lower-layer nodes (directly or indirectly) linked with non-negligible weights to the top ranked node of the top layer, 
taken from the full $[386 ,  63 , 58,  54 ,  51]$ network inferred by the PGBN on the 11,269 training documents of the 20newsgroups corpus, with $\eta^{(t)}=0.1$ for all $t$. 
 A line from node $k$ at layer $t$ to node $k'$ at layer $t-1$ indicates that $\Phimat^{(t)}(k',k) > 3/K_{t-1}$, 
with the width of the line 
proportional to 
 $\sqrt{\Phimat^{(t)}(k',k)}$.
For each node, the rank (in terms of popularity) at the corresponding layer and the top 12 words of the corresponding topic are displayed inside the text box, where the text font size monotonically decreases as the popularity of the node decreases, and the outside border of the text box is colored as red, green, orange, blue, or black if the node is at layer five, four, three, two, or one, respectively. 
}
\end{figure}

\begin{figure}[!tb]
\begin{center}
\!\!\!\!\!\!\!\!\!\!\!\!\! \includegraphics[width=1.0\textwidth]{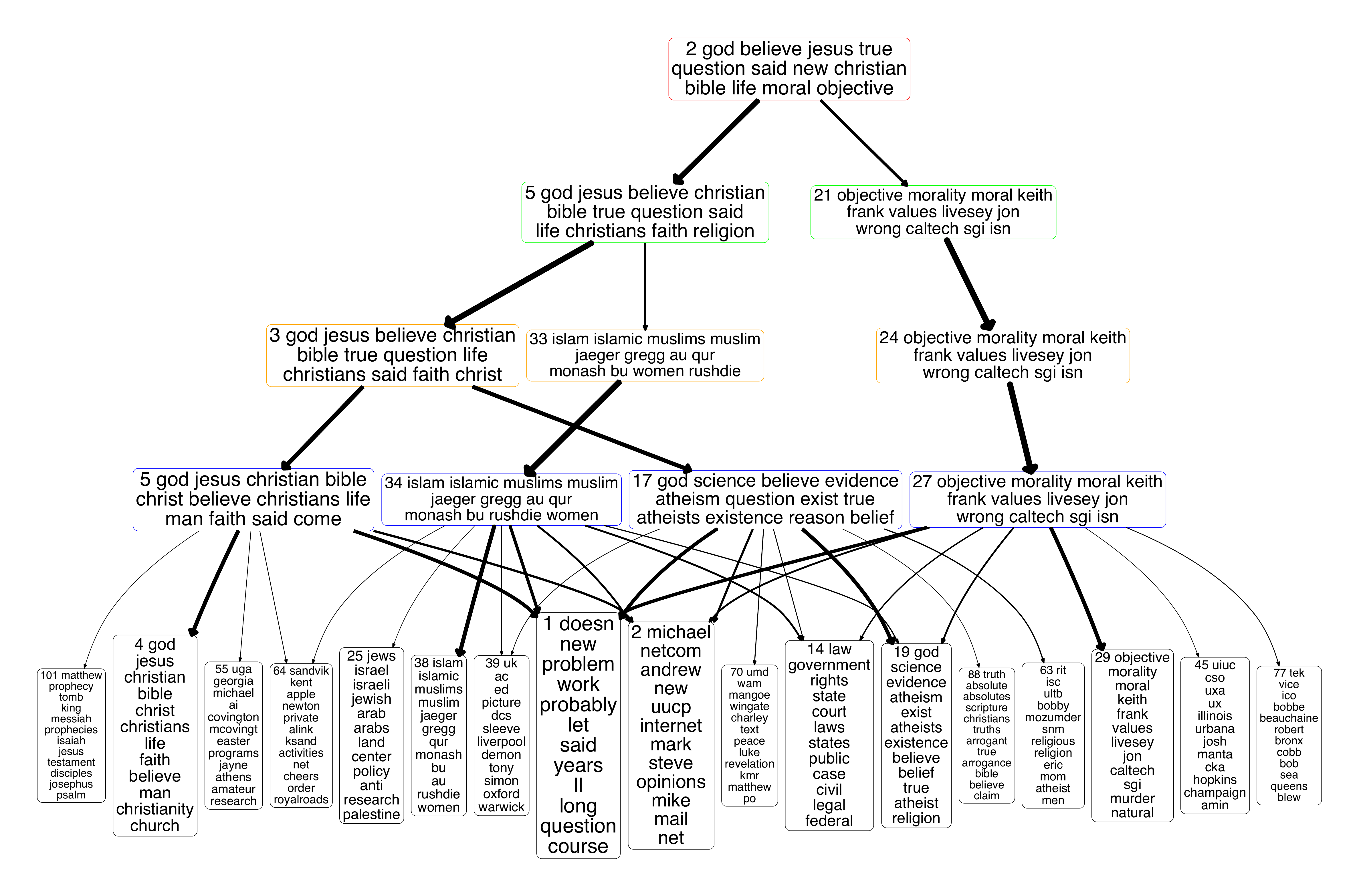}
\end{center}
\vspace{-9mm}
\caption{\small \label{fig:god}
Analogous plot to Figure \ref{fig:windows} for a tree on ``religion,'' rooted at node 2 of the top-layer. 
}
\end{figure}

 While it is interesting to examine the topics of different layers to understand the general and specific aspects of the corpus used to train the PGBN, it would be more informative to further illustrate how the topics of different layers are related to each other. Thus we consider constructing trees 
 to visualize the PGBN. 
We first pick a node as the root of a tree and grow the tree downward by drawing a line from node $k$ at layer $t$, the root or a leaf node of the tree, 
 to node $k'$ at layer $t-1$ for all $k'$ in the set $\{ k': \Phimat^{(t)}(k',k) > \tau_t/K_{t-1}\}$, where we set the width of the line connecting node $k$ of layer $t$ to node $k'$ of layer $t-1$ 
be proportional to $\sqrt{\Phimat^{(t)}(k',k)}$ and use $\tau_t$ to adjust the complexity of a tree. In general, increasing $\tau_t$ would discard more weak connections and hence make the tree simpler and easier to visualize. 

We set $\tau_t=3$ for all $t$ to visualize both a five-layer tree rooted at the top ranked node of the top hidden layer, as shown in Figure \ref{fig:windows}, and a five-layer tree rooted at the second ranked node of the top hidden layer, as shown in Figure \ref{fig:god}.
For the tree in Figure \ref{fig:windows}, while it is somewhat vague to determine the actual meanings of both node 1 of layer five and node 1 of layer four based on their top words, examining the more specific topics of layers three and two within the tree clearly indicate that this tree 
is primarily about ``windows,'' ``window system,'' ``graphics,'' ``information,'' and ``software,'' which are relatively specific concepts that are all closely related to each other. The similarities and differences between the five nodes of layer two can be further understood by examining the nodes of layer one that are connected to them. For example, while nodes 26 and 16 of layer two share their connections to multiple nodes of layer one, node 27 of layer one on ``image'' is strongly connected to node 26 of layer two but not to node 16 of layer two, and node 17 of layer one on ``video'' is strongly connected to node 16 of layer two but not to node 26 of layer two. 

Following the branches of each tree shown in both figures, 
it is clear that 
the topics become more and more specific when moving along the tree from the top to bottom. Taking the tree on ``religion'' shown in Figure \ref{fig:god} for example, the root node splits into two nodes when moving from layers five to four: while the left node is still mainly about ``religion,'' the right node is on ``objective morality.'' When moving from layers four to three, node 5 of layer four splits into a node about ``Christian'' and another node about ``Islamic.'' When moving from layers three to two, node 3 of layer three splits into a node about ``God, Jesus, \& Christian,'' and another node about ``science, atheism, \& question of the existence of God.'' 
When moving from layers two to one, all four nodes of layer two split into multiple topics, and they are all strongly connected to both topics 1 and 2 of layer one, whose top words are those that appear frequently in the 20newsgroups corpus. 

\subsubsection{Visualizing Subnetworks Consisting of Related Trees}

Examining the top-layer topics shown in Figure \ref{fig:topic_layer5}, one may find that some of the nodes seem to be closely related to each other. For example, topics 3 and 11 share eleven words out of the top twelve ones; 
topics 15 and 23 both have ``Israel'' and ``Jews'' as their top two words; topics 16 and 18 are both related to ``gun;'' and topics 7, 13, and 26 all share ``team(s),'' ``game(s),'' ``player(s),'' ``season,'' and ``league.''

To understand the relationships and distinctions between these related nodes, we construct subnetworks that include the trees rooted at them, as shown in Figures \ref{fig:car}-\ref{fig:sports} in Appendix \ref{sec:fig}. It is clear from Figure \ref{fig:car} that the top-layer topic 3 differs from topic 11 in that it is not only strongly connected to topic 2 of layer four on``car \& bike,'' but also has a non-negligible connection to topic 27 of layer four on ``sales.'' It is clear from Figure \ref{fig:mideast} that topic 15 differs from topic 23 in that it is not only about ``Israel \& Arabs,'' but also about ``Israel, Armenia, \& Turkey.'' It is clear from Figure \ref{fig:gun} in that topic 16 differs from topic 18 in that it is mainly about Waco siege happened in 1993 involving David Koresh, the Federal Bureau of Investigation (FBI), and the Bureau of Alcohol, Tobacco, Firearms and Explosives (BATF).
 It is clear from Figure \ref{fig:sports} that topics 7 and 13 are mainly about ``ice hockey'' and ``baseball,'' respectively, and topic 26 is a mixture of both.

\subsubsection{Capturing Correlations Between Nodes}
For the augmentable GBN, as in (\ref{eq:x_given_theta}), given the weight vector $\thetav^{(1)}_{j}$, we have
\beq
\E\big[\xv^{(1)}_{j}\,\big|\,\Phimat^{(1)},\thetav^{(1)}_{j} \big] = \Phimat^{(1)}\thetav^{(1)}_{j}.
\eeq
A distinction between a shallow augmentable  GBN with $T=1$ hidden layer and a deep augmentable  GBN with $T\ge 2$ hidden layers is that the prior for $\thetav^{(1)}_{j}$ changes from 
$\thetav^{(1)}_{j}\sim\mbox{Gam}(\rv,1/c^{(2)}_{j})$ for $T=1$ to $\thetav^{(1)}_{j}\sim\mbox{Gam}(\Phimat^{(2)}\thetav^{(2)}_{j},1/c^{(2)}_{j})$ for $T\ge 2$.
For the GBN with $T=1$, given the shared weight vector $\rv$, we have
\beq
\E\big[\xv_j^{(1)}\,\big|\, \Phimat^{(1)},\rv \big] = \Phimat^{(1 )}\rv/ c_j^{(2)};\label{eq:x_given_theta}
\eeq
for the GBN with $T=2$, given the shared weight vector $\rv$, we have
\beq
\E\big[\xv_j^{(1)}\,\big|\, \Phimat^{(1)},\Phimat^{(2)},\rv \big] = \Phimat^{(1 )}\Phimat^{(2)}\rv\Big / \left(c_j^{(2)}c_j^{(3)}\right);\label{eq:x_given_theta}
\eeq
and for the GBN with $T\ge 2$,  given the weight vector $\thetav^{(2)}_{j}$, we have
\beq
\E\big[\xv_j^{(1)}\,\big|\,\Phimat^{(1)}, \Phimat^{(2)},\thetav_j^{(2)} \big] = \Phimat^{(1 )}\Phimat^{(2)} {\thetav_j^{(2)}}/ c_j^{(2)}.\label{eq:x_given_theta}
\eeq
Thus in the prior, the co-occurrence patterns of the columns of $\Phimat^{(1)}$ are modeled by only a single vector $\rv$ 
when $T=1$, but are captured in the columns of $\Phimat^{(2)}$ when $T\ge 2$. 
Similarly, in the prior, if $T\ge t+1$, the co-occurrence patterns of the $K_t$ columns of the projected topics $\prod_{\ell =1}^t \Phimat^{(\ell )}$ will be captured in the columns of the $ K_t \times K_{t+1}$ matrix $\Phimat^{(t+1)}$.

To be more specific, we show in Figure \ref{fig:turkey} in Appendix \ref{sec:fig} three example trees rooted at three different nodes of layer three, where we lower the threshold to $\tau_t=1$ to reveal more weak links between the nodes of adjacent layers. 
The top subplot reveals that, in addition to strongly co-occurring with the top two topics of layer one,
 topic 21 
 of layer one on ``medicine'' tends to co-occur not only with topics 7, 21, and 26, which are all common topics that frequently appear, but also with some much less common topics that are related to very specific diseases or symptoms, such as topic 67 on ``msg'' and ``Chinese restaurant syndrome,'' topic 73 on ``candida yeast symptoms,'' and topic 180 on ``acidophilous'' and ``astemizole (hismanal).'' 
 
 The middle subplot reveals that topic 31 of layer two on ``encryption \& cryptography'' tends to co-occur with topic 13 of layer two on ``government \& encryption,'' and it also indicates that topic 31 of layer one is more purely about ``encryption'' and more isolated from ``government'' in comparison to the other topics of layer one. 
 
 The bottom subplot reveals that in layer one, topic 14 on ``law \& government,'' topic 32 on ``Israel \& Lebanon,'' topic 34 on ``Turkey, Armenia, Soviet Union, \& Russian,'' topic 132 on ``Greece, Turkey, \& Cyprus,'' topic 98 on ``Bosnia, Serbs, \& Muslims,'' topic 143 on ``Armenia, Azeris, Cyprus, Turkey, \& Karabakh,'' 
 and several other very specific topics related to Turkey and/or Armenia all tend to co-occur with each other. 

We note that capturing the co-occurrence patterns between the topics not only helps exploratory data analysis, but also helps extract better features for classification in an unsupervised manner and improves prediction for held-out data, as will be demonstrated in detail in Section~\ref{sec:examples}.

\subsection{Related Models}
The structure of the augmentable  GBN  resembles the sigmoid belief network and  recently proposed deep exponential family model \citep{ranganath2014deep}. Such kind of gamma distribution based network and its inference procedure were vaguely hinted in Corollary~2 of \citet{NBP2012}, and had been exploited by \citet{GP_DFA_AISTATS2015} to develop a gamma Markov chain to model the temporal evolution of the factor scores of a dynamic count matrix, but have not yet been investigated for extracting multilayer data representations.  The proposed augmentable GBN may also be considered as an exponential family harmonium \citep{welling2004exponential,xing2005mining}.



 \subsubsection{Sigmoid and Deep Belief Networks}
Under the hierarchical  model in (\ref{eq:PGBN}), given the connection weight matrices, 
 the joint distribution of the observed/latent counts and gamma hidden units of the GBN can be expressed, similar to those of the sigmoid and deep belief networks \citep{Bengio-et-al-2015-Book}, as
 \beq
P\left(\xv_j^{(1)},\{\thetav_{j}^{(t)}\}_t \,\Big |\,\{\Phimat^{(t)}\}_t\right) = P\left(\xv_j^{(1)}\,\Big|\,\Phimat^{(1)},\thetav_{j}^{(1)}\right) \left[\prod_{t=1}^{T-1}P\left(\thetav_{j}^{(t)}\,\Big|\,\Phimat^{(t+1)},\thetav_{j}^{(t+1)}\right)\right] P\left(\thetav_{j}^{(T)}\right).\notag
\eeq
With $\phiv_{v:}$ representing the $v$th row $\Phimat$, for the gamma hidden units $\theta_{vj}^{(t)}$ we have 
\beq
P\left(\theta_{vj}^{(t)}\,\Big |\,\phiv_{v:}^{(t+1)},\thetav_j^{(t+1)},c_{j+1}^{(t+1)}\right) =\frac{ \left(c_{j+1}^{(t+1)}\right)^{\phiv_{v:}^{(t+1)}\thetav_j^{(t+1)} }}{\Gamma\left(\phiv_{v:}^{(t+1)}\thetav_j^{(t+1)}\right)} \left(\theta_{vj}^{(t)}\right)^{\phiv_{v:}^{(t+1)}\thetav_j^{(t+1)}-1}e^{-c_{j+1}^{(t+1)} \theta_{vj}^{(t)} }, \label{eq:gamma}
\eeq
which are highly nonlinear functions that are strongly desired in deep learning. 
By contrast, with the sigmoid function $\sigma(x)=1/(1+e^{-x})$ and bias terms $b_{v}^{(t+1)}$, a sigmoid/deep belief network would connect the binary hidden units $\theta_{vj}^{(t)}\in\{0,1\}$ of layer $t$ (for deep belief networks, $t<T-1$ ) to the product of the connection weights and binary hidden units of the next layer with
\beq
P\left(\theta_{vj}^{(t)}=1\,\Big|\,\phiv_{v:}^{(t+1)},\thetav_j^{(t+1)},b_{v}^{(t+1)}\right) =\sigma\left(b_{v}^{(t+1)}+\phiv_{v:}^{(t+1)}\thetav_j^{(t+1)}\right). \label{eq:sigmoid}
\eeq
Comparing (\ref{eq:gamma}) with (\ref{eq:sigmoid}) clearly shows the distinctions between 
 the gamma distributed nonnegative hidden units and the sigmoid link function based binary hidden units. The limitation of binary units in capturing the approximately linear data structure  over small ranges is a key motivation for \citet{frey1999variational} to investigate nonlinear Gaussian belief networks with real-valued  units. 
 As a new alternative to binary units, it would be interesting to further investigate whether the gamma distributed nonnegative real units 
 can in theory carry richer information and model more complex nonlinearities given the same network structure. 
 Note that the rectified linear units have emerged as powerful alternatives of sigmoid units to introduce nonlinearity \citep{nair2010rectified}. It would be interesting to investigate whether the gamma units can be used to 
 introduce nonlinearity into the positive region of the rectified linear units.

\subsubsection{Deep Poisson Factor Analysis} 
With $T=1$, the PGBN  specified by (\ref{eq:PGBN})-(\ref{eq:c_j}) and (\ref{eq:PFA})
reduces to Poisson factor analysis (PFA) using the (truncated) gamma-negative binomial process \citep{NBP2012}, with a truncation level of $K_1$. 
As discussed in \citep{BNBP_PFA_AISTATS2012, NBP2012}, with priors imposed on neither $\phiv_k^{(1)}$ nor $\thetav_j^{(1)}$, PFA is related to nonnegative matrix factorization 
\citep{NMF}, 
 and with the Dirichlet priors imposed on both $\phiv_k^{(1)}$ and $\thetav_j^{(1)}$, PFA is related to latent Dirichlet allocation \citep{LDA}. 
 
 Related to the PGBN and the dynamic model in \citep{GP_DFA_AISTATS2015}, the deep exponential family model of \citet{ranganath2014deep} also considers a gamma chain under Poisson observations, but it is the gamma scale parameters that are chained and factorized, which allows learning the network parameters using black box variational inference \citep{ranganath2014black}. 
In the proposed PGBN, we chain the gamma random variables via the gamma shape parameters. Both strategies worth through investigation. We prefer chaining the shape parameters in this paper, which leads to efficient upward-downward Gibbs sampling via data augmentation and makes it clear how the latent counts are propagated across layers, as discussed in detail in the following sections. The sigmoid belief network has also been recently incorporated into PFA for deep factorization of count data \citep{Gan2015DeepPFA}, however, that deep structure captures only the correlations between binary factor usage patterns but not the full connection weights. 
 In addition, neither \citet{ranganath2014deep} nor \citet{Gan2015DeepPFA} provide a principled way to learn the network structure, whereas the proposed GBN 
uses the gamma-negative binomial process together with a greedy layer-wise training strategy 
to automatically infer the widths of the hidden layers, 
which will be described in Section \ref{sec:greedy}.

\subsubsection{Correlated and Tree-Structured Topic Models}
The PGBN with $T=2$ can also be related to correlated topic models \citep{CTM,DILN_BA,chen2013scalable,CorrRandomMeasures,linderman2015dependent}, which typically use the logistic normal distributions to replace the topic-proportion Dirichlet distributions used in latent Dirichlet allocation \citep{LDA}, capturing the co-occurrence patterns between the topics in the latent Gaussian space using a covariance matrix. By contrast, the PGBN factorizes the topic usage weights (not proportions) under the gamma likelihood, capturing the co-occurrence patterns between the topics of the first layer (i.e., the columns of $\Phimat^{(1)}$) in the columns of $\Phimat^{(2)}$, the latent weight matrix connecting the hidden units of layers two and one. 
For the PGBN, the 
 computation does not involve matrix inversion, which is  often necessary for correlated topic models without specially structured covariance matrices, and scales linearly 
 with  the number of topics, hence it is suitable to be used to capture the correlations between hundreds of or thousands of topics.



 As in Figures \ref{fig:windows}, \ref{fig:god}, and \ref{fig:car}-\ref{fig:turkey}, trees and subnetworks can be extracted from the inferred deep network to visualize the data. Tree-structured topic models have also been proposed before, such as those in \citet{nCRP}, \citet{treeSBP}, and \citet{nHDP}, but they usually artificially impose the tree structures to be learned, whereas the PGBN learns a directed network, from which trees and subnetworks can be extracted for visualization, 
 without the need to specify the number of nodes per layer, restrict the number of branches per node, and forbid a node to have multiple parents. 

\section{Model Properties and Inference}
Inference for the GBN shown in \eqref{eq:PGBN} appears challenging,  because not only  the conjugate prior is unknown for the shape parameter of a gamma distribution, but also the gradients are difficult to evaluate for the parameters of the (log) gamma probability density function, which, as  in  \eqref{eq:gamma}, includes the parameters inside the (log) gamma function. To address these challenges, we consider data augmentation \citep{DataAugment} that introduces auxiliary variables to make it simple to compute the conditional posteriors of  model parameters via the joint distribution of the auxiliary and existing random variables. 
We will first show that 
each gamma hidden unit can be linked to a Poisson distributed latent count variable, leading to a negative binomial  likelihood for the parameters of the gamma hidden unit if it is margined out from the Poisson distribution; we then 
introduce an auxiliary  count variable, which is sampled from the CRT distribution parametrized by the negative binomial latent  count and shape parameter,  to make the joint likelihood of the auxiliary CRT count and latent negative binomial count given the parameters of the gamma hidden unit amenable to posterior simulation. 
More specifically,
under the proposed augmentation scheme, the gamma shape parameters will be linked to auxiliary counts under the Poisson likelihoods, making it straightforward for posterior simulation, as described below in detail. 

\subsection{The Upward Propagation of Latent Counts}
We
 break the inference of the GBN of $T$ hidden layers into $T$ related subproblems, each of which is solved with the same subroutine. Thus for implementation, it is straightforward for the GBN to adjust its depth $T$. Let us denote  $\xv_j^{(t)}\in\mathbb{Z}^{K_{t-1}}$ as the observed or  latent count vector of layer $t\in\{1,\ldots,T\}$, and $x_{vj}^{(t)}$ as its $v$th element, where  $v\in\{1,\ldots,K_{t-1}\}$. 


%
%
\begin{lem} [Augment-and-Conquer The Gamma Belief Network]\label{lem:PGBN}
With $p_j^{(1)}: = 1-e^{-1}$ 
and
\beq
p_{j}^{(t+1)} := {-\ln(1-p_j^{(t)})}\Big/\left[c_j^{(t+1)}-\ln(1-p_j^{(t)})\right] \label{eq:p}
\eeq
for $t=1,\ldots,T$, one may connect the observed or latent counts $\xv_j^{(t)}\in\mathbb{Z}^{K_{t-1}}$ to the product $\Phimat^{(t)}\thetav_j^{(t)}$ at layer $t$ under the Poisson likelihood as 
\beq
\xv_j^{(t)}\sim\emph{\mbox{Pois}}\left[-\Phimat^{(t)}\thetav_j^{(t)}\ln\left(1-p_j^{(t)}\right)\right].\label{eq:deepPFA_aug}
\eeq
\end{lem}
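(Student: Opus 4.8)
The plan is to argue by induction on the layer index $t$, reading the statement as a recipe that simultaneously defines the latent counts $\xv_j^{(t+1)}$ and certifies the Poisson likelihood at each layer. For the base case $t=1$, observe that $p_j^{(1)}=1-e^{-1}$ makes $-\ln(1-p_j^{(1)})=1$, so the claimed likelihood collapses to $\xv_j^{(1)}\sim\mbox{Pois}(\Phimat^{(1)}\thetav_j^{(1)})$, which is exactly the Poisson factor analysis link (\ref{eq:PFA}); thus nothing further is needed at the bottom layer (for binary or nonnegative-real data the same $\xv_j^{(1)}$ arises as a latent count after the BerPo or PRG augmentation, so the induction proceeds identically).

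For the inductive step, suppose $\xv_j^{(t)}\sim\mbox{Pois}[-\Phimat^{(t)}\thetav_j^{(t)}\ln(1-p_j^{(t)})]$ and abbreviate $\lambda_t:=-\ln(1-p_j^{(t)})$. First I would apply Poisson superposition to split each visible count into factor-specific counts $x_{vkj}^{(t)}\sim\mbox{Pois}(\lambda_t\phi_{vk}^{(t)}\theta_{kj}^{(t)})$ and then aggregate over the visible index $v$. Since each column of $\Phimat^{(t)}$ has unit $L_1$ norm, the aggregated counts collapse to $x_{\cdot kj}^{(t)}:=\sum_v x_{vkj}^{(t)}\sim\mbox{Pois}(\lambda_t\theta_{kj}^{(t)})$, which isolates a single gamma hidden unit $\theta_{kj}^{(t)}\sim\mbox{Gam}([\Phimat^{(t+1)}\thetav_j^{(t+1)}]_k,1/c_j^{(t+1)})$. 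Marginalizing $\theta_{kj}^{(t)}$ out through the gamma--Poisson (negative binomial) mixture recalled in the preliminaries, and noting that multiplying the gamma variable by $\lambda_t$ rescales its scale parameter to $\lambda_t/c_j^{(t+1)}$, yields $x_{\cdot kj}^{(t)}\sim\mbox{NB}([\Phimat^{(t+1)}\thetav_j^{(t+1)}]_k,\tilde p)$ with $\tilde p/(1-\tilde p)=\lambda_t/c_j^{(t+1)}$.

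The crux is to identify this negative binomial probability with the recursively defined one. Solving the previous relation gives $\tilde p=\lambda_t/(c_j^{(t+1)}+\lambda_t)=-\ln(1-p_j^{(t)})/[c_j^{(t+1)}-\ln(1-p_j^{(t)})]$, which is exactly $p_j^{(t+1)}$ of (\ref{eq:p}). With this in hand I would introduce the next-layer latent count by drawing $x_{kj}^{(t+1)}\sim\mbox{CRT}(x_{\cdot kj}^{(t)},[\Phimat^{(t+1)}\thetav_j^{(t+1)}]_k)$ and invoke the Poisson--logarithmic bivariate equivalence (\ref{eq:CompoundPo}): because $x_{\cdot kj}^{(t)}$ is $\mbox{NB}(r,p_j^{(t+1)})$ and $x_{kj}^{(t+1)}$ is its CRT augmentation with the same $r=[\Phimat^{(t+1)}\thetav_j^{(t+1)}]_k$, the marginal of the CRT count is $x_{kj}^{(t+1)}\sim\mbox{Pois}[-r\ln(1-p_j^{(t+1)})]$. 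Stacking over $k\in\{1,\dots,K_t\}$ recovers $\xv_j^{(t+1)}\sim\mbox{Pois}[-\Phimat^{(t+1)}\thetav_j^{(t+1)}\ln(1-p_j^{(t+1)})]$, which closes the induction.

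I expect the difficulty to be bookkeeping rather than conceptual. The delicate points are tracking how the Poisson rate is reshaped from the $K_{t-1}$-dimensional visible space of layer $t$ into the $K_t$-dimensional factor space, where the unit $L_1$-norm constraint on the columns of $\Phimat^{(t)}$ is precisely what makes the aggregation clean, and then verifying that the probability emerging from the gamma marginalization matches $p_j^{(t+1)}$ exactly. Once that algebraic identification is secured, the step that lifts the count to the next layer is an immediate application of the already-established CRT, negative binomial, and Poisson--logarithmic relationships.
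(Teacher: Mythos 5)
Your proposal is correct and follows essentially the same route as the paper's proof: the same induction with the base case reduced to the PFA link via $-\ln(1-p_j^{(1)})=1$, the same Poisson thinning and aggregation over $v$ using the unit $L_1$-norm columns of $\Phimat^{(t)}$, and the same gamma marginalization identifying the negative binomial probability with $p_j^{(t+1)}$ in \eqref{eq:p}. The only cosmetic difference is that you close the induction from the CRT side of the Poisson--logarithmic bivariate equivalence, whereas the paper uses the compound-Poisson representation of the negative binomial directly; since the paper itself establishes these as two factorizations of the same joint distribution in \eqref{eq:CompoundPo}, the arguments coincide.
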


\begin{proof}
By definition (\ref{eq:deepPFA_aug}) is true for layer $t=1$.
Suppose that (\ref{eq:deepPFA_aug}) is also true for layer $t>1$,
then we can augment each count $x^{(t)}_{vj}$, where $v\in\{1,\ldots,K_{t-1}\}$, into the summation of $K_{t}$ latent counts, which are smaller than or equal to $x^{(t)}_{vj}$ as
\beq 
x^{(t)}_{vj}=\sum_{k=1}^{K_{t}} x^{(t)}_{vjk},~~x^{(t)}_{vjk}\sim\mbox{Pois}\left[-\phi_{vk}^{(t)}\theta_{kj}^{(t)}\ln\left(1-p_j^{(t)}\right)\right].\label{eq:PoAug}
\eeq
Let the $\cdotv$ symbol represent summing over the corresponding index and let 
$$m^{(t)(t+1)}_{kj}:=x^{(t)}_{\cdotv jk} := \sum_{v=1}^{K_{t-1}}x^{(t)}_{vjk}$$ represent the number of times that factor $k\in\{1,\ldots,K_t\}$ of layer $t$ appears in observation~$j$ and $\mv^{(t)(t+1)}_{j}: = \left(x^{(t)}_{\cdotv j1},\ldots,x^{(t)}_{\cdotv jK_{t}}\right)'$. Since $\sum_{v=1}^{K_{t-1}}\phi_{vk}^{(t)} = 1$, we can marginalize out $\Phimat^{(t)}$ as in \citep{BNBP_PFA_AISTATS2012}, leading to 
\beq
\mv^{(t)(t+1)}_{j}\sim\mbox{Pois}\left[-\thetav_j^{(t)}\ln\left(1-p_j^{(t)}\right)\right]. \notag
\eeq
Further marginalizing out the gamma distributed $\thetav_j^{(t)}$ from the  Poisson likelihood leads~to
\beq
\mv^{(t)(t+1)}_{j} \sim\mbox{NB}\left(\Phimat^{(t+1)}\thetav_j^{(t+1)}, p_j^{(t+1)}\right). \label{eq:NBAug}
\eeq
Element $k$ of $\mv^{(t)(t+1)}_{j}$ can be augmented under its compound Poisson representation as 
\beq
m^{(t)(t+1)}_{ kj} = \sum_{\ell=1}^{x^{(t+1)}_{kj}} u_{\ell},~~u_{\ell}\sim\mbox{Log}(p_j^{(t+1)}), ~~
x_{kj}^{(t+1)}\sim\mbox{Pois}\left[-\phiv_{k:}^{(t+1)}\thetav_j^{(t+1)}\ln\left(1-p_j^{(t+1)}\right)\right]. \notag
\eeq
Thus if (\ref{eq:deepPFA_aug}) is true for layer $t$, then 
it is also true for layer $t+1$. 
\end{proof}

\begin{cor} [Propagate the latent counts upward] \label{cor:PGBN} Using Lemma 4.1 of \citep{BNBP_PFA_AISTATS2012} on (\ref{eq:PoAug}) and Theorem 1 of \citep{NBP2012} on (\ref{eq:NBAug}), we can propagate the latent counts $x^{(t)}_{vj} $ of layer \emph{$t$} upward to layer \emph{$t+1$} as
\begin{align}
&
\left\{\left(x^{(t)}_{vj1},\ldots,x^{(t)}_{vjK_{t}}\right)\,\Big| \, x^{(t)}_{vj}, \phiv_{v:}^{(t)}, \thetav_j^{(t)}\right\}\sim\emph{\mbox{Mult}}\left(x^{(t)}_{vj}, \frac{\phi^{(t)}_{v1}\theta^{(t)}_{1j}}{\sum_{k=1}^{K_{t}}\phi^{(t)}_{vk}\theta^{(t)}_{kj}},\ldots,\frac{\phi^{(t)}_{vK_{t}}\theta^{(t)}_{K_{t}j}}{\sum_{k=1}^{K_{t}}\phi^{(t)}_{vk}\theta^{(t)}_{kj}}\right)\!,\! \label{eq:step1}\\
&~~~\left( \left.x^{(t+1)}_{kj} \, \right | \,m^{(t)(t+1)}_{kj}, \phiv_{k:}^{(t+1)}, \thetav_j^{(t+1)}\right) \sim \emph{\mbox{CRT}}\left(m^{(t)(t+1)}_{kj}, \phiv_{k:}^{(t+1)}\thetav_{j}^{(t+1)}\right).\label{eq:CRT}
\end{align}
\end{cor}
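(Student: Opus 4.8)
The plan is to obtain both \eqref{eq:step1} and \eqref{eq:CRT} as immediate applications of two augmentation identities already recalled in the excerpt, invoked on the two intermediate representations established \emph{inside} the proof of Lemma~\ref{lem:PGBN}, namely the Poisson superposition \eqref{eq:PoAug} and the negative binomial representation \eqref{eq:NBAug}. Since the genuinely probabilistic work (the one-layer-up marginalization of $\Phimat^{(t)}$ and $\thetav_j^{(t)}$) is already carried out by the lemma, the corollary only needs to re-read those two representations as conditional laws, in the reverse (upward) direction.

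For \eqref{eq:step1}, I would start from \eqref{eq:PoAug}, which writes $x^{(t)}_{vj}=\sum_{k=1}^{K_t} x^{(t)}_{vjk}$ as a sum of independent Poisson counts with rates $-\phi^{(t)}_{vk}\theta^{(t)}_{kj}\ln(1-p_j^{(t)})$. The factor $-\ln(1-p_j^{(t)})$ is common to all $K_t$ rates, so it cancels under normalization. Applying the Poisson--multinomial relationship (Lemma~4.1 of \citep{BNBP_PFA_AISTATS2012}), which states that independent Poisson variables conditioned on their total are multinomial with category probabilities equal to the normalized rates, yields the multinomial in \eqref{eq:step1} with probabilities $\phi^{(t)}_{vk}\theta^{(t)}_{kj}\big/\sum_{k'}\phi^{(t)}_{vk'}\theta^{(t)}_{k'j}$.

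For \eqref{eq:CRT}, I would start from \eqref{eq:NBAug}, where each element $m^{(t)(t+1)}_{kj}$ is marginally $\mbox{NB}(\phiv^{(t+1)}_{k:}\thetav^{(t+1)}_j,\,p^{(t+1)}_j)$, and recover the latent Poisson count $x^{(t+1)}_{kj}$ that appears in the compound-Poisson representation at the end of the lemma's proof. The key is the Poisson--logarithmic / NB--CRT bivariate equivalence (Theorem~1 of \citep{NBP2012}, restated around \eqref{eq:CompoundPo}): the joint law of $(n,l)$ under $n\sim\mbox{NB}(r,p),\,l\sim\mbox{CRT}(n,r)$ coincides with the joint law under $l\sim\mbox{Pois}[-r\ln(1-p)]$ and $n=\sum_{i=1}^{l}u_i,\,u_i\sim\mbox{Log}(p)$. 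Reading this identity as a conditional, the law of the Poisson count $x^{(t+1)}_{kj}$ given the marginalized NB count $m^{(t)(t+1)}_{kj}$ is exactly $\mbox{CRT}(m^{(t)(t+1)}_{kj},\,\phiv^{(t+1)}_{k:}\thetav^{(t+1)}_j)$, which is \eqref{eq:CRT}.

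The main obstacle, I expect, is keeping the direction of conditioning straight rather than any computation. The lemma builds the counts \emph{top-down} in the generative direction, whereas inference must propagate them \emph{bottom-up}; the substance of the second step is precisely this reversal, recognizing that the very bivariate identity used forward inside the lemma, when read as the conditional law of the latent Poisson count given the marginalized NB count, licenses the upward CRT draw. The multinomial split is routine superposition bookkeeping; the real content is identifying $x^{(t+1)}_{kj}$ as the Poisson ``table count'' whose conditional posterior is CRT, so that the latent counts of layer $t$ become valid latent counts feeding layer $t+1$ and the same subroutine can be reused one layer higher.
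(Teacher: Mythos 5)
Your proposal is correct and matches the paper's own (implicit) argument exactly: the paper proves the corollary precisely by applying Lemma~4.1 of \citet{BNBP_PFA_AISTATS2012} to the Poisson superposition in \eqref{eq:PoAug} to obtain the multinomial split \eqref{eq:step1}, and by reading Theorem~1 of \citet{NBP2012} (the Poisson-logarithmic/CRT--NB bivariate equivalence recalled at \eqref{eq:CompoundPo}) as the conditional law of the latent Poisson count given the negative binomial count in \eqref{eq:NBAug} to obtain the CRT draw \eqref{eq:CRT}. Your additional remarks --- the cancellation of the common factor $-\ln\big(1-p_j^{(t)}\big)$ under normalization and the reversal of the generative (top-down) identities into upward conditionals --- are exactly the right way to understand why the same subroutine applies one layer higher.
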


We provide a set of graphical representations in Figure \ref{fig:graphical_model} to describe the GBN model and illustrate the augment-and-conquer inference scheme. We provide the upward-downward Gibbs sampler in Appendix \ref{sec:sampling}.

\begin{figure}[t]
\centering

\subfigure[]{\includegraphics[width=0.20\textwidth]{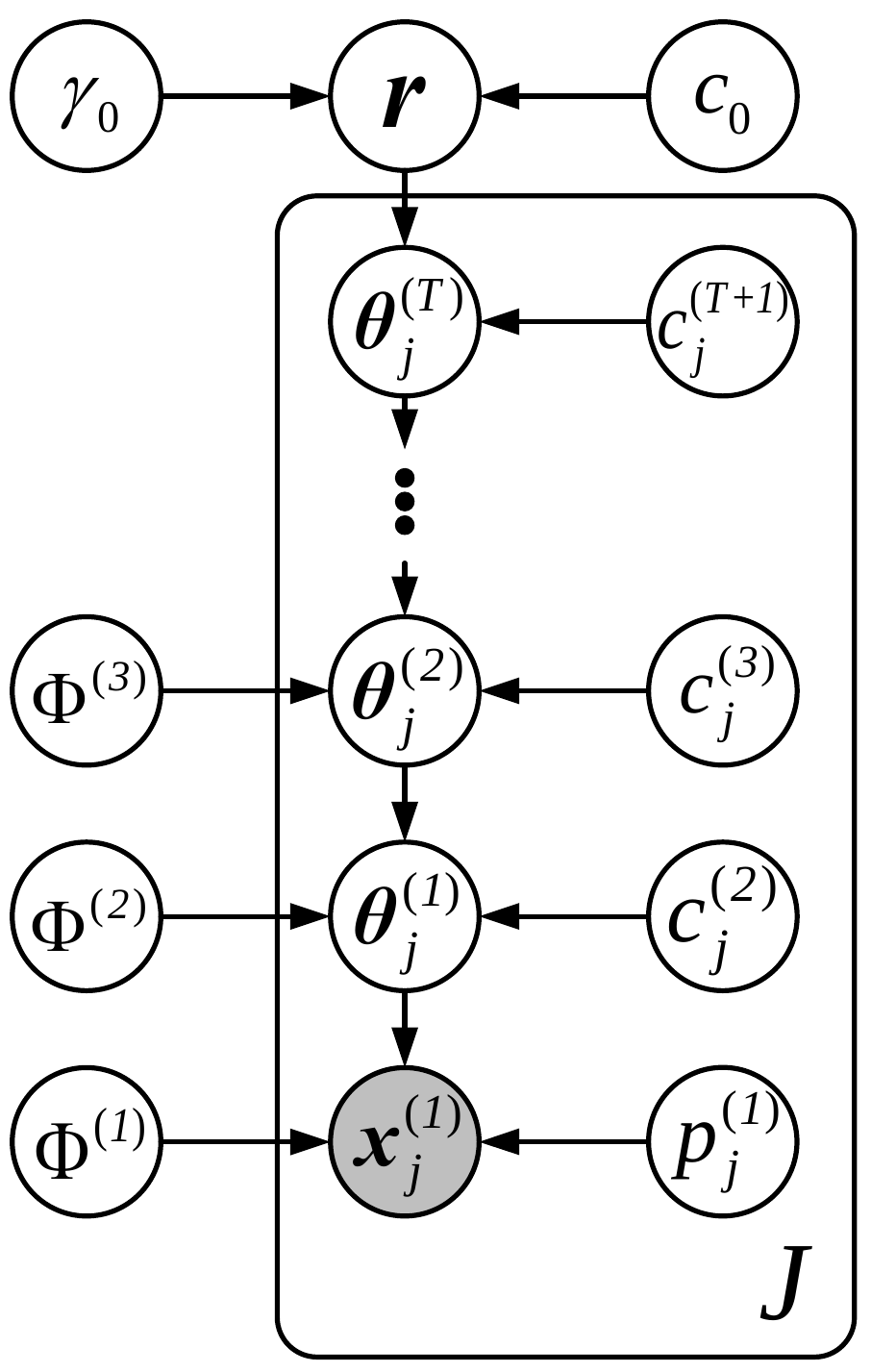}}
\subfigure[]{\includegraphics[width=0.20\textwidth]{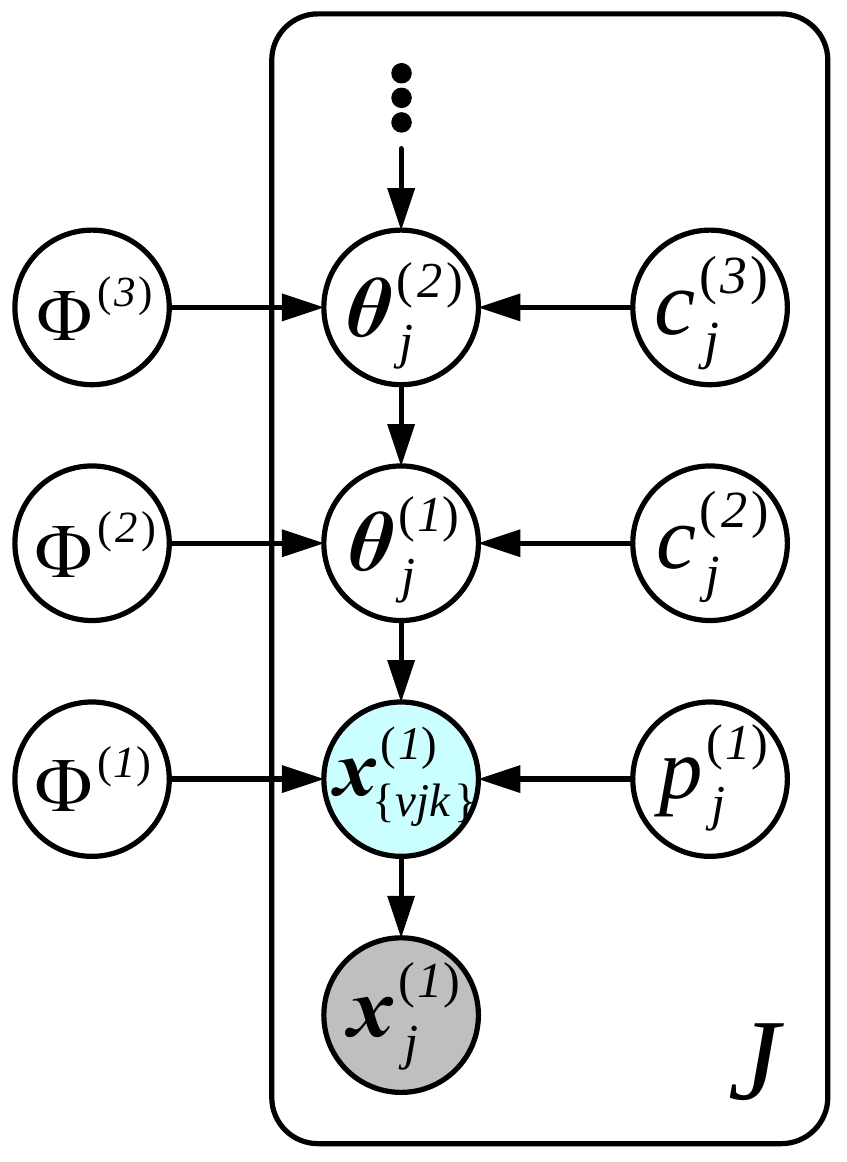}}
\subfigure[]{\includegraphics[width=0.20\textwidth]{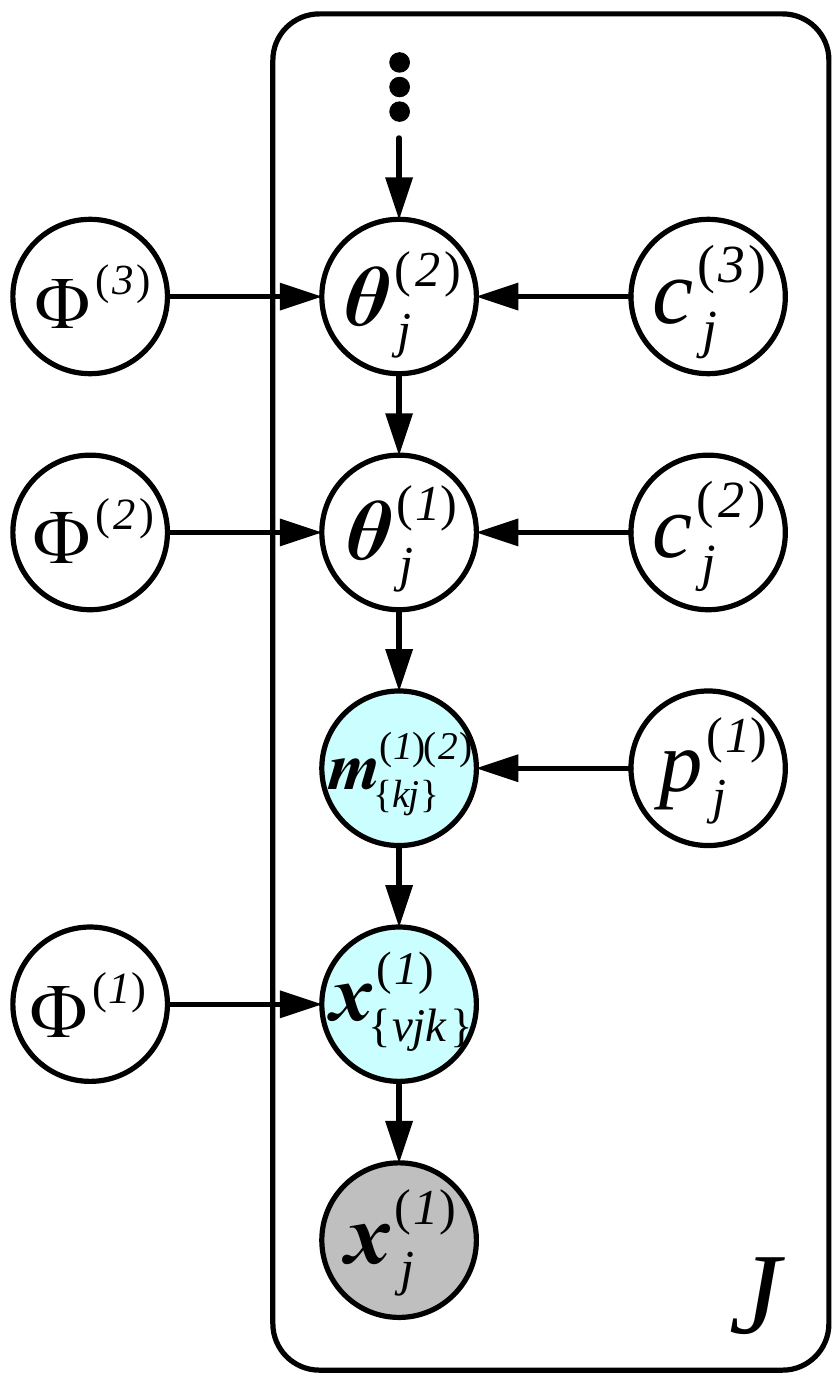}}
\subfigure[]{\includegraphics[width=0.20\textwidth]{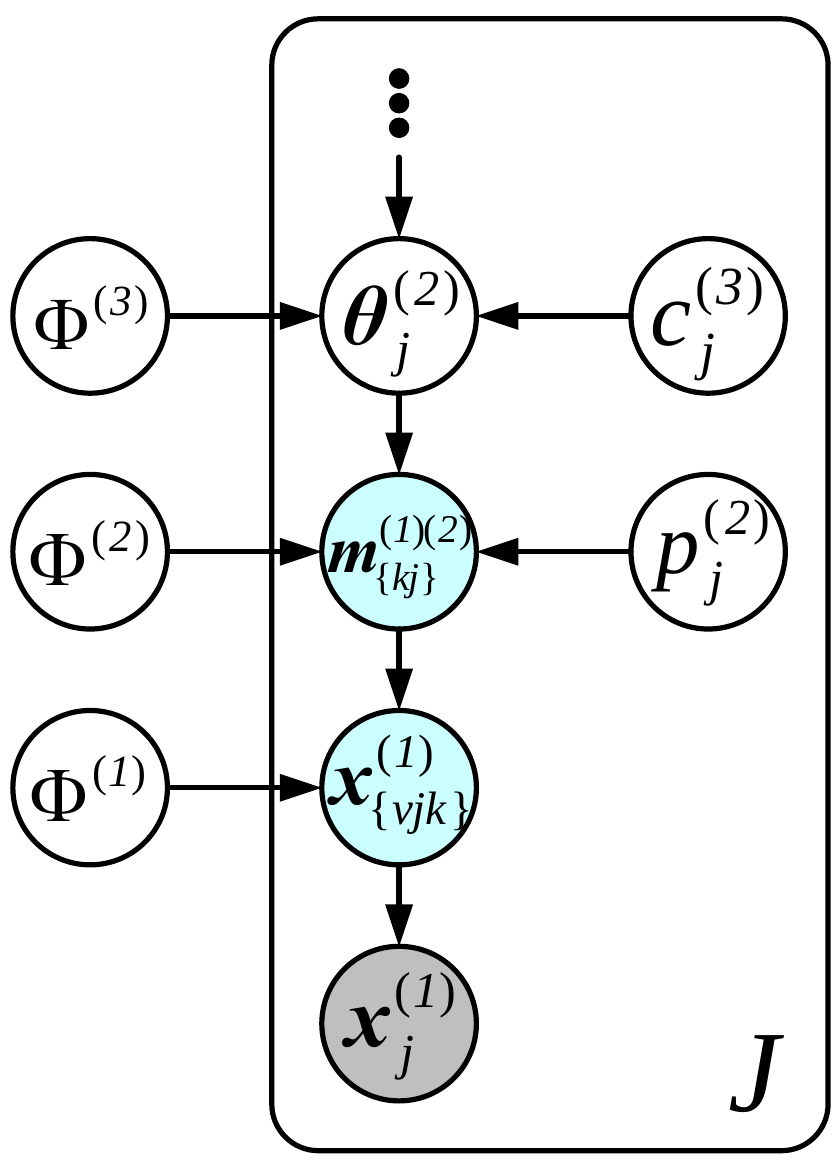}}\\
\subfigure[]{\includegraphics[width=0.20\textwidth]{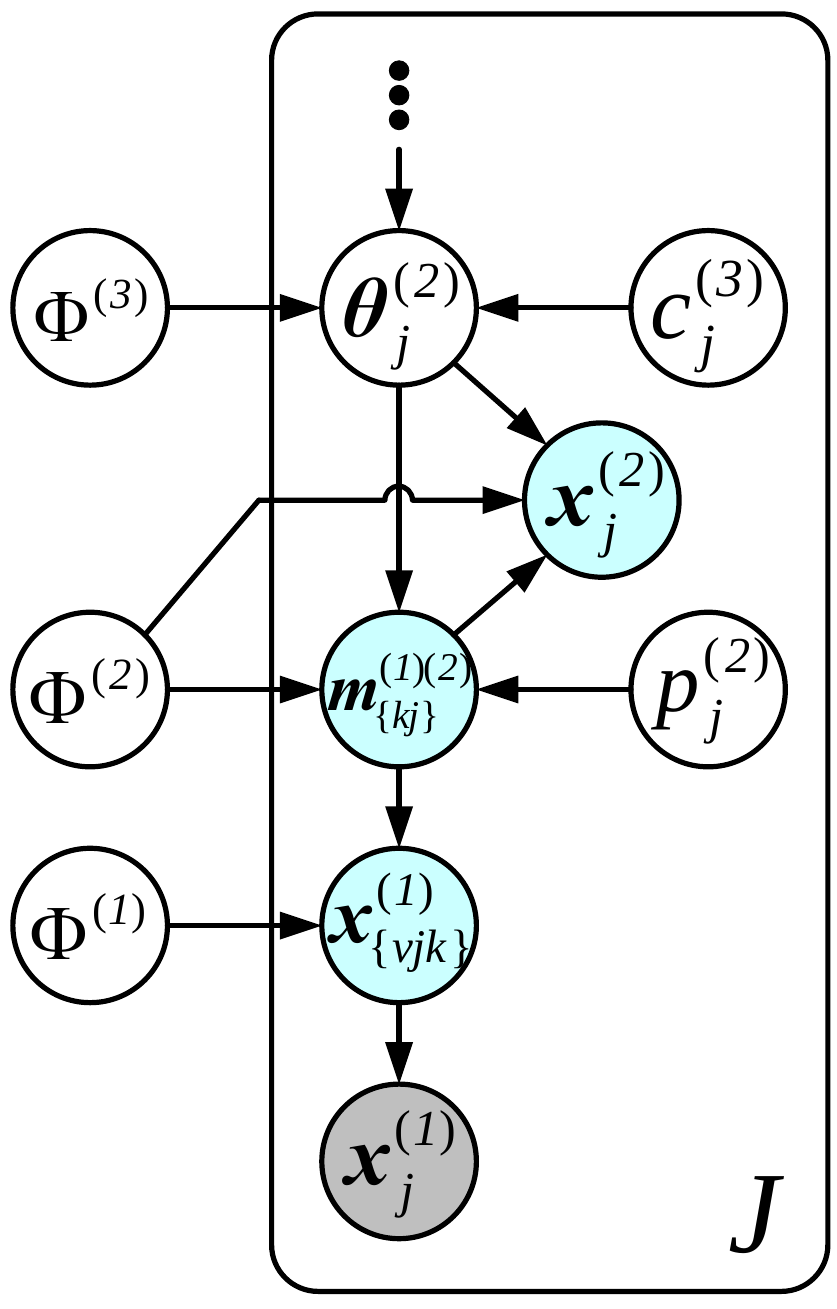}}
\subfigure[]{\includegraphics[width=0.20\textwidth]{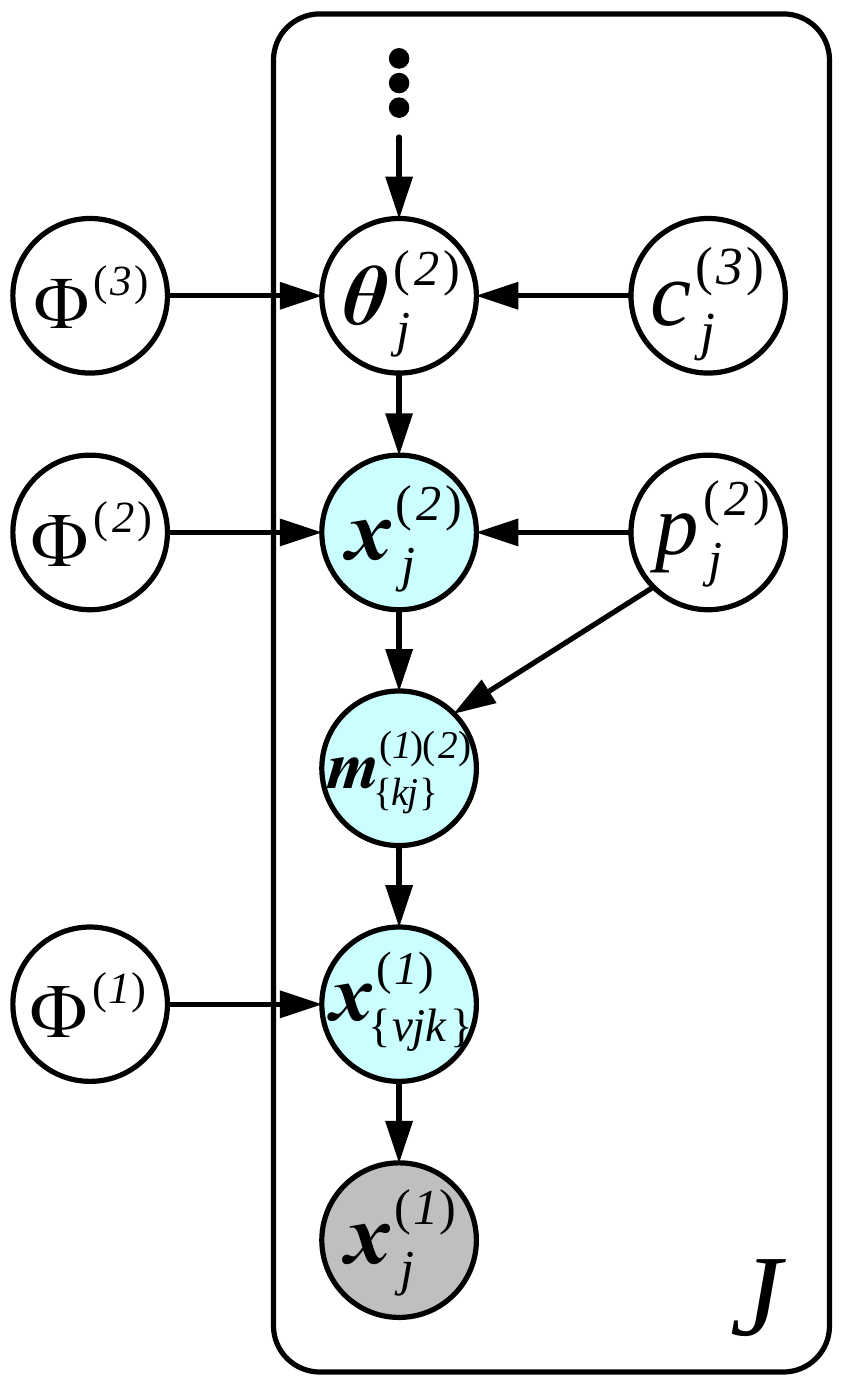}}
\subfigure[]{\includegraphics[width=0.20\textwidth]{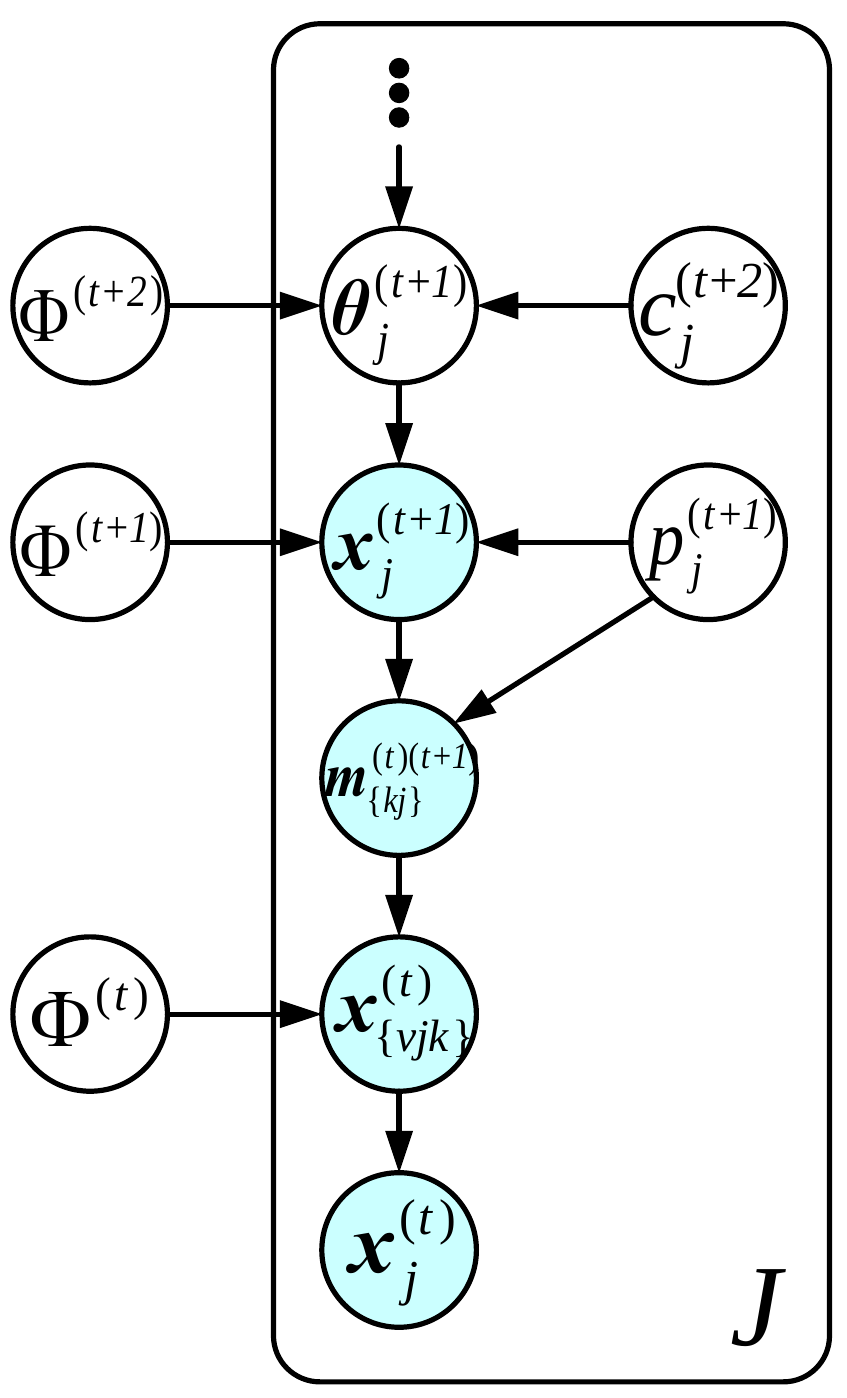}}
\subfigure[]{\includegraphics[width=0.20\textwidth]{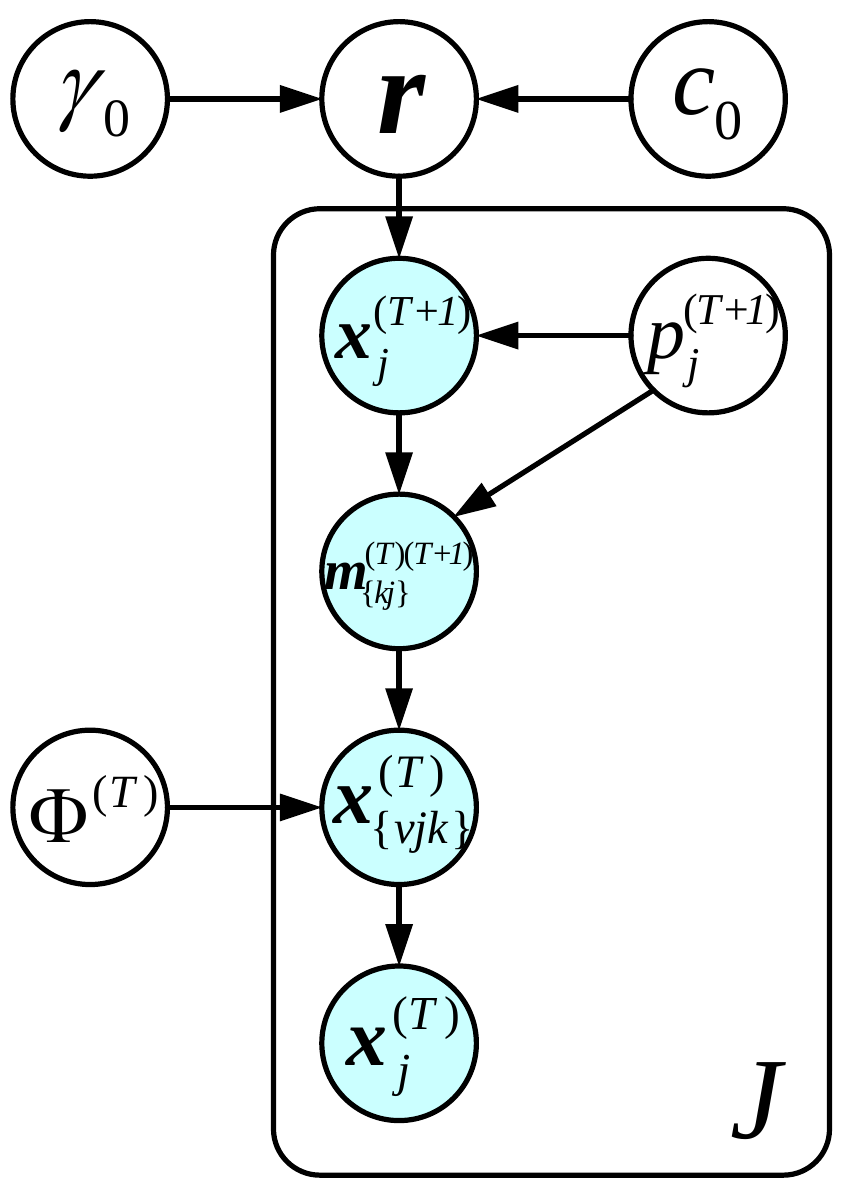}}


\vspace{-3.mm}
\caption{\small \label{fig:graphical_model} 
Graphical representations of the model and data augmentation and marginalization based inference scheme. (a) graphical representation of the GBN hierarchical model. (b) an augmented representation of Poisson factor model of layer $t=1$, corresponding to \eqref{eq:PoAug} with $t=1$. (c) an alternative representation using the relationships between the Poisson and multinomial distributions, obtained by applying Lemma 4.1 of \citep{BNBP_PFA_AISTATS2012} on \eqref{eq:PoAug} for $t=1$. (d) a negative binomial distribution based representation that marginalizes out the gamma from the Poisson distributions, corresponding to \eqref{eq:NBAug} for $t=1$. (e) an equivalent representation that introduces CRT distributed auxiliary variables, corresponding to \eqref{eq:CRT} with $t=1$. (f) an equivalent representation using Theorem 1 of \citep{NBP2012} on (\ref{eq:NBAug}) and \eqref{eq:CRT} for $t=1$. (g) An representation obtained by repeating the same augmentation-marginalization steps described in (b)-(f) one layer at a time from layers $1$ to $t$. (h) An representation 
of the top hidden layer.
 }\vspace{-0mm}
\end{figure}

Note that $x^{(t)}_{\cdotv j} = m^{(t)(t+1)}_{\cdotv j}$, and  as the number of tables occupied by the customers is in the same order as the logarithm of the customer number in a Chinese restaurant process, $x^{(t+1)}_{kj}$ is in the same order as $\ln\big( m^{(t)(t+1)}_{kj}\big)$.
Thus the total count of layer $t+1$ as $\sum_j x^{(t+1)}_{\cdotv j}$ would often be much smaller than that of layer $t$ as $\sum_j x^{(t)}_{\cdotv j}$ (though in general not as small as a count that is in the same order of the logarithm of $\sum_j x^{(t)}_{\cdotv j}$), and hence one may use the total count $\sum_j x^{(T)}_{\cdotv j}$ as a simple criterion to decide whether it is necessary to add more layers to the GBN. 
In addition, if the latent count $x_{k'\cdotv k}^{(t)}:= \sum_j x^{(t)}_{k'jk}$ becomes close or equal to zero, then the posterior mean of $\Phimat^{(t)}(k',k)$ could become so small that node $k'$ of layer $t-1$ can be considered to be disconnected from node $k$ of layer $t$.


\subsection{Modeling Data Variability With Distributed Representation}\label{sec:distributed}
In comparison to a single-layer model with $T=1$, which  assumes that the hidden units of layer one are independent in the prior, the multilayer model with $T\ge 2$ captures the correlations between them. 
Note that for the extreme case that $\Phimat^{(t)}=\Imat_{K_{t}}$ for $t\ge 2$ are all identity matrices, which indicates that there are no correlations between the features of $\thetav_j^{(t-1)}$ left to be captured, 
the deep structure could still provide benefits as it helps 
model latent counts $\mv_{j}^{(1)(2)}$ that may be highly overdispersed. For example, let us assume $\Phimat^{(t)}=\Imat_{K_{2}}$ for all $t\ge2$, 
then from (\ref{eq:PGBN}) and (\ref{eq:NBAug}) we have 
\beq
m_{kj}^{(1)(2)}\sim\mbox{NB}(\theta_{kj}^{(2)},p_j^{(2)}), ~\ldots,~\theta_{kj}^{(t)}\sim\mbox{Gam}(\theta_{kj}^{(t+1)},1/c_j^{(t+1)}),~\ldots,~\theta_{kj}^{(T)}\sim\mbox{Gam}(r_k,1/c_j^{(T+1)}).\notag
\eeq
Using the laws of total expectation and total variance, we have
\beq
\E\big[\theta_{kj}^{(2)}\,|\,r_k\big] = \frac{ r_k}{\prod_{t=3}^{T+1} c_j^{(t)}},~~~~~~
  \mbox{var}\big[\theta_{kj}^{(2)}\,|\,r_k\big] =r_k\sum_{t=3}^{T+1} \left[\prod_{\ell=3}^{t} \left(c_j^{(\ell)}\right)^{-2}\right]\left[ \prod_{\ell=t+1}^{T+1} \left(c_j^{(\ell)}\right)^{-1}\right].\notag
\eeq
Further applying the same laws, we have
$$
\E\big[m_{kj}^{(1)(2)}\,|\,r_k\big] =  \frac{r_k p_j^{(2)}}{\left(1-p_j^{(2)}\right)\prod_{t=3}^{T+1} c_j^{(t)}},$$
$$  \mbox{var}\big[m_{kj}^{(1)(2)}\,|\,r_k\big] = \frac{r_k p_j^{(2)}}{\big(1-p_j^{(2)}\big)^2 {\prod_{t=3}^{T+1} c_j^{(t)}} } \left\{1 + p_j^{(2)} \sum_{t=3}^{T+1} \left[\prod_{\ell=3}^{t} \left(c_j^{(\ell)}\right)^{-1}\right]
\right\}. 
$$
Thus the variance-to-mean ratio (VMR) of the count $m_{kj}^{(1)(2)}$ given $r_k$ can be expressed as
\beq
\mbox{VMR}\big[m_{kj}^{(1)(2)}\,|\,r_k\big] = \frac{1}{\big(1-p_j^{(2)}\big) } \left\{1 + p_j^{(2)} \sum_{t=3}^{T+1} \left[\prod_{\ell=3}^{t} \left(c_j^{(\ell)}\right)^{-1}\right]
\right\}. 
\eeq

In comparison to  PFA with $m_{kj}^{(1)(2)}\sim\mbox{NB}(r_k,p_j^{(2)})$ given $r_k$,  
with a VMR of $1/(1- p_j^{(2)})$, 
the GBN with $T$ hidden layers, which mixes the shape of $m_{kj}^{(1)(2)}\sim\mbox{NB}(\theta_{kj}^{(2)},p_j^{(2)})$ with a chain of gamma random variables, 
increases $\mbox{VMR}\big[m_{kj}^{(1)(2)}\,|\,r_k\big]$ 
 by a factor of 
$$1 + p_j^{(2)} \sum_{t=3}^{T+1} \left[\prod_{\ell=3}^{t} \left(c_j^{(\ell)}\right)^{-1}\right],$$
which is equal to
$$1 + (T-1)p_j^{(2)} $$ 
if we further assume $c_j^{(t)} = 1$ for all $t\ge 3$.
Therefore, by increasing the depth of the network to distribute the variability into more layers, the multilayer structure could increase its capacity to model data variability.

\subsection{Learning The Network Structure With Layer-Wise Training}\label{sec:greedy}

As jointly training all layers together is often difficult, 
existing deep networks are typically trained 
using a greedy layer-wise unsupervised training algorithm, such as the one proposed in \citep{hinton2006fast} to train the deep belief networks. The effectiveness of this training strategy is further analyzed in \citep{bengio2007greedy}. 
By contrast, the augmentable GBN has a simple 
Gibbs sampler to jointly train all its hidden layers, as described in Appendix~\ref{sec:sampling}, and 
hence does not necessarily require greedy layer-wise training, but the same as these commonly used deep learning algorithms, it still needs to specify the number of layers and the width of each layer.

In this paper, we adopt the idea of layer-wise training for the GBN, not because of the lack of an effective joint-training algorithm that trains all layers together in each  iteration, but for the purpose of learning the width of each hidden layer in a greedy layer-wise manner, given a fixed budget on the 
width of the first layer. The basic idea is to first train a GBN with a single hidden layer, $i.e.$, $T=1$, for which we know how to use the gamma-negative binomial process \citep{NBP2012,NBP_CountMatrix} to infer the posterior distribution of the number of active factors; we fix the width of the first layer $K_1$ with the number of active factors inferred at  iteration $B_1$, prune all inactive factors of the first layer, and continue Gibbs sampling for another $C_1$ iterations. Now we describe the proposed recursive procedure to build a GBN with $T\ge 2$ layers.
With a GBN of $T-1$ hidden layers that has already been inferred, 
for which the hidden units of the top layer are distributed as $\thetav_{j}^{(T-1)}\sim\mbox{Gam}(\rv, 1/c_j^{(T)})$, where $ \rv=(r_1,\ldots,r_{K_{T-1}})'$, 
 we add another layer by letting $\thetav_{j}^{(T-1)}\sim\mbox{Gam}(\Phimat^{^{(T)}}\thetav_{j}^{(T)},1/c_j^{(T)}), ~\thetav_{j}^{(T)}\sim\mbox{Gam}(\rv, 1/c_j^{(T+1)})$, where $\Phimat^{^{(T)}}\in\mathbb{R}_+^{K_{T-1}\times K_{T_{\max}}}$ and $\rv$ is redefined as $ \rv=(r_1,\ldots,r_{K_{T_{\max}}})'$. 
The key idea is with latent counts $m_{kj}^{(T)(T+1)}$ upward propagated from the bottom data layer, one may marginalize out    $\theta_{kj}^{(T)}$, leading to $m_{kj}^{(T)(T+1)}\sim\mbox{NB}(r_k,p_j^{(T+1)}),~ r_k\sim\mbox{Gam}(\gamma_0/K_{T\max},1/c_0)$, and hence can again rely on the shrinkage mechanism of a truncated gamma-negative binomial process  
 to prune inactive factors (connection weight vectors, columns of $\Phimat^{(T)}$) of layer $T$,  making $K_T$, the inferred layer width for the newly added layer,  smaller than $K_{T\max}$ if $K_{T\max}$ is set to be sufficiently large. The newly added layer and all the layers below  would be jointly trained, but with 
the structure below the newly added layer kept unchanged. 
 Note that when $T=1$, the GBN infers the number of active factors if $K_{1\max}$ is set large enough, otherwise, it still assigns the factors with different weights $r_k$, but may not be able to prune any of them. 
The details of the proposed layer-wise training strategies are summarized in Algorithm~\ref{tab:algorithm} for multivariate count data, and in Algorithm~\ref{tab:algorithm2} for multivariate binary and nonnegative real data.

 %
%
%

\section{Experimental Results}\label{sec:examples}
In this section, 
 we present experimental results for count, binary, and nonnegative real data.
\subsection{Deep Topic Modeling}
We first analyze multivariate count data with the Poisson gamma belief network (PGBN).
We apply the PGBNs for topic modeling of text corpora, each document of which 
is represented as a term-frequency 
count vector. 
Note that the PGBN with a single hidden layer is identical to the (truncated) gamma-negative binomial process PFA of \citet{NBP2012}, which is a nonparametric Bayesian algorithm that performs similarly to the hierarchical Dirichlet process latent Dirichlet allocation of \citet{HDP} for text analysis, and is considered as a strong baseline. 
Thus we will focus on making comparison to the PGBN with a single layer, with its layer width set to be large to approximate the performance of the gamma-negative binomial process PFA.
We evaluate the PGBNs' performance by examining both how well they unsupervisedly extract low-dimensional features for document classification, and how well they predict heldout word tokens. Matlab code will be available in \href{http://mingyuanzhou.github.io/}{http://mingyuanzhou.github.io/}.

We use Algorithm \ref{tab:algorithm} to learn, in a layer-wise manner, from the training data the connection weight matrices $\Phimat^{(1)},\ldots,\Phimat^{(T_{\max})}$ and the top-layer hidden units' gamma shape parameters $\rv$: 
to add layer $T$ to a previously trained network with $T-1$ layers, we use $B_T$ 
 iterations to jointly train $\Phimat^{(T)}$ and $\rv$ together with $\{\Phimat^{(t)}\}_{1,T-1}$, prune the inactive factors of layer $T$, and continue the joint training with another $C_T$ iterations. 
We set the hyper-parameters as $a_0=b_0=0.01$ and $e_0=f_0=1$. 
Given the trained network, we apply the upward-downward Gibbs sampler to collect 500 MCMC samples after 500 burnins to estimate the posterior mean of the feature usage proportion vector $\thetav_j^{(1)}/\theta_{\cdotv j}^{(1)}$ at the first hidden layer, for every document in both the training and testing sets.

\subsubsection{Feature Learning for Binary Classification}
We consider the 20newsgroups data set  that consists of 18,774 documents from 20 different news groups, with a vocabulary of size $K_0= $ 61,188. It is partitioned into a training set of 11,269 documents and a testing set of 7,505 ones. 
We first consider 
two binary classification tasks that distinguish between 
the $comp.sys.ibm.pc.hardware$ and $comp.sys.mac.hardware$, and between the
$sci.electronics$ and $sci.med$ news groups. 
For each binary classification task, we remove a standard list of stop words and only consider the terms that appear at least five times, 
and report the classification accuracies based on 12 independent random trials. 
With the upper bound of the first layer's width set as $K_{1\max}\in\{25,50,100, 200, 400, 600, 800\}$, and $B_t=C_t=1000$ and $\eta^{(t)}=0.01$ 
for all $t$, we use Algorithm \ref{tab:algorithm} to train a network with $T\in\{1, 2, \ldots,8\}$ layers. 
Denote $\bar{\thetav}_j$ as the estimated $K_1$ dimensional feature vector for document $j$, where $K_1\le K_{1\max}$ is the inferred number of active factors of the first layer that is bounded by the pre-specified truncation level $K_{1\max}$. We use the $L_2$ regularized logistic regression provided by the LIBLINEAR package \citep{REF08a} to train a linear classifier on $\bar{\thetav}_j$ in the training set and use it to classify $\bar{\thetav}_j$ in the test set, where the regularization parameter is five-folder 
 cross-validated on the training set from $(2^{-10}, 2^{-9},\ldots, 2^{15})$.

\begin{figure}[!tb]
\begin{center}
\includegraphics[width=0.6\textwidth]{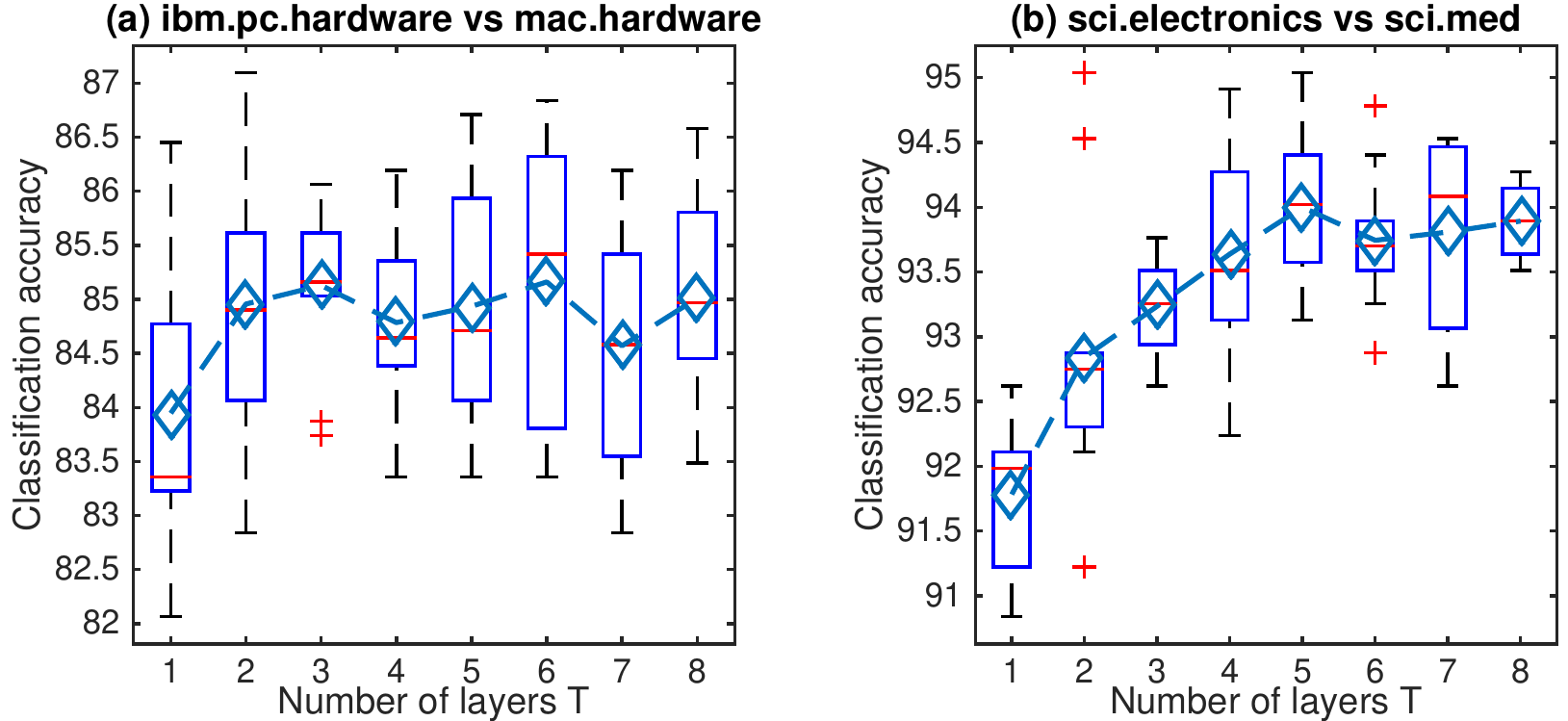}\\
\, \, \, \hspace{3mm}\includegraphics[width=0.6\textwidth]{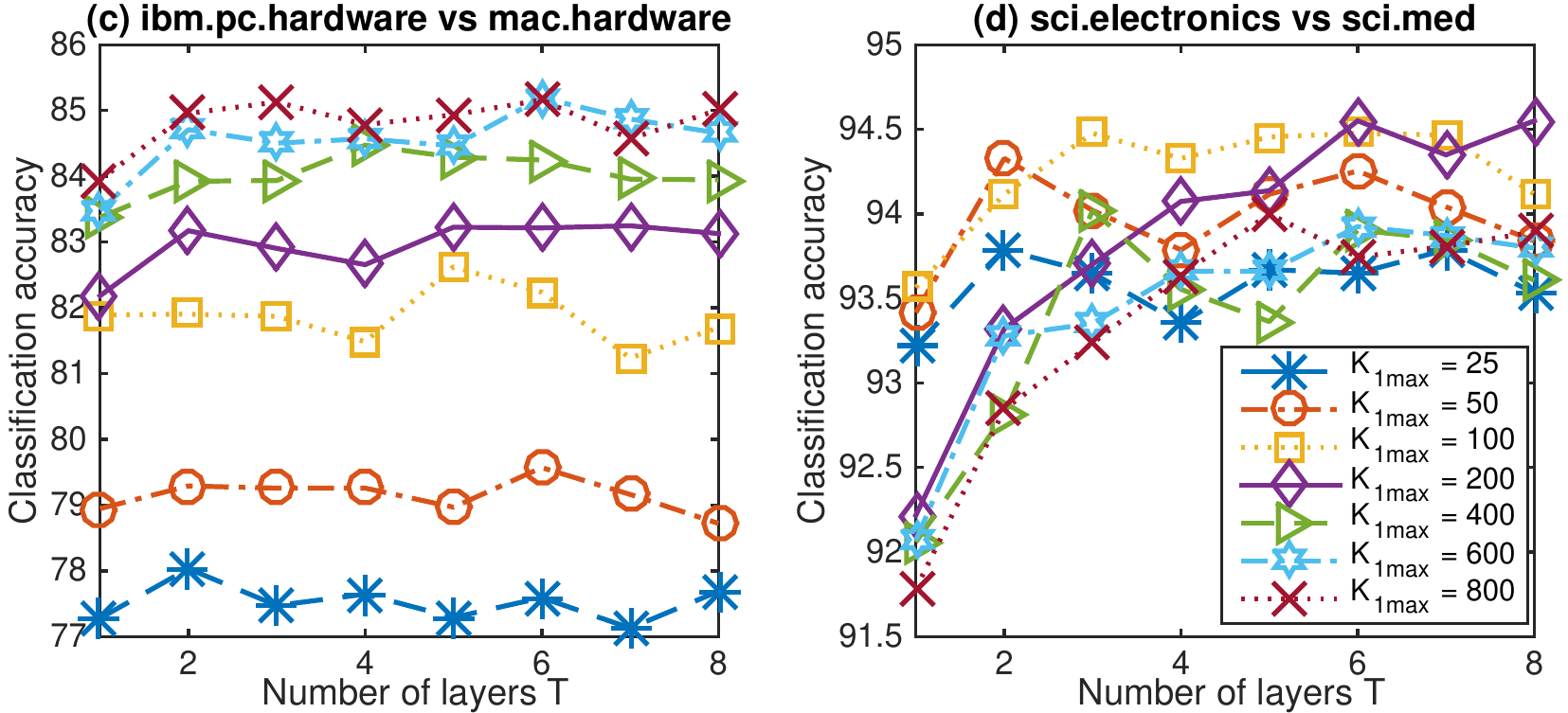}
\end{center}
\vspace{-6mm}
\caption{\small \label{fig:binary_20news}
Classification accuracy (\%) as a function of the network depth $T$ for two 20newsgroups binary classification tasks, 
with $\eta^{(t)} =0.01$ for all layers. 
(a)-(b): the boxplots of the accuracies of 12 independent runs with $K_{1\max}=800$. (c)-(d): the average accuracies of these 12 runs for various $K_{1\max}$ and $T$.
Note that $K_{1\max}=800$ is large enough to cover all active first-layer topics (inferred to be around 500 for both binary classification tasks), whereas all the first-layer topics would be used if $K_{1\max}=25$, $50$, $100$, or $200$.
 \vspace{-2.mm}
}
\end{figure}

As shown in Figure~\ref{fig:binary_20news}, modifying the PGBN from a single-layer shallow network to a multilayer deep one clearly improves the qualities of the unsupervisedly extracted 
feature vectors. 
In a random trial, with $K_{1\max}=800$, we infer a network structure of $[K_1,\ldots,K_8]=[512,  154,  75,  54 ,  47 , 37,  34 ,  29]$ for the first binary classification task, and $[K_1,\ldots,K_8]=[491, 143,  74,  49,  36,  32,  28 ,  26]$ for the second one. 
 Figures~\ref{fig:binary_20news}(c)-(d) also show that increasing the network depth in general improves the performance, but the first-layer width clearly plays a critical role in controlling the ultimate network capacity. This insight is further illustrated below. 


%
%

\subsubsection{Feature Learning for Multi-Class Classification}\label{sec:multiclass}

\begin{figure}[!tb]
\begin{center}
\includegraphics[width=0.46\textwidth]{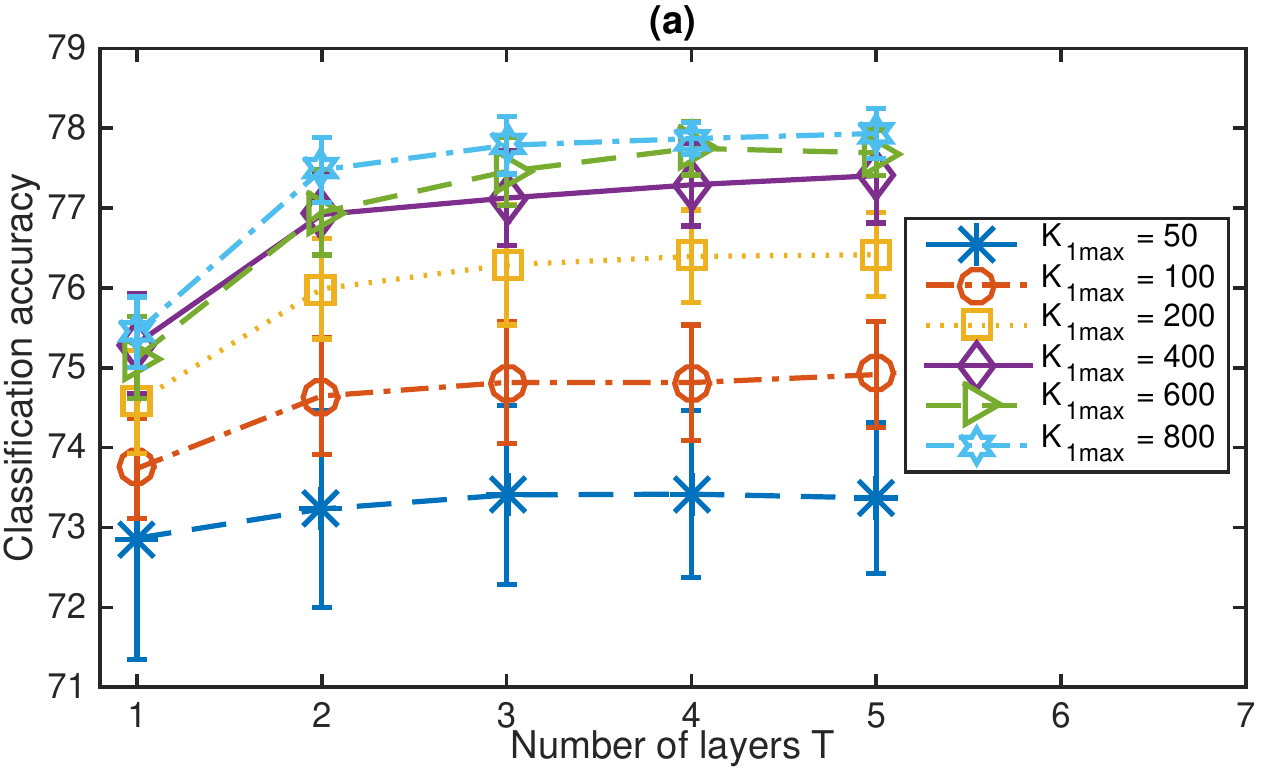}
\,\,
\includegraphics[width=0.46\textwidth]{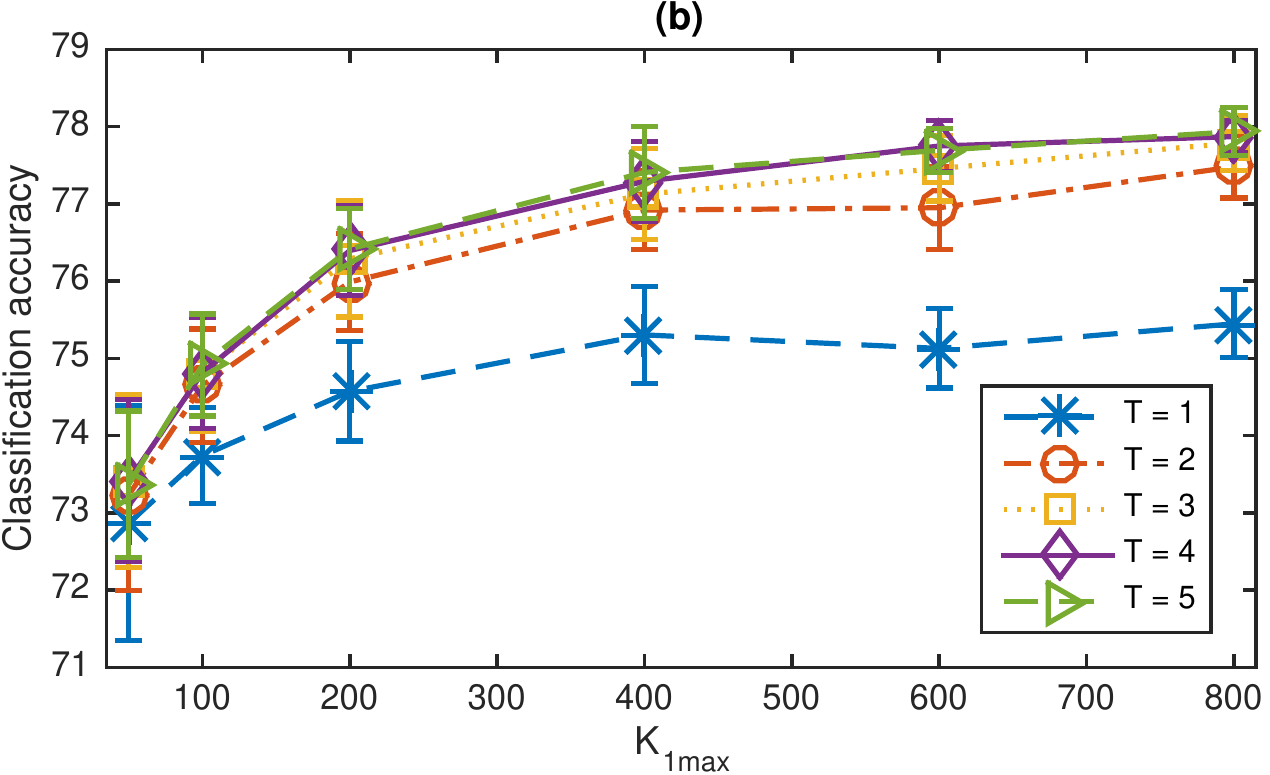}
\end{center}
\vspace{-6mm}
\caption{\small \label{fig:full_20news}
Classification accuracy (\%) of the PGBNs with Algorithm 1 for 20newsgroups multi-class classification (a) as a function of the depth $T$ with various $K_{1\max}$ and (b) as a function of $K_{1\max}$ with various depths, 
with $\eta^{(t)}=0.05$ for all layers. 
The widths of the hidden layers are automatically inferred. 
In a random trial, 
the inferred network widths $[K_1,\ldots,K_5]$ for $K_{1\max}=50,100,200,400,600$, and $800$ 
are 
$[ 50 , 50  , 50  , 50,  50]$, 
$[100,  99,  99 , 94 , 87]$, 
$[200,  161 , 130 , 94,  63]$, 
$[396 , 109 , 99  ,82 ,  68]$, 
$[528 , 129  ,109,  98,  91]$, and
$[608,  100,  99,  96,  89]$, respectively. 
}

\begin{center}
\includegraphics[width=0.45\textwidth]{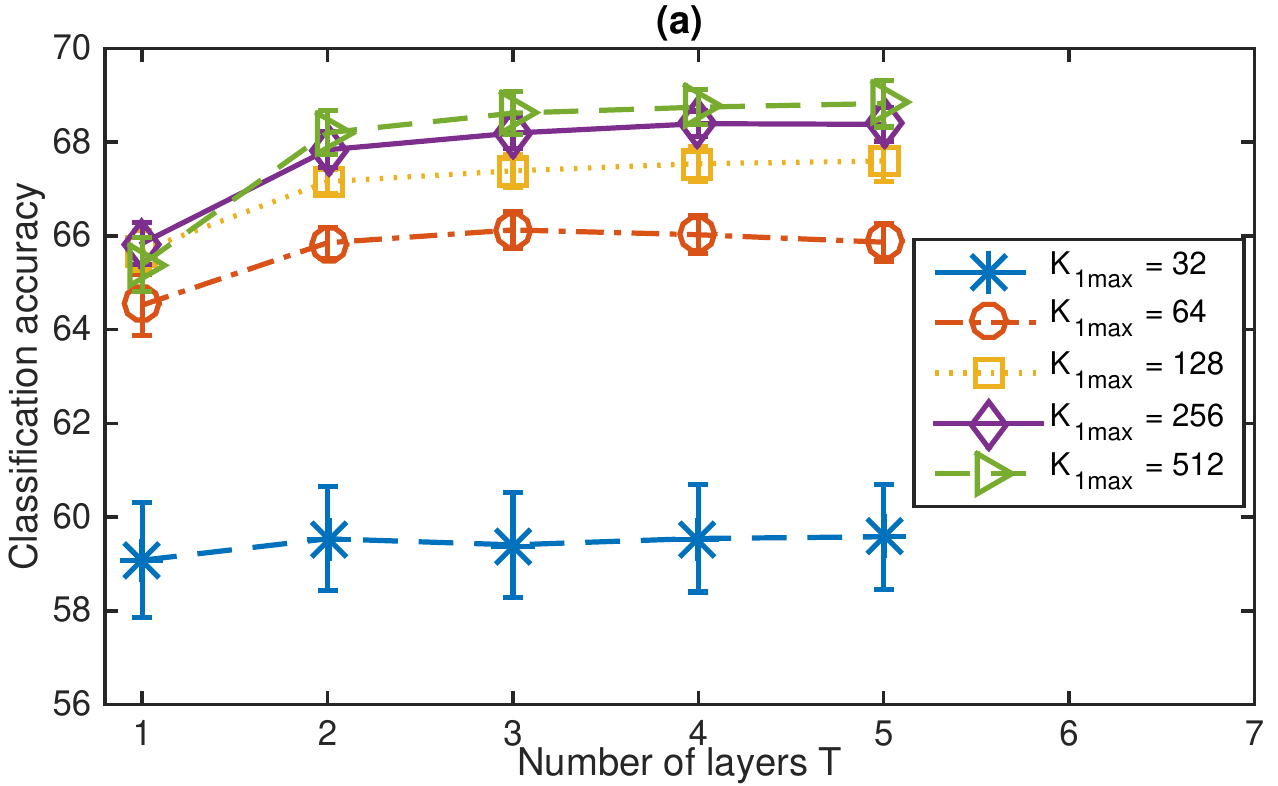}
\,\,
\includegraphics[width=0.45\textwidth]{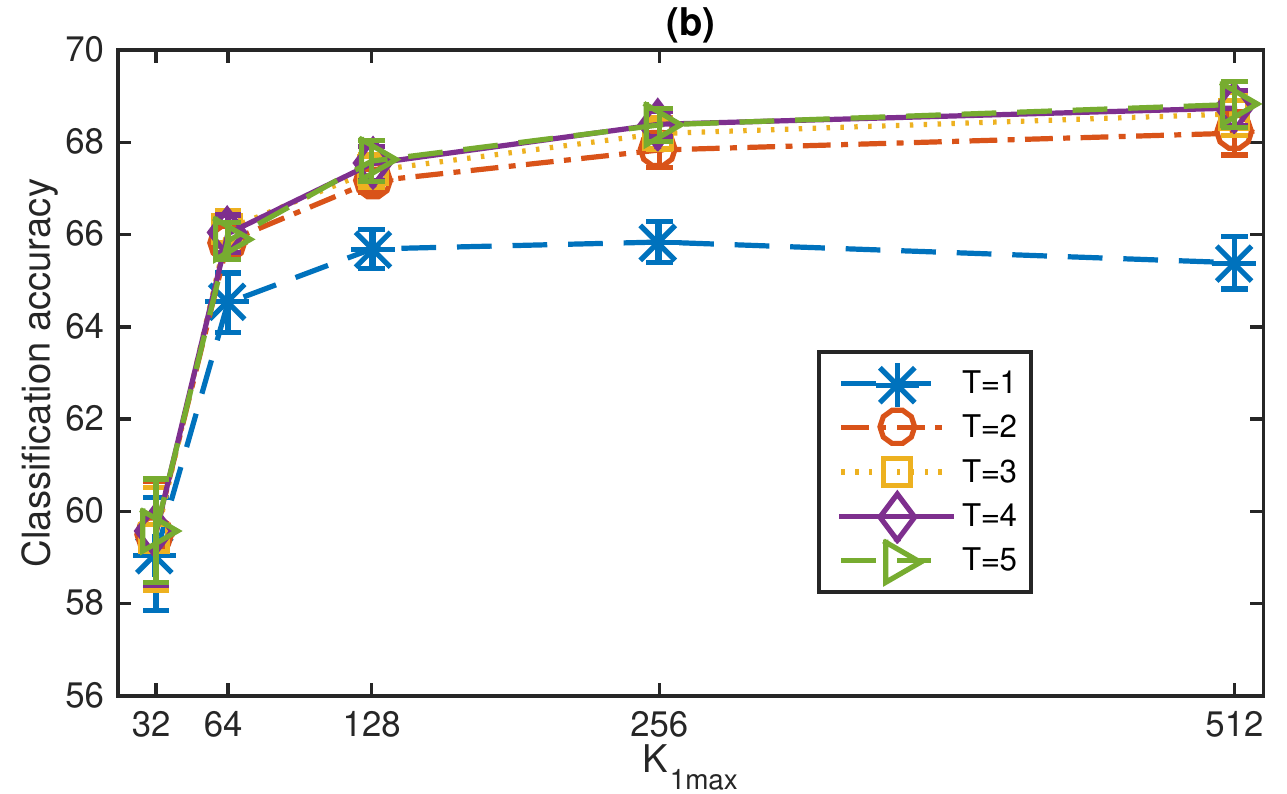}
\end{center}
\vspace{-6mm}
\caption{\small \label{fig:full_20newsTop2000}
Analogous plots to Figure \ref{fig:full_20news} with the vocabulary size restricted to be 2000, including the most frequent 2000 terms after removing a standard list of stopwords. 
The widths of the hidden layers are automatically inferred. 
In a random trial, 
the inferred network widths $[K_1,\ldots,K_5]$ for $K_{1\max}=32,64,128,256$, and $512$ 
are 
$[32,  32,  32 ,  32 ,  32]$, 
$[  64 , 64 ,  64 ,  59 ,  59]$,
$[ 128 , 125 , 118  ,106 ,  87]$,
$[ 256 , 224 , 124 , 83  ,65]$, and
 $[ 512,  187,  89 , 78 , 62]$, respectively. 
%
 \vspace{-3mm}
}
\end{figure}

 We test the PGBNs for multi-class classification on 20newsgroups. After removing a standard list of stopwords and the terms that appear less than five times, we obtain a vocabulary with $V= 33,420$. We set 
 $C_t=500$ and $\eta^{(t)} = 0.05$ for all $t$; we set $B_t=1000$ for all $t$ if $K_{1\max}\le 400$, and set $B_1=1000$ and $B_t=500$ for $t\ge 2$ if $K_{1\max}>400$. We use all 11,269 training documents to infer a set of networks with $T_{\max}\in\{1,\ldots,5\}$ and $K_{1\max}\in \{50, 100, 200, 400, 600,800\}$, and mimic the same testing procedure used for binary classification to extract low-dimensional feature vectors, with which each testing document is classified to one of the 20 news groups using the $L_2$ regularized logistic regression.
 
 Figure \ref{fig:full_20news} shows a clear trend of improvement in classification accuracy by increasing the network depth with a limited first-layer width, or by increasing the upper bound of the width of the first layer with the depth fixed. For example, a single-layer PGBN with $K_{1\max}=100$ could add one or more layers to slightly outperform a single-layer PGBN with $K_{1\max}=200$, and a single-layer PGBN with $K_{1\max} =200$ could add layers to clearly outperform a single-layer PGBN with $K_{1\max}$ as large as $800$.

The proposed Gibbs sampler also exhibits several desirable computational properties. 
Each iteration of jointly training multiple layers usually only costs moderately more than that of training a single layer, 
  e.g., with $K_{1\max}=400$, a training iteration on a single core of an Intel Xeon 2.7 GHz CPU takes about 
  $5.6$, $6.7$, $7.1$ seconds for the PGBN with $1$, $3$, and $5$ layers, respectively. 
Since the per iteration cost increases approximately  as a linear function of the inferred $K_1$ and as a linear function of the size of the data set, 
 given a fixed computational budget, one may choose a moderate $K_{1\max}$ to allow adding a sufficiently large number of hidden layers. 
In addition, the samplings of $x_{vkj}^{(t)}$,  $\phiv_k^{(t)}$, and $\theta_{kj}^{(t)}$ in each layer can all be made embarrassingly parallel with blocked Gibbs sampling, and hence can potentially significantly benefit from implementing the algorithm using graphics processing units (GPUs) or other parallel computing architectures. 


 
%

 Examining the inferred network structure also reveals interesting details. For example, in a random trial with Algorithm 1, with $\eta^{(t)}=0.05$ for all $t$, 
the inferred network widths $[K_1,\ldots,K_5]$ for $K_{1\max}=50,100,200,400,600$, and $800$ 
are 
$[ 50 , 50  , 50  , 50,  50]$, 
$[100,  99,  99 , 94 , 87]$, 
$[200,  161 , 130 , 94,  63]$, 
$[396 , 109 , 99  ,82 ,  68]$, 
$[528 , 129  ,109,  98,  91]$, and
[608,  100,  99,  96,  89], respectively. 
This indicates that for a network with an insufficient budget on its first-layer width, as the network depth increases, its inferred layer widths decay 
 more slowly than a network with a sufficient or surplus budget on its first-layer width; and a network with a surplus budget on its first-layer width may only need relatively small widths for its higher hidden layers.
 
In order to make comparison to related algorithms, we also consider restricting the vocabulary to the 2000 most frequent terms of the vocabulary after moving a standard list of stopwords. We repeat the same experiments with the same settings except that we set $K_{1\max}\in \{32, 64, 128, 256, 512\}$, $B_1=1000$, $C_1 =500$, and $B_t=C_t=500$ for all $t\ge2$. 
We show the results in Figure \ref{fig:full_20newsTop2000}. Again, we observe a clear trend of improvement by increasing the network depth with a limited first-layer width, or by increasing the upper bound of the width of the first layer with the depth fixed. In a random trial with Algorithm 1, 
the inferred network widths $[K_1,\ldots,K_5]$ for $K_{1\max}=32,64,128,256$, and $512$ 
are 
$[32,  32,  32 ,  32 ,  32]$, 
$[  64 , 64 ,  64 ,  59 ,  59]$,
$[ 128 , 125 , 118  ,106 ,  87]$,
$[ 256 , 224 , 124 , 83  ,65]$, and
 $[ 512,  187,  89 , 78 , 62]$, respectively.


For comparison, we first consider the same $L_2$ regularized logistic regression 
multi-class classifier, trained either on the raw word counts or normalized term-frequencies of the 20newsgroups training documents using five-folder cross-validation.
As summarized in Table \ref{tab:LR} of Appendix \ref{sec:fig}, when using the raw term-frequency word counts as covariates, the same classifier achieves $69.8\%$ ($68.2\%$) accuracy on the 20newsgroups test documents if using the top 2000 terms that exclude (include) a standard list of stopwords, achieves $75.8\%$ if using all the $61,188$ terms in the vocabulary, and achieves $78.0\%$ if using the $33,420$ terms remained after removing a standard list of stopwords and the terms that appear less than five times; 
and when using the normalized term-frequencies as covariates, the corresponding accuracies are $70.8\%$ ($67.9\%$) if using the top 2000 terms excluding (including) stopwords, $77.6\%$ with all the $61,188$ terms, and $79.4\%$ with the $33,420$ selected terms.

As summarized in Table \ref{tab:ORS} of Appendix \ref{sec:fig}, for multi-class classification on the same data set, with a vocabulary size of 2000 that consists of the 2000 most frequent terms after removing stopwords and stemming, the DocNADE \citep{larochelle2012neural} and the over-replicated softmax \citep{srivastava2013modeling} provide the accuracies of $67.0\%$	and $66.8\%$, respectively, for a feature dimension of $K=128$, and provide  the accuracies of $68.4\%$ and	$69.1\%$, respectively, for a feature dimension of $K=512$.

As shown in Figure \ref{fig:full_20newsTop2000} and summarized in Table \ref{tab:PGBN} of  Appendix \ref{sec:fig}, with the same vocabulary size of 2000 (but different terms due to different preprocessing),
the proposed PGBN provides $65.9\%$ ($67.5\%$) with $T=1$ ($T=5$) for $K_{1\max}=128$, and $65.9\%$ ($69.2\%)$ with $T=1$ ($T=5)$ for $K_{1\max}=512$, which may be further improved if we also consider the stemming step, as done in these two algorithms, for word preprocessing, or if we set the values of $\eta^{(t)}$ to be smaller than 0.05 to encourage a more complex network structure. We also summarize in Table \ref{tab:PGBN} the classification accuracies shown in Figure \ref{fig:full_20news} for the PGBNs with $V=33,420$. 
Note that the accuracies in Tables \ref{tab:ORS} and \ref{tab:PGBN} are provided to show that the PGBNs are in the same ballpark as both the DocNADE \citep{larochelle2012neural} and over-replicated softmax \citep{srivastava2013modeling}. Note these results are not intended to provide a head-to-head comparison, which 
is possible if the same data preprocessing and classifier were used and the error bars were shown in \citet{srivastava2013modeling}, or we could obtain the code to replicate the experiments using the same preprocessed data and classifier.

Note that $79.4\%$ achieved using the $33,420$ selected features is the best accuracy reported in the paper, 
which is unsurprising since all the unsupervisedly extracted latent feature vectors have much lower dimensions and are not optimized for classification \citep{zhu2012medlda,NBP_CountMatrix}. In comparison to using appropriately preprocessed high-dimensional features, our experiments show that while
text classification performance often clearly deteriorates if one trains a
multi-class classifier on the lower-dimensional features extracted using  ``shallow'' unsupervised latent feature models, 
one could obtain much improved results using appropriate ``deep'' generalizations. For further improvement, one may consider adding an extra supervised component into the model, which is shown to boost the classification performance for latent Dirichlet allocation \citep{mcauliffe2008supervised,zhu2012medlda}, a shallow latent feature model related to the PGBN with a single hidden layer.


\subsubsection{Perplexities for Heldout Words}
In addition to examining the performance of the PGBN for unsupervised feature learning, we also consider a more direct approach that we randomly choose 30\% of the word tokens in each document as training, and use the remaining ones to calculate per-heldout-word perplexity. We consider both all the 18,774 documents of 
the 20newsgroups corpus, limiting the vocabulary to the 2000 most frequent terms after removing a standard list of stopwords, and
the NIPS12 (\href{http://www.cs.nyu.edu/~roweis/data.html}{http://www.cs.nyu.edu/$\sim$roweis/data.html}) corpus whose stopwords have already been removed, limiting the vocabulary to the 2000 most frequent terms.
We set $\eta^{(t)}=0.05$ and $C_t=500$ for all $t$, 
set $B_1=1000$ and $B_t=500$ for $t\ge2$, and consider five random trials. 
 Among the $B_t+C_t$ Gibbs sampling iterations used to train layer $t$, we collect one sample per five iterations during the last $500$ iterations, for each of which we draw the topics $\{\phiv^{(1)}_k\}_k$ and topics weights $\thetav_{j}^{(1)}$, to compute the per-heldout-word perplexity using Equation (34) of \citet{NBP2012}. 
 This evaluation method is similar to those used in \citet{newman2009distributed}, \citet{wallach09}, and \citet{DILN_BA}.

\begin{figure}[!tb]
\begin{center}
\includegraphics[width=0.95\textwidth]{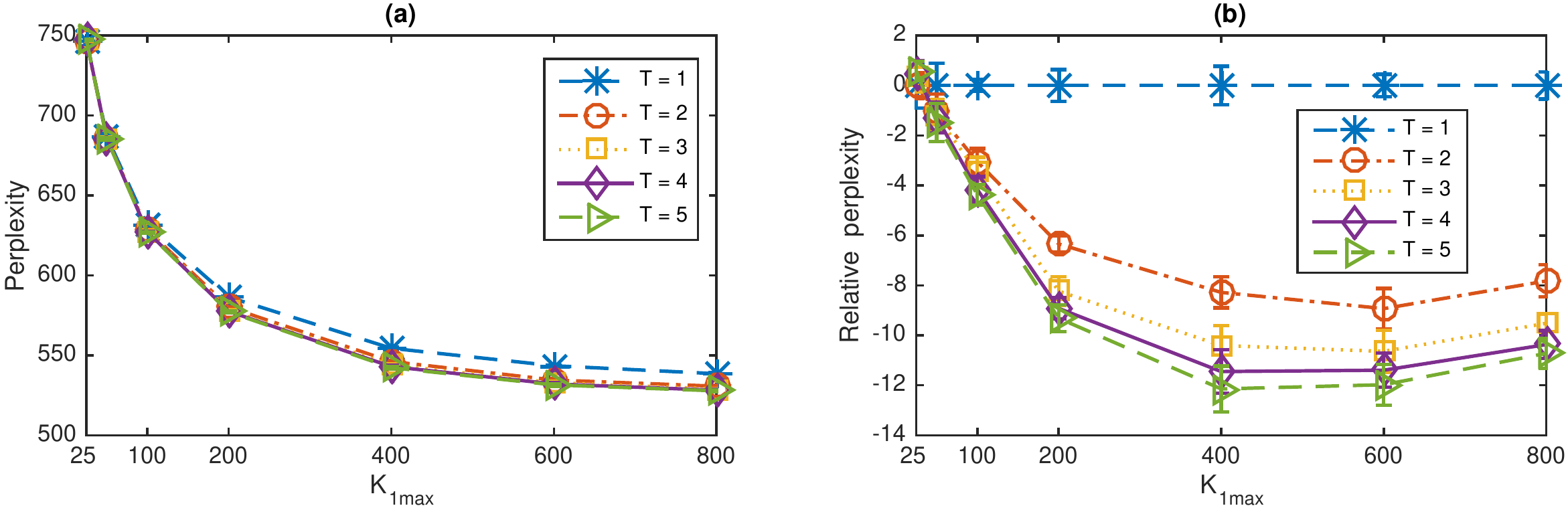}
\end{center}
\vspace{-6mm}
\caption{\small \label{fig:Perplexity}
(a) per-heldout-word perplexity (the lower the better) for the NIPS12 corpus (using the 2000 most frequent terms) as a function of the upper bound of the first layer width $K_{1\max}$ and network depth $T$, 
with $30\%$ of the word tokens in each document used for training and $\eta^{(t)}=0.05$ for all $t$. 
(b) for visualization, each curve in (a) is reproduced by subtracting its values from the average perplexity of the single-layer network. In a random trial, 
the inferred network widths $[K_1,\ldots,K_5]$ for $K_{1\max}=25,50,100,200, 400,600$, and $800$ are
 $[25,25,25,25,25]$, $[50,50,50,49,42]$, 
$[100,99,93,78,54]$,
$[200,164,106,60,42]$, $[400,130,83,52,39]$, $[596,71,68,58,37]$, and $ [755,57,53,46,42]$, respectively. 
}

\begin{center}
\includegraphics[width=0.95\textwidth]{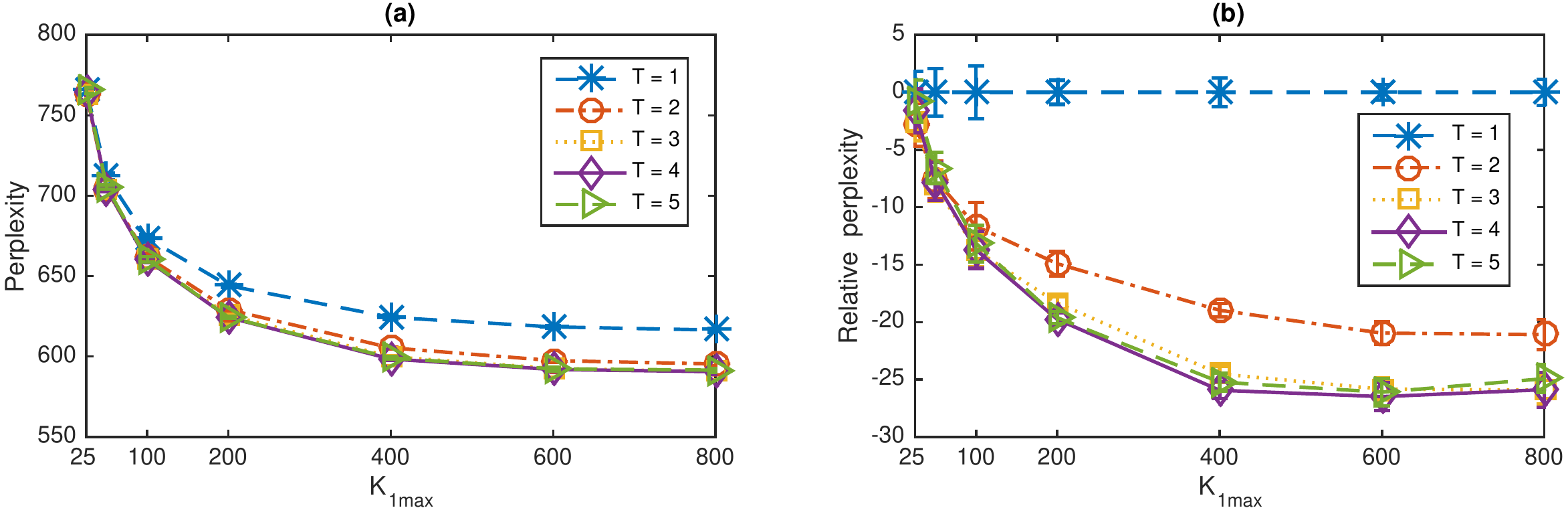}
\end{center}
\vspace{-6mm}
\caption{\small \label{fig:Perplexity_20news}
Analogous plots to Figure \ref{fig:Perplexity} for the 20newsgroups corpus (using the 2000 most frequent terms after removing a standard list of stopwords). %
In a random trial, 
the inferred network widths $[K_1,\ldots,K_5]$ for $K_{1\max}=25,50,100,200, 400,600$, and $800$ 
are
$[25,25,25,25,25]$, $[50,50,50,50,50]$, $[100,99,99,97,97]$, $[200,194,177,152,123]$, $[398,199,140,116,105]$, $[557,156,133,118,103]$, and $ [701,119,116,112,103]$, respectively.
}
\end{figure}
 
 As shown in both Figures \ref{fig:Perplexity} and \ref{fig:Perplexity_20news}, we observe a clear trend of improvement by increasing both $K_{1\max}$ and $T$. 
We have also examined the topics and network structure learned on the NIPS12 corpus. Similar to the exploratory data analysis performed on the 20newsgroups corpus, as described in detail in Section \ref{sec:interprete_network_structure}, the inferred deep networks also allow us to extract trees and subnetworks to visualize various aspects of the NIPS12 corpus from general to specific and reveal how they are related to each other. We omit these details for brevity and instead provide a brief description:
with $K_{1\max}=200$ and $T=5$, the PGBN infers a network with $[K_1,\ldots,K_5]=
[200, 164 ,106, 60 ,42]$ in one of the five random trials. 
The ranks, according to the weights $r_k^{(t)}$ calculated in \eqref{eq:weight}, and the top five words of three example topics for layer $T=5$ are
``6 network units input learning training,''
``15 data model learning set image,'' and
``34 network learning model input neural;''
while these of five example topics of layer $T=1$ are 
``19 likelihood em mixture parameters data,''
``37 bayesian posterior prior log evidence,''
``62 variables belief networks conditional inference,''
``126 boltzmann binary machine energy hinton,''
and
``127 speech speaker acoustic vowel phonetic.'' It is clear that the topics of the bottom hidden layers are very specific whereas these of the top hidden layer are quite general.

\subsubsection{Generating Synthetic Documents}





We have also tried drawing $\thetav_{j'}^{(T)}\sim\mbox{Gam}\big(\rv,1/c_{j'}^{(T+1)}\big)$ and downward passing it through a $T$-layer network trained on a text corpus  to generate synthetic bag-of-words documents, which are found to be quite interpretable and reflect various general aspects of the corpus used to train the network. 
We consider  the PGBN with $[K_1,\ldots,K_5]=[608,  100,  99,  96,  89]$, 
which is trained on the training set of the 
20newsgroups corpus with $K_{1\max}=800$ and $\eta^{(t)}=0.05$ for all $t$. 
We set $c_{j'}^{(t)}$ as the median of the inferred $\{c_j^{(t)}\}_j$ of the training documents for all~$t$. Given $\{\Phimat^{(t)}\}_{1,T}$ and $\rv$, we first generate $\thetav_{j'}^{(T)}\sim\mbox{Gam}\left(\rv,1\big/c_{j'}^{(T+1)}\right)$ and then downward pass it through the network by drawing nonnegative real random variables, one layer after another, from the gamma distributions as in 
\eqref{eq:PGBN}.
With the simulated $\thetav_{j'}^{(1)}$, we calculate the Poisson rates for all the $V$ words using $\Phimat^{(1)}\thetav_{j'}^{(1)}$ and 
display the top 100 words ranked by their Poisson rates. As shown in the text file available at \href{http://mingyuanzhou.github.io/Results/GBN-BOW.txt}{http://mingyuanzhou.github.io/Results/GBN-BOW.txt},  
the synthetic documents generated in this manner are all easy to interpret and reflect various general aspects of the 20newsgroups corpus on which the PGBN is trained. 

%
%
%
%
%
%
%

\subsection{Multilayer Representation for Binary Data}
We apply the BerPo-GBN to extract multilayer representations for high-dimensional sparse binary vectors. The BerPo link is proposed in \citet{EPM_AISTATS2015} to construct edge partition models for network analysis, 
 whose computation is mainly spent on pairs of linked nodes and hence is scalable to big sparse relational networks.  That link function and its inference procedure have also been adopted by \citet{hu2015zero} to analyze big sparse binary tensors. 
 
 We consider the same problem of feature learning for multi-class classification studied in detail in Section \ref{sec:multiclass}. We consider the same setting except that the original term-document word count matrix is now binarized into a term-document indicator matrix, the $(v,j)$ element of which is set as one if and only if $n_{vj}\ge 1$ and set as zero otherwise. 
We test the BerPo-GBNs on the 20newsgroups corpus, with $\eta^{(t)}=0.05$ for all layers. 
As shown in Figure \ref{fig:full_20newsBerPo}, given the same upper-bound on the width of the first layer, increasing the depth of the network clearly improves the performance. Whereas given the same number of hidden layers, the performance initially improves and then fluctuates as the upper-bound of the first layer increases. 
Such kind of fluctuations when $K_{1\max}$ reaches over 200 are expected, since the width of the first layer is inferred to be less than 190 and hence the budget as small as $K_{1\max}=200$ is already large enough to cover all active factors. 

\begin{figure}[!tb]
\begin{center}
\includegraphics[width=0.46\textwidth]{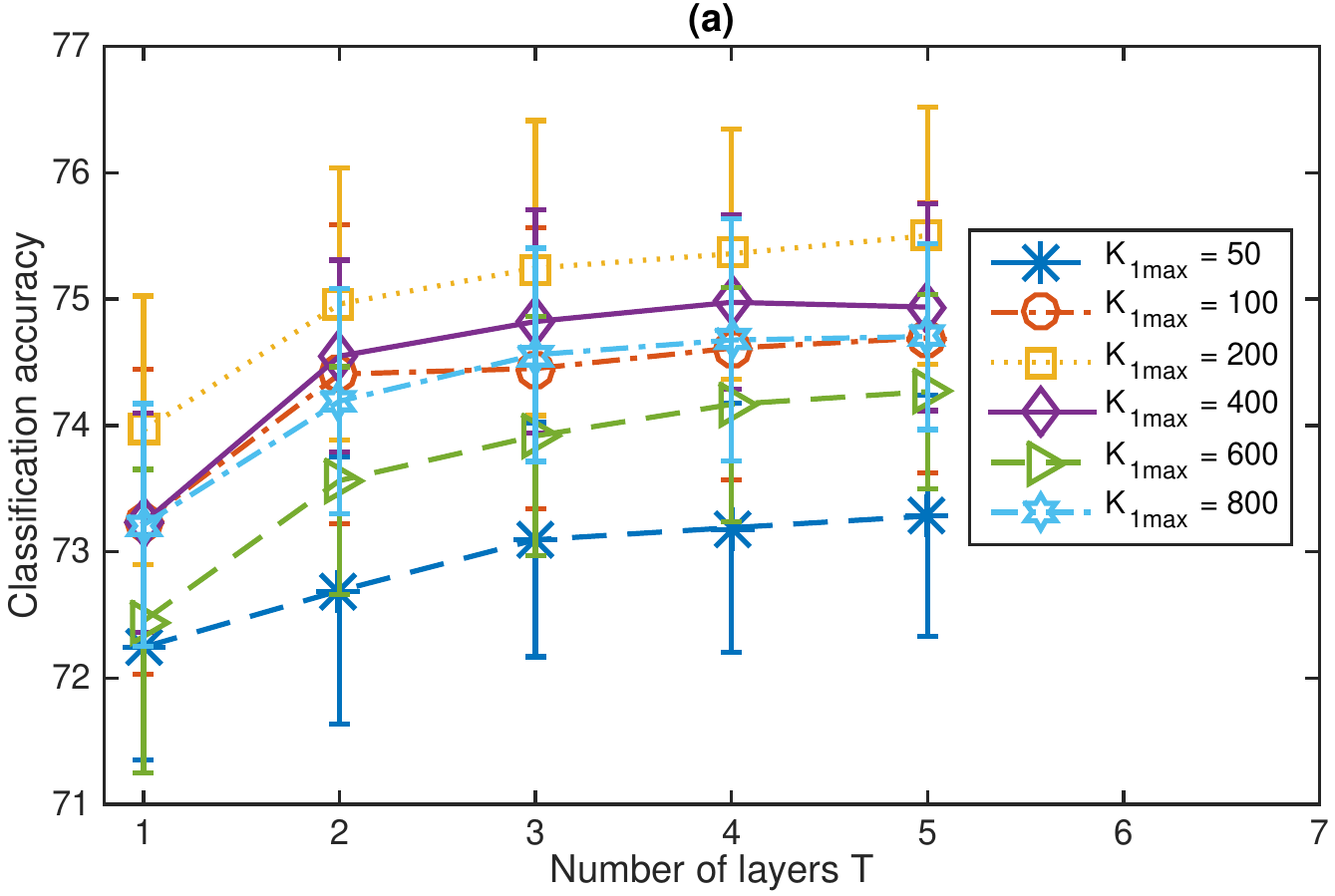}
\,\,
\includegraphics[width=0.46\textwidth]{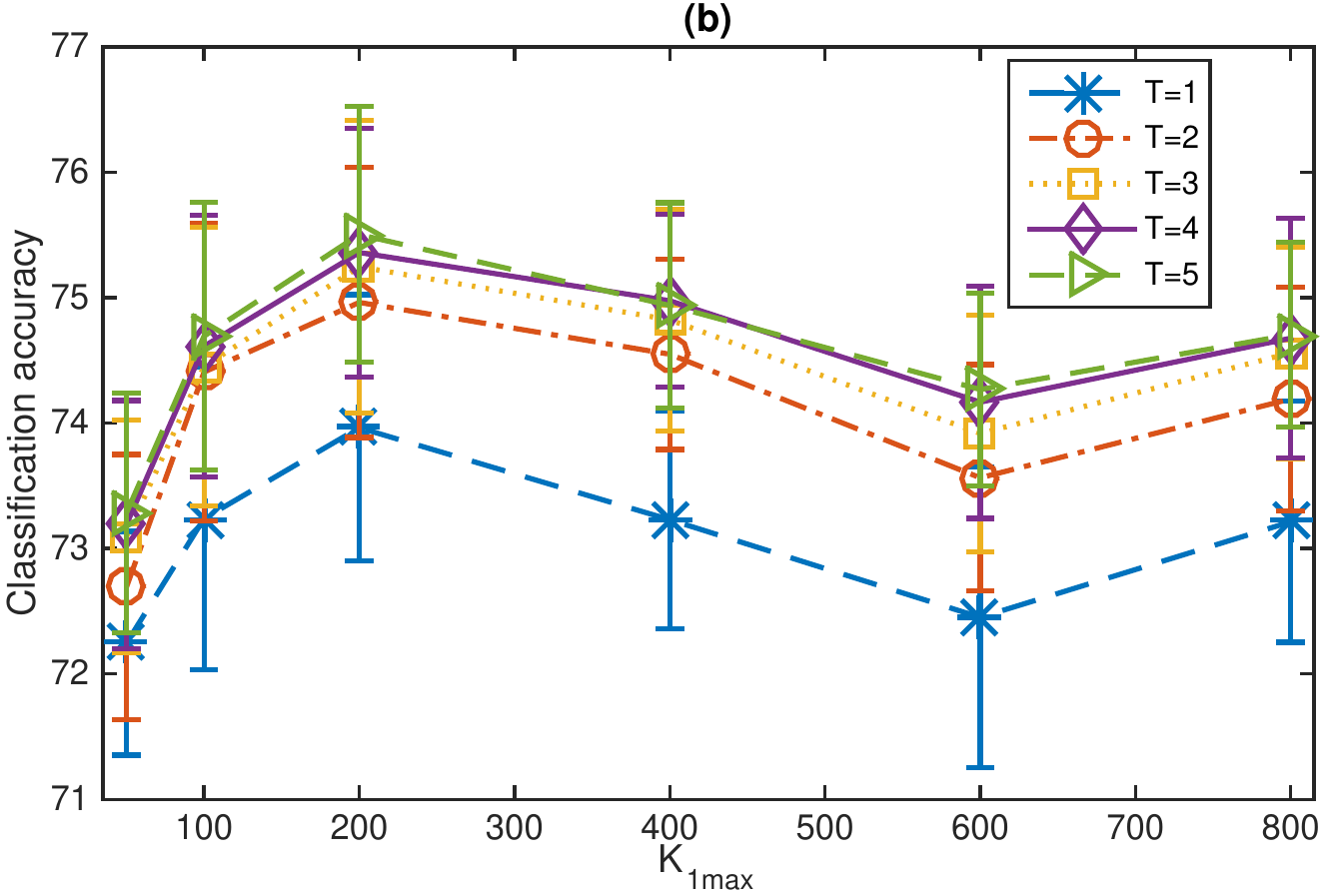}
\end{center}
\vspace{-6mm}
\caption{\small \label{fig:full_20newsBerPo}
Analogous plots to Figure \ref{fig:full_20news} for the BerPo-GBNs on the binarized 20newsgroups term-document count matrix. 
The widths of the hidden layers are automatically inferred.
In a random trial with Algorithm 2, 
the inferred network widths $[K_1,\ldots,K_5]$ for $K_{1\max}=50,100,200,400,600$, and $800$ 
are 
$[ 50 , 50  , 50  , 50,  50]$, 
$[100,  97,  95 , 90 , 82]$, 
$[178,  145 , 122 , 97,  72]$, 
$[184 , 139 , 119  ,101 ,  75]$, 
$[172 , 165  ,158,  138,  110]$, and
$[156,  151,  147,  134,  117]$, respectively.
}
\end{figure}

\subsection{Multilayer Representation for Nonnegative Real Data}

We use the PRG-GBN to unsupervisedly extract features from nonnegative real data. 
We consider the MNIST data set ({\href{http://yann.lecun.com/exdb/mnist/}{http://yann.lecun.com/exdb/mnist/}}), which consists of $60000$ training handwritten digits and $10000$ testing ones. We divide the gray-scale pixel values of each $28\times 28$ image by 255 and represent each image as a 784 dimensional nonnegative real vector. 
We set $\eta^{(1)}=0.05$ and use all training digits to infer the PRG-GBNs with $T_{max}\in\{1,\cdot\cdot\cdot,5\}$ and $K_{1max}\in \{50,100,200,400\}$. We consider the same problem of feature extraction for multi-class classification studied in detail in Section \ref{sec:multiclass}, and we follow the same experimental settings over there. As shown in Figure \ref{fig:MNIST_class}, both increasing the width of the first layer and the depth of the network could clearly improve the performance in terms of unsupervisedly extracting features that are better suited for multi-class classification.


\begin{figure}[!tb]
 \centering
\includegraphics[width=0.46\textwidth]{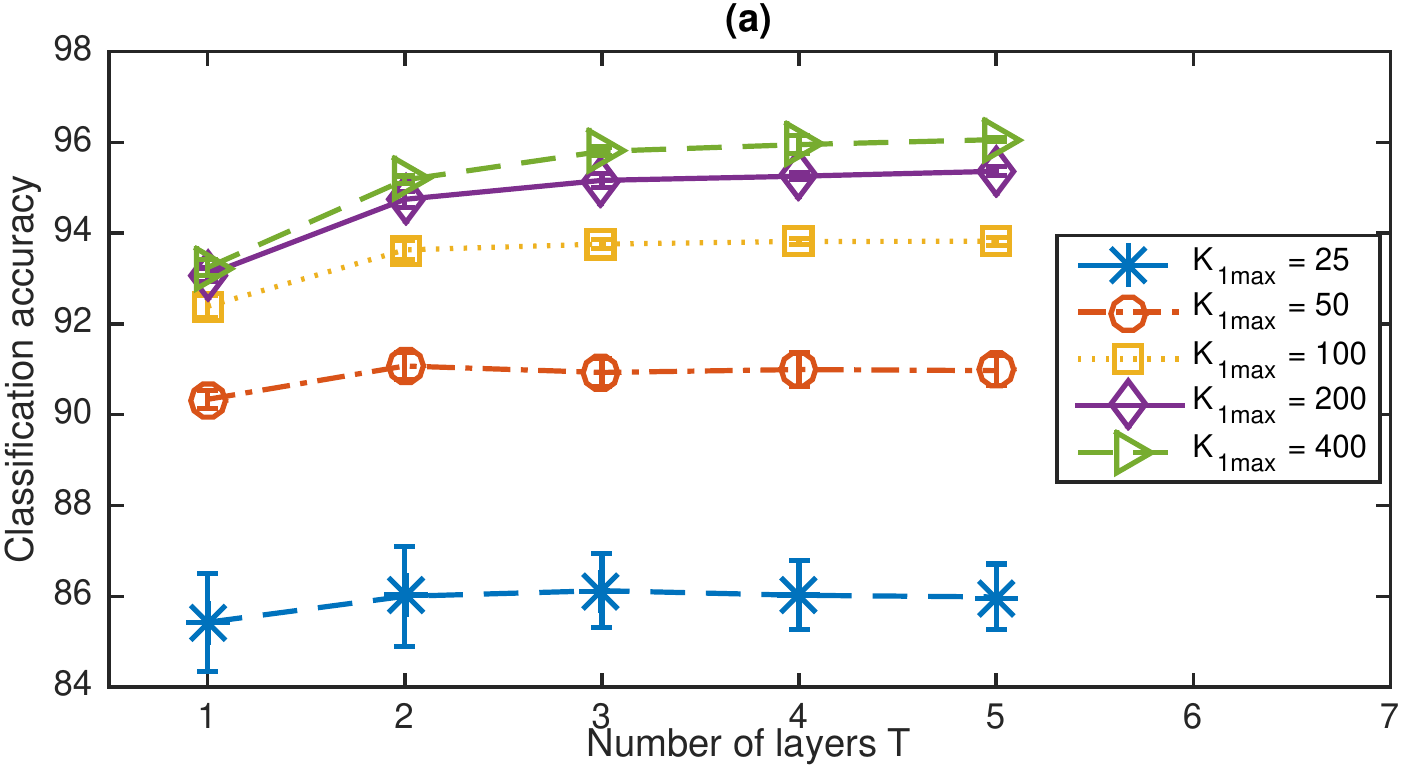} \, \,
\includegraphics[width=0.46\textwidth]{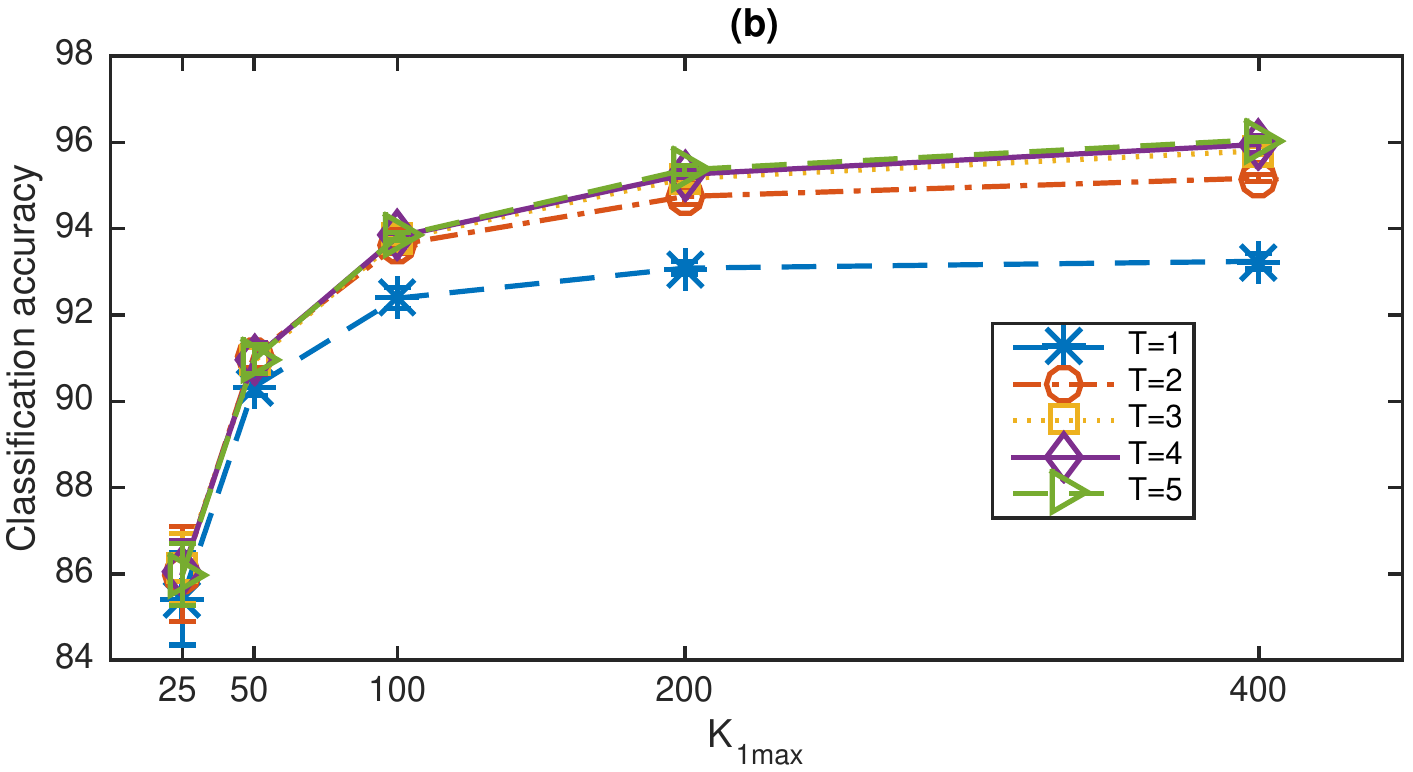}
\vspace{-0.3cm}
 \caption{\small
 Analogous plots to Figure \ref{fig:full_20news} for the PRG-GBNs on the MNIST data set. 
In a random trial with Algorithm 2, 
the inferred network widths $[K_1,\ldots,K_5]$ for $K_{1\max}$=$50$, $ 100$, $200$, and $400$ are $[50,50,50,50,50]$, $[100,100,100,100,100]$, $[200,200,200,200,200]$, and $[400,400,399,385,321]$, respectively.
 \label{fig:MNIST_class}} 
\end{figure}
%

Note that the PRG distribution might not be the best distribution to fit MNIST digits, but nevertheless, displaying the inferred features at various layers as images provides a straightforward way to visualize the latent structures inferred from the data and hence provides an excellent example to understand the properties and working mechanisms of the GBN. We display the projections to the first layer of the factors $\Phimat^{(t)}$ at all five hidden layers as images for $K_{1\max}=100$ and $K_{1\max}=400$ in Figures \ref{fig:AllProjPhis100} and \ref{fig:AllProjPhis400}, respectively, which clearly show that the inferred latent factors become increasingly more general as the layer increases. In both Figures \ref{fig:AllProjPhis100} and \ref{fig:AllProjPhis400}, the latent factors inferred at the first hidden layer represent filters that are only active at very particular regions of the images, those inferred at the second hidden layer represent larger parts of the hidden-written digits, and those inferred at the third and deeper layers resemble the whole digits. 

To visualize the relationships between the factors of different layers, we show in Figure~\ref{fig:MNIST_tree} in Appendix \ref{sec:fig} a subset of nodes of each layer and the nodes of the layer below that are connected to them with non-negligible weights. 

It is interesting to note that unlike \citet{Ng_deeplearning} and many other following works that rely on the convolutional and pooling operations, which are pioneered by \citet{LeCun89}, to extract hierarchical representation for images at different spatial scales, we show that the proposed algorithm, while 
not breaking the images into spatial patches, is already able to 
learn the factors that are active on very specific spatial regions of the image in the bottom hidden layer, and learn these increasingly more general factors covering larger spatial regions of the images as the number of layer increases.  However,  due to the lack of the ability to discover spatially localized features that can be shared at multiple different spatial regions, our algorithm does not at all exploit the 
redundancies of the spatially localized features inside a single image and hence may require much more data to train.  Therefore, it would  be interesting to investigate whether one can introduce convolutional and pooling operations into the GBNs, which may substantially improve their performance on modeling natural images.

\section{Conclusions}

The augmentable  gamma belief network (GBN) is proposed to extract a multilayer representation for high-dimensional count, binary, or nonnegative real vectors, with an efficient upward-downward Gibbs sampler to jointly train all its layers and a layer-wise training strategy to automatically infer the network structure.  
A GBN of $T$ layers can be broken into $T$ subproblems that are solved by repeating  the same subroutine,  with the computation mainly spent on training the first hidden layer. When used for deep topic modeling, the GBN extracts very specific topics at the first hidden layer and increasingly more general topics at deeper hidden layers. It provides an excellent way for exploratory data analysis through the visualization of the inferred deep network,  whose hidden units of adjacent layers are sparsely connected.
Its good performance is further demonstrated in unsupervisedly extracting features for document classification and predicting heldout word tokens. The extracted deep network can also be used to simulate very interpretable synthetic documents, which reflect various general aspects of the corpus that the network is trained on. When applied for image analysis, without using the convolutional and pooling operations, the GBN is already able to extract interpretable factors in the first hidden layer
that are active in very specific spatial regions  and interpretable factors in deeper hidden layers with increasingly more general spatial patterns covering larger spatial regions. 
For big data problems, in practice one may rarely have a sufficient budget to allow the first-layer width to grow without bound, thus it is natural to consider a deep network that can use a multilayer deep representation to better allocate its resource and increase its representation power with limited computational power. 
Our algorithm provides a natural solution to achieve a good compromise between the width of each layer and the depth of the network.


\acks{The authors would like to thank the editor and two anonymous referees for their insightful and constructive 
comments and suggestions, which have helped us improve the paper substantially.
M. Zhou thanks Texas Advanced Computing Center for computational support. B. Chen thanks the support
of the Thousand Young Talent Program of China, NSFC (61372132), NCET-13-0945, and NDPR-9140A07010115DZ01015.}

\vskip 0.2in
\appendix


\section{Randomized Gamma and Bessel Distributions}\label{app:bessel}

Related to our work, 
\citet{yuan2000bessel} proposed the randomized gamma distribution to generate a random positive real number as
\beq\notag
x\,|\, n,\nu\sim\mbox{Gam}(n+\nu+1,1/c), ~n\sim\mbox{Pois}(\lambda),
\eeq
where $\nu>-1$ and $c>0$.
As in \citet{yuan2000bessel}, the conditional posterior of $n$ can be expressed as
\beq\notag
(n\,|\,x,\nu,\alpha) \sim\mbox{Bessel}_{\nu}(2\sqrt{cx\lambda})
\eeq
where we denote $n\sim\mbox{Bessel}_{\nu}(\alpha)$ as the Bessel distribution with parameters $\nu>-1$ and $\alpha>0$, with PMF 
\beq\notag
\mbox{Bessel}_{\nu}(n;\alpha)=\frac{\left(\frac{\alpha}{2}\right)^{2n+\nu}}{I_{\nu} (\alpha) n! \Gamma(n+\nu+1)}, ~~n\in\{0,1,2,\ldots\}.
\eeq
Algorithms to draw Bessel random variables can be found in \citet{devroye2002simulating}.

\begin{figure}[t]
\begin{center}
\includegraphics[width=0.47\textwidth]{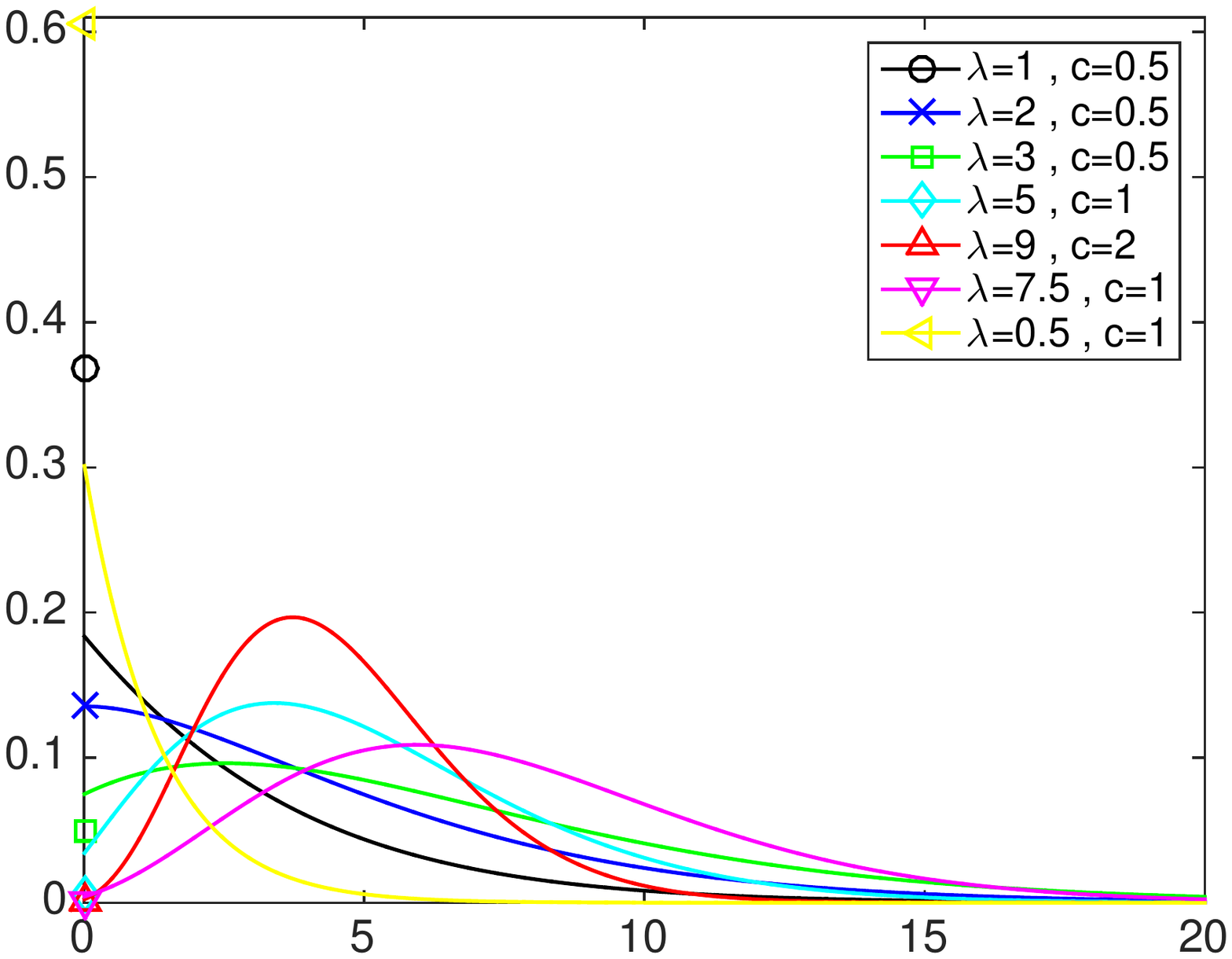}\,\,\,\,\,\,\,\,\,
\includegraphics[width=0.47\textwidth]{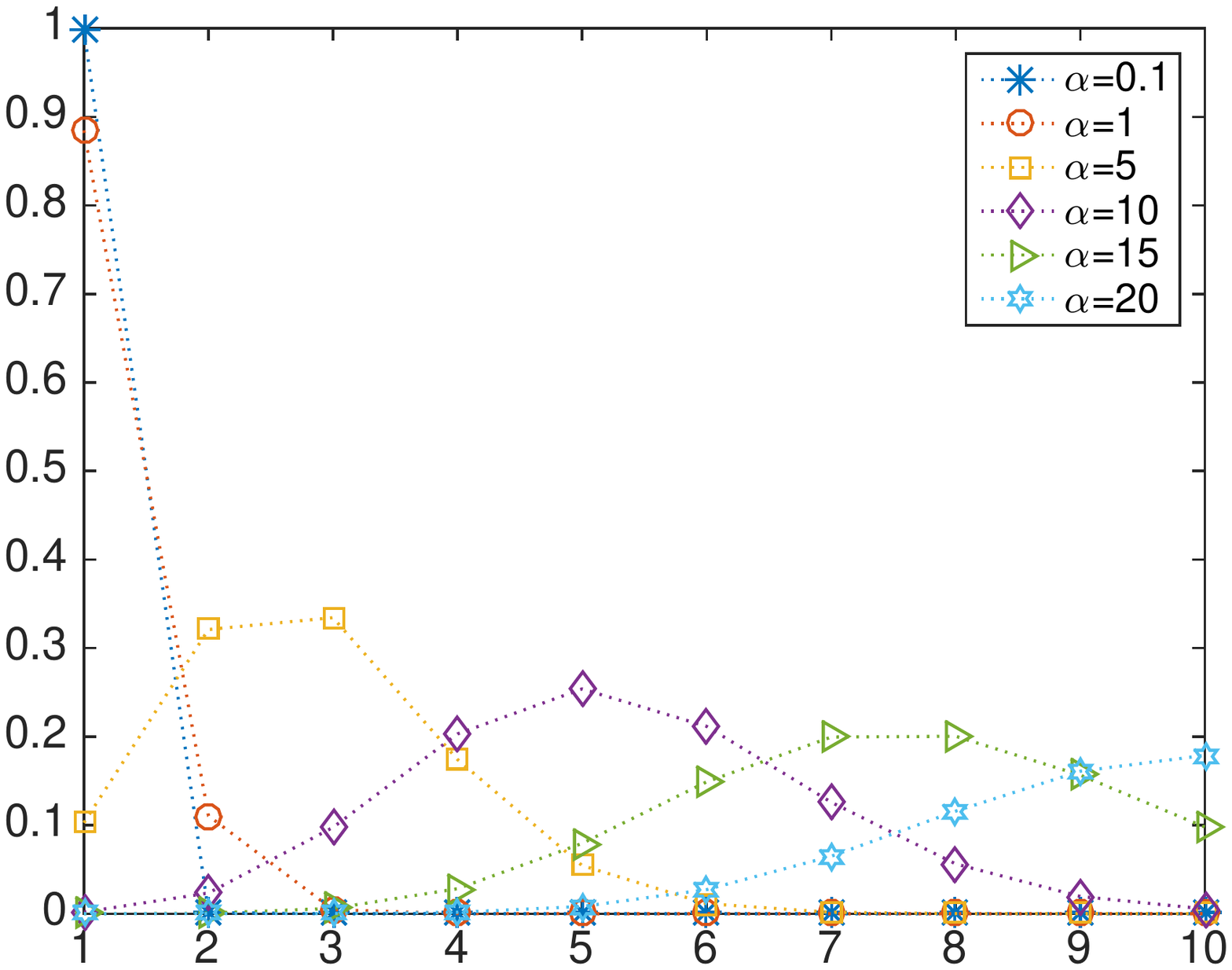}
\end{center}
\vspace{-7.mm}
\caption{\small \label{fig:pdf} 
Left: probability distribution functions for the Poisson randomized gamma (PRG) distribution $x\sim\mbox{PRG}(\lambda,c)$, where the sum of the probability mass at $x=0$ and the area under the probability density function curve for $x>0$ is equal to one; Right: probability mass functions for the truncated Bessel distribution $n\sim\mbox{Bessel}_{-1}(\alpha)$, where $n\in\{1,2,\ldots\}$. 
 }
\end{figure}

The proposed PRG is different from the randomized gamma distribution of \citet{yuan2000bessel} in that it models both positive real numbers and exact zeros, and the proposed truncated Bessel distribution $n\sim \mbox{Bessel}_{-1}(\alpha)$ is different from the Bessel distribution $n\sim \mbox{Bessel}_{\nu}(\alpha)$, where $\nu>-1$, in that it is defined only on positive integers. For illustration, we show in Figure \ref{fig:pdf} the probability distribution functions of both the PRG and truncated Bessel distributions under a variety of parameter settings.

\section{Upward-Downward Gibbs Sampling}\label{sec:sampling}
Below we first discuss Gibbs sampling for count data and then generalize it for both binary and nonnegative real data.
\subsection{Inference for the PGBN}
With Lemma \ref{lem:PGBN} and Corollary \ref{cor:PGBN} and the width of the first layer being bounded by $K_{1\max}$, 
we first consider multivariate count observations and develop an upward-downward Gibbs sampler for the PGBN, each iteration of which
proceeds as follows.

\emph{\textbf{Sample $x_{vjk}^{(t)}$}}. 
We can sample $x_{vjk}^{(t)}$ for all layers
using (\ref{eq:step1}). But for the first hidden layer, we may treat each observed count $x_{vj}^{(1)}$ as a sequence of 
word tokens at the $v$th term (in a vocabulary of size $V:=K_0$) in the $j$th document, and assign the $x_{\cdotv j}^{(1)}$ words $\{v_{ji}\}_{i=1,x_{\cdotv j}^{(1)}}$ one after another to the latent factors (topics), with both the topics $\Phimat^{(1)}$ and topic weights $\thetav_j^{(1)}$ marginalized out, as
\beqs
P(z_{ji}=k\,|\,-) 
\propto
\frac{\eta^{(1)}+x_{v_{ji}\cdotv k}^{(1)^{-ji}}}{V\eta^{(1)}+ x_{\cdotv \cdotv k}^{(1)^{-ji}}} \left(x_{\cdotv j k}^{(1)^{-ji}}+\phiv^{(2)}_{k:} \thetav_j^{(2)}\right),~~~k\in\{1,\ldots,K_{1\max}\}, \label{eq:z}
\eeqs
where $z_{ji}$ is the topic index for $v_{ji}$ and $x_{vjk}^{(1)} := \sum_{i}\delta(v_{ji}=v,z_{ji}=k)$ counts the number of times that term $v$ appears in document~$j$; we 
use $x^{-ji}$ to denote the count $x$ calculated without considering word $i$ in document $j$. The collapsed Gibbs sampling update equation shown above is related to the one developed in \citep{FindSciTopic} for latent Dirichlet allocation, and the one developed in \citep{BNBP_EPPF} for PFA using the beta-negative binomial process. When $T=1$, 
we would replace the terms $\phiv^{(2)}_{k:} \thetav_j^{(2)}$ with $r_k$ for PFA built on the gamma-negative binomial process \citep{NBP2012} (or with $\alpha \pi_k$ for hierarchical Dirichlet process latent Dirichlet allocation, see \citep{HDP} and \citep{BNBP_EPPF} for details), 
and add an additional term to account for the possibility of creating an additional factor \citep{BNBP_EPPF}. For simplicity, in this paper, we truncate the nonparametric Bayesian model with $K_{1\max}$ factors and let $r_k\sim\mbox{Gam}(\gamma_0/K_{1\max},1/c_0)$ if $T=1$. Note that although we use collapsed Gibbs sampling inference in this paper, if one desires embarrassingly parallel inference and possibly lower computation, then one may consider explicitly sampling $\{\phiv_k^{(1)}\}_k$ and $\{\thetav_{j}^{(1)}\}_j$ and sampling $x_{vjk}^{(1)}$ with (\ref{eq:step1}).  

\emph{\textbf{Sample $\phiv_k^{(t)}$}}. Given these latent counts, 
we sample the factors/topics $\phiv^{(t)}_k$ as
\beq
(\phiv^{(t)}_k\,|\,-)\sim\mbox{Dir}\left( \eta_1^{(t)}+ x^{(t)}_{1\cdotv k},\ldots, \eta_{K_{t-1}}^{(t)}+ x^{(t)}_{K_{t-1} \cdotv k} \right).\label{eq:step2}
\eeq

\emph{\textbf{Sample $x_{vj}^{(t+1)}$}}. We sample $\xv_j^{(t+1)}$ using (\ref{eq:CRT}), where we replace the term $\phiv_{v:}^{(T+1)}\thetav_{j}^{(T+1)}$ with~$r_v$. \\

\emph{\textbf{Sample $\rv$}}. Both $\gamma_0$ and $c_0$ are sampled using related equations in \citep{NBP2012}, omitted here for brevity. We sample $\rv$ as 
\beqs
&(r_v\,|\,-)\sim\mbox{Gam}\left({\gamma_0}/K_T+x_{v\cdotv}^{(T+1)},\left[{c_0-\textstyle \sum_j\ln\big(1-p_j^{(T+1)}\big)}\right]^{-1}\right). \label{eq:step5}
\eeqs

\emph{\textbf{Sample $\thetav_j^{(t)}$}}. 
Using (\ref{eq:deepPFA_aug}) and the gamma-Poisson conjugacy, we sample $\thetav_j$ as
\beqs
&(\thetav_j^{(T)}\,|\,-)\sim\mbox{Gam}\left(\rv + \mv_j^{(T)(T+1)},\left[{c_j^{(T+1)}-\ln\left(1-p_j^{(T)}\right)}\right]^{-1}\right), \notag \\ 
&\vdots
\notag\\
&(\thetav_j^{(t)}\,|\,-)\sim\mbox{Gam}\left(\Phimat^{(t+1)}\thetav_j^{(t+1)} + \mv_j^{(t)(t+1)},\left[{c_j^{(t+1)}-\ln\left(1-p_j^{(t)}\right)}\right]^{-1}\right), \notag\\ 
&\vdots
\notag\\
&(\thetav_j^{(1)}\,|\,-)\sim\mbox{Gam}\left(\Phimat^{(2)}\thetav_j^{(2)} + \mv_j^{(1)(2)},\left[{c_j^{(2)}-\ln\left(1-p_j^{(1)}\right)}\right]^{-1}\right), \label{eq:step4}
\eeqs

%
%
\emph{\textbf{Sample $c_j^{(t)}$}}. With $\theta_{\cdotv j}^{(t)}:=\sum_{k=1}^{K_{t}}\theta_{kj}^{(t)}$ for $t\le T$ and $\theta_{\cdotv j}^{(T+1)}:=r_{\cdotv}$, we sample $p_j^{(2)}$ and $\{c_j^{(t)}\}_{t\ge 3}$ as
\begin{align}
&(p_j^{(2)}\,|\,-)\sim\mbox{Beta}\left(a_0\!+\! m_{\cdotv j}^{(1)(2)},b_0\!+\!\theta_{\cdotv j}^{(2)}\!\right),~~ (c_j^{(t)}\,|\,-)\sim \mbox{Gam}\Big(e_0 \!+\! \theta_{\cdotv j}^{(t)}, \left[{f_0\!+\!\theta_{\cdotv j}^{(t-1)}}\!\right]^{-1}\!\Big), \label{eq:step6}
\end{align}
and calculate $c_j^{(2)}$ and $\{p_j^{(t)}\}_{t\ge3}$ with (\ref{eq:p}).


\subsection{Handling Binary and Nonnegative Real Observations}
For binary observations that are linked to the latent counts at layer one as $b_{vj}^{(1)}=\mathbf{1}(x_{vj}^{(1)}\ge 1)$, we first sample the latent counts at layer one from the truncated Poisson distribution as
\beq\label{eq:Ber-PGBN}
\big(x_{vj}^{(1)}\,|\,- \big)\sim b_{vj}^{(1)}\cdotv \mbox{Pois}_{+}\left( \sum_{k=1}^{K_1}\phi_{vk}^{(1)}\theta_{kj}^{(1)}\right)
\eeq
and then sample $x_{vjk}^{(t)}$ for all layers
using (\ref{eq:step1}).

For nonnegative real observations $y_{vj}^{(1)}$ that are linked to the latent counts at layer one~as $$y_{vj}^{(1)}\sim\mbox{Gam}(x_{vj}^{(1)},1/a_j),$$ we let $x_{vj}^{(1)}=0$ if $y_{vj}^{(1)}=0$ and sample $x_{vj}^{(1)}$ from the truncated Bessel distribution as
\beq\label{eq:Gam-PGBN}
\big(x_{vj}^{(1)}\,|\, - \big)\sim \mbox{Bessel}_{-1}\left( 2\sqrt{a_j y_{vj}^{(1)} \sum_{k=1}^{K_1}\phi_{vk}^{(1)}\theta_{kj}^{(1)}} \right)
\eeq
if $y_{vj}^{(1)}>0$. We let $a_j\sim\mbox{Gam}(e_0,1/f_0)$ in the prior and sample $a_j$ as
\beq\label{eq:c_g}
(a_j\,|\,-)\sim\mbox{Gam}\left(e_0+\sum_{v} x_{vj}^{(1)},\frac{1}{f_0+\sum_{v} y_{vj}^{(1)}}\right).
\eeq
 We then sample $x_{vjk}^{(t)}$ for all layers
using (\ref{eq:step1}).

\section{Additional Tables and Figures}\label{sec:fig}

\begin{algorithm}[h]
\small
 \caption{\small The PGBN upward-downward Gibbs sampler that uses a layer-wise training strategy to train a set of networks, each of which adds an additional hidden layer on top of the previously inferred network, retrains all its layers jointly, and prunes inactive factors from the last layer. 
  \textbf{Inputs:} observed counts $\{x_{vj}\}_{v,j}$, upper bound of the width of the first layer $K_{1\max}$, upper bound of the number of layers $T_{\max}$, number of iterations $\{B_T,S_T\}_{1,T_{\max}}$, and hyper-parameters.\newline
  \textbf{Outputs:} A total of $T_{\max}$ jointly trained PGBNs with depths $T=1$, $T=2$, $\ldots$, and $T=T_{\max}$.
 }\label{tab:algorithm}
 \begin{algorithmic}[1] 
  \For{\text{$T=1,2,\ldots,T_{\max}$}} Jointly train all the $T$ layers of the network
  \State
  Set $K_{T-1}$, the inferred width of layer $T-1$, as $K_{T\max}$, the upper bound of layer $T$'s width. 
  \For{\text{$iter=1: B_T+C_T$ 
  }} Upward-downward Gibbs sampling
   
   \State 
    \text{Sample $\{z_{ji}\}_{j,i}$ using collapsed inference; Calculate $\{x_{vjk}^{(1)}\}_{v,k,j}$};
     \text{Sample $\{x_{vj}^{(2)}\}_{v,j}$} ;
   \For{\text{$t=2,3,\ldots,T$}}
    \State 
    \text{Sample $\{x_{vjk}^{(t)}\}_{v,j,k}$} ; 
     \text{ Sample $\{\phiv_k^{(t)}\}_{k}$} ;
    \text{ Sample $\{x_{vj}^{(t+1)}\}_{v,j}$} ;
   \EndFor
   \State
    \text{Sample $p_j^{(2)}$ and Calculate $c_j^{(2)}$}; Sample $\{c^{(t)}_j\}_{j,t}$ and Calculate $\{p^{(t)}_j\}_{j,t}$ for $t=3,\ldots,T+1$;\!\!
    \For{\texttt{$t=T,T-1,\ldots,2$}}
   \State 
    \text{Sample $\rv$} if $t=T$;
    \text{Sample $\{\thetav_j^{(t)}\}_j$} ;
   \EndFor
    
   \If{ $iter=B_T$}
   \State 
   Prune layer $T$'s inactive factors $\{\phiv_{k}^{(T)}\}_{k:x_{\cdotv \cdotv k}^{(T)}=0}$;
   \State let $K_T=\sum_k {\delta(x_{\cdotv \cdotv k}^{(T)}>0)}$ and 
    update $\rv$; 

   \EndIf
  \EndFor

  \State
  Output the posterior means (according to the last MCMC sample) of all remaining factors 
  $\{\phiv_k^{(t)}\}_{k,t}$
  as the inferred network of $T$ layers, and $\{r_k\}_{k=1}^{K_T}$ as 
  the gamma shape parameters of layer $T$'s 
  hidden units. 
   \EndFor
 \end{algorithmic} \normalsize
\end{algorithm}%

\begin{algorithm}[t]
\small
\caption{\small The upward-downward Gibbs samplers for the Ber-GBN and PRG-GBN are constructed by using Lines 1-8 shown below to substitute Lines 4-11 of the PGBN Gibbs sampler shown in Algorithm \ref{tab:algorithm}. 
 }\label{tab:algorithm2}
  \begin{algorithmic}[1] 
    \State 
     \text{Sample $\{x_{vj}^{(1)}\}_{v,j}$} using (\ref{eq:Ber-PGBN}) for binary observations; \text{Sample $\{x_{vj}^{(1)}\}_{v,j}$} using (\ref{eq:Gam-PGBN}) and sample $a_j$ using \eqref{eq:c_g} for nonnegative real observations;
   \For{\text{$t=1,2,\ldots,T$}}
    \State 
    \text{Sample $\{x_{vjk}^{(t)}\}_{v,j,k}$} ; 
     \text{ Sample $\{\phiv_k^{(t)}\}_{k}$} ;
    \text{ Sample $\{x_{vj}^{(t+1)}\}_{v,j}$} ;
   \EndFor
   \State
   \text{Sample $p_j^{(2)}$ and Calculate $c_j^{(2)}$}; Sample $\{c^{(t)}_j\}_{j,t}$ and Calculate $\{p^{(t)}_j\}_{j,t}$ for $t=3,\ldots,T+1$;
    \For{\texttt{$t=T,T-1,\ldots,1$}}
   \State 
   \text{Sample $\rv$} if $t=T$;
    \text{Sample $\{\thetav_j^{(t)}\}_j$} ;
   \EndFor

    \end{algorithmic}
 \normalsize
\end{algorithm}

\begin{table}[h]
\begin{footnotesize}
\begin{center}
\begin{tabular}{ c c c c}
\toprule
 $V=61,188$ & $V=61,188$ & $V=33,420$ & $V=33,420$ \\
 with stopwords & with stopwords & remove stopwords & remove stopwords \\
 with rare words& with rare words & remove rare words & remove rare words\\
 raw word counts & term frequencies & raw word counts & term frequencies \\
\midrule
 75.8\% & 77.6\% & 78.0\% & 79.4\%\\
\bottomrule
\end{tabular}
\end{center}
%
%
\begin{center}
\begin{tabular}{ c c c c}
\toprule
 $V=2000$ & $V=2000$ & $V=2000$ & $V=2000$ \\
 with stopwords & with stopwords & remove stopwords & remove stopwords \\
 raw counts & term frequencies & raw counts & term frequencies \\
\midrule
 68.2\% & 67.9\% & 69.8\% & 70.8\%\\
\bottomrule
\end{tabular}
\end{center}
\vspace{-3mm}
\caption{\small Multi-class classification accuracies of $L_2$ regularized logistic regression.}\label{tab:LR}
\end{footnotesize}
\end{table}%

\begin{table}[h]
\begin{footnotesize}
\begin{center}
\begin{tabular}{ c | c c}
\toprule
 &$V=2000$, $K=128$ & $V=2000$, $K=512$\\
 & remove stopwords, stemming & remove stopwords, stemming \\
\midrule
DocNADE & 67.0\% & 68.4\%\\
Over-replicated softmax &66.8\% & 69.1\%\\
\bottomrule
\end{tabular}
\end{center}
\vspace{-3mm}
\caption{\small Multi-class classification accuracies of the DocNADE \citep{larochelle2012neural} and over-replicated softmax \citep{srivastava2013modeling}.}\label{tab:ORS}
\end{footnotesize}
\end{table}%

\begin{table}[h]
\begin{footnotesize}
%
%
%
\begin{center}
\begin{tabular}{ c | c c c}
\toprule
 &$V=2000$, $K_{1\max}=128$ &$V=2000$, $K_{1\max}=256$ & $V=2000$, $K_{1\max}=512$\\
 & remove stopwords& remove stopwords& remove stopwords \\
\midrule
PGBN ($T=1$) & $65.9\% \pm 0.4\%$& $66.3\% \pm0.4\%$ & $65.9\% \pm0.4\%$\\
PGBN ($T=2$) & $67.1\% \pm 0.5\%$& $67.9\% \pm0.4\%$ & $68.3\% \pm0.3\%$\\
PGBN ($T=3$) & $67.3\% \pm0.3\%$ & $68.6\% \pm0.5\%$ & $69.0\% \pm0.4\%$\\
PGBN ($T=5$) & $67.5\% \pm0.4\%$ & $68.8\% \pm0.3\%$ & $69.2\% \pm0.4\%$\\
\bottomrule
\end{tabular}
\end{center}
\begin{center}
\begin{tabular}{ c | c c c}
\toprule
 &$V=33,420$, $K_{1\max}=200$ &$V=33,420$, $K_{1\max}=400$ & $V=33,420$, $K_{1\max}=800$\\
 & remove stopwords& remove stopwords& remove stopwords \\
 & remove rare words& remove rare words& remove rare words \\
\midrule
PGBN ($T=1$) & $74.6\% \pm 0.6\%$& $75.3\% \pm0.6\%$ & $75.4\% \pm0.4\%$\\
PGBN ($T=2$) & $76.0\% \pm 0.6\%$& $76.9\% \pm0.5\%$ & $77.5\% \pm0.4\%$\\
PGBN ($T=3$) & $76.3\% \pm0.8\%$ & $77.1\% \pm0.6\%$ & $77.8\% \pm0.4\%$\\
PGBN ($T=5$) & $76.4\% \pm0.5\%$ & $77.4\% \pm0.6\%$ & $77.9\% \pm0.3\%$\\
\bottomrule
\end{tabular}
\end{center}
\vspace{-3mm}
\caption{\small Multi-class classification accuracies of the PGBN trained with $\eta^{t}=0.05$ for all $t$.}\label{tab:PGBN}
\end{footnotesize}
\end{table}%



\clearpage

\begin{figure}[!t]
\begin{center}
 \includegraphics[width=0.965\textwidth]{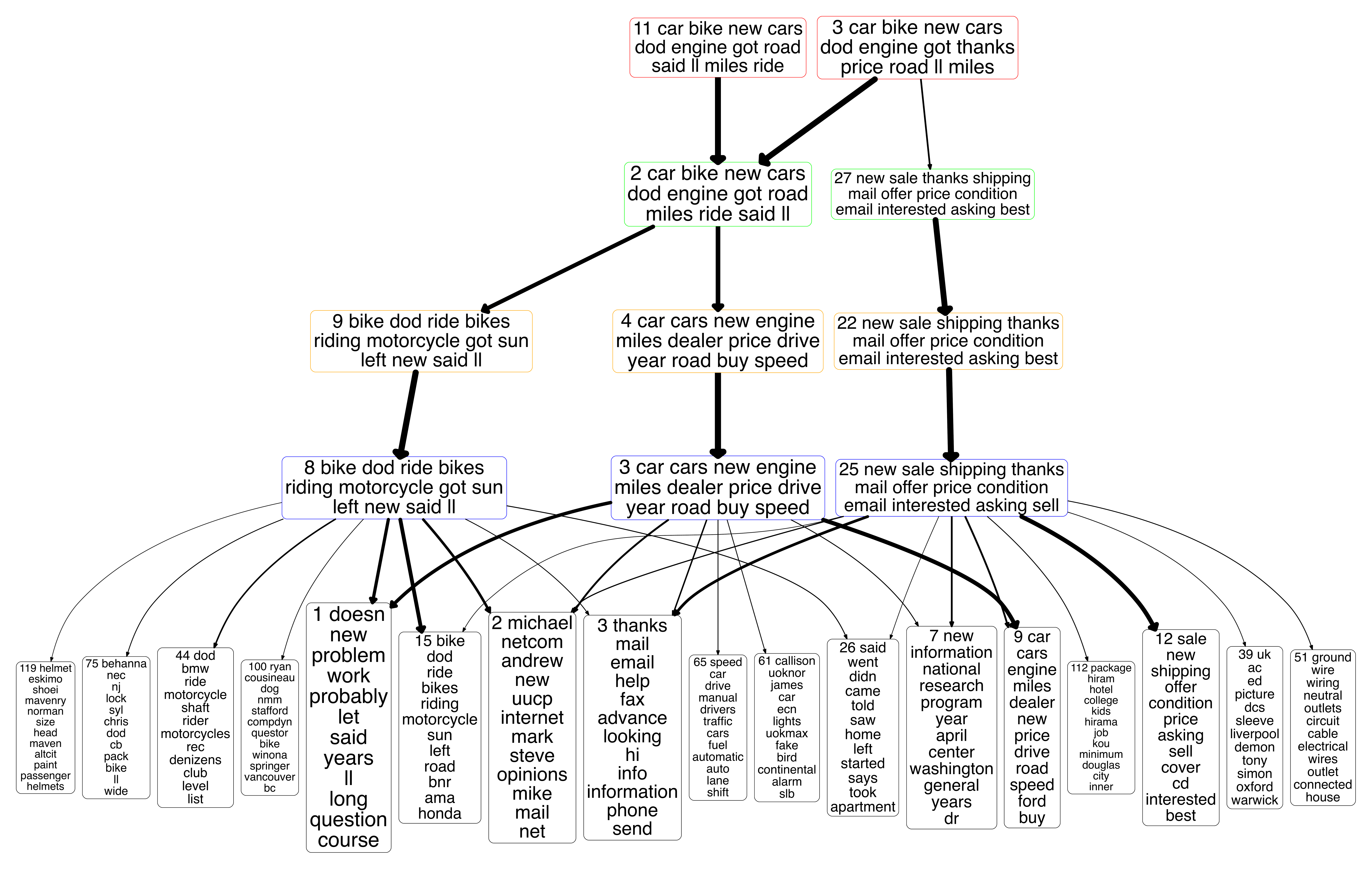}
\end{center}
\vspace{-8mm}
\caption{\small \label{fig:car}
Analogous plots to Figure \ref{fig:windows} 
for a subnetwork on ``car \& bike'', consisting of two trees rooted at nodes 3 and 11, respectively, of layer one. 
}

\begin{center}
 \includegraphics[width=0.965\textwidth]{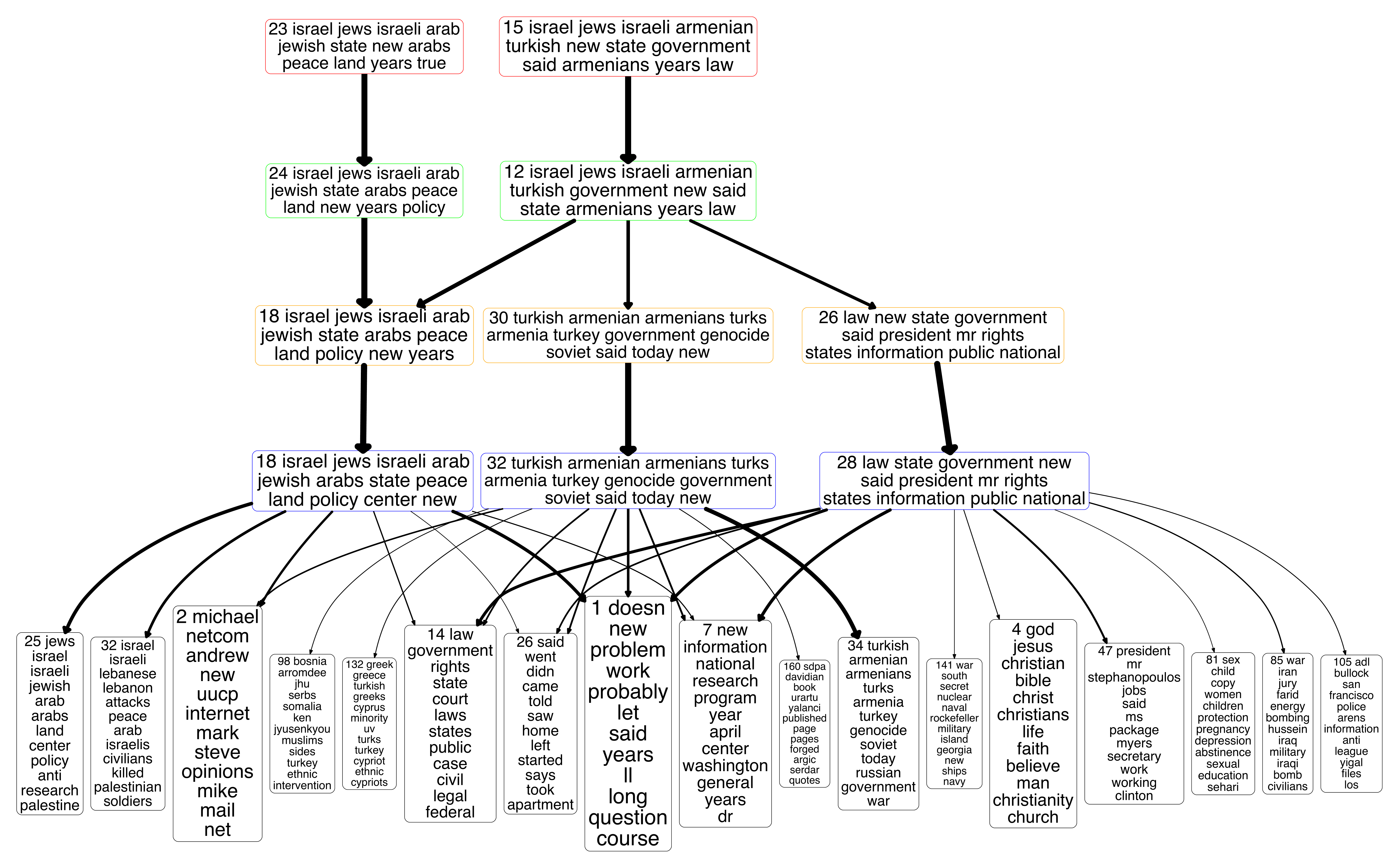}\, \,
\end{center}
\vspace{-10mm}
\caption{\small \label{fig:mideast}
Analogous plot to Figure \ref{fig:windows} 
for a subnetwork on ``Middle East,'' consisting of two trees rooted at nodes 15 and 23, respectively, of layer one. 
}
\end{figure}

\begin{figure}[!t]
\begin{center}
\includegraphics[width=0.76\textwidth]{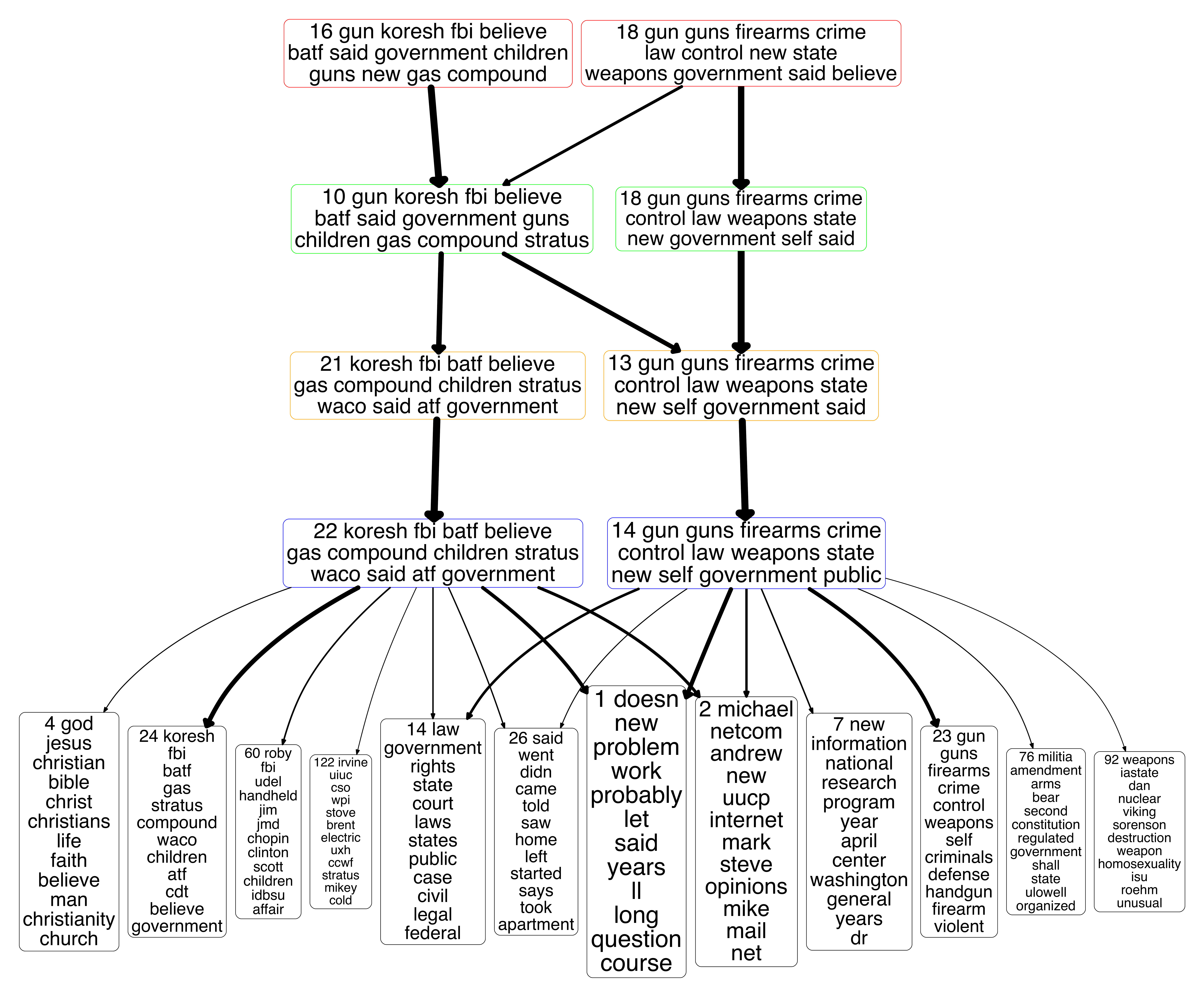}
\end{center}
\vspace{-6.mm}
\caption{\small \label{fig:gun} 
Analogous plot to Figure \ref{fig:windows} 
for a subnetwork related to ``gun,'' consisting of two trees rooted at nodes 16 and 18, respectively, of layer one.
 }

\begin{center}
 \includegraphics[width=0.83\textwidth]{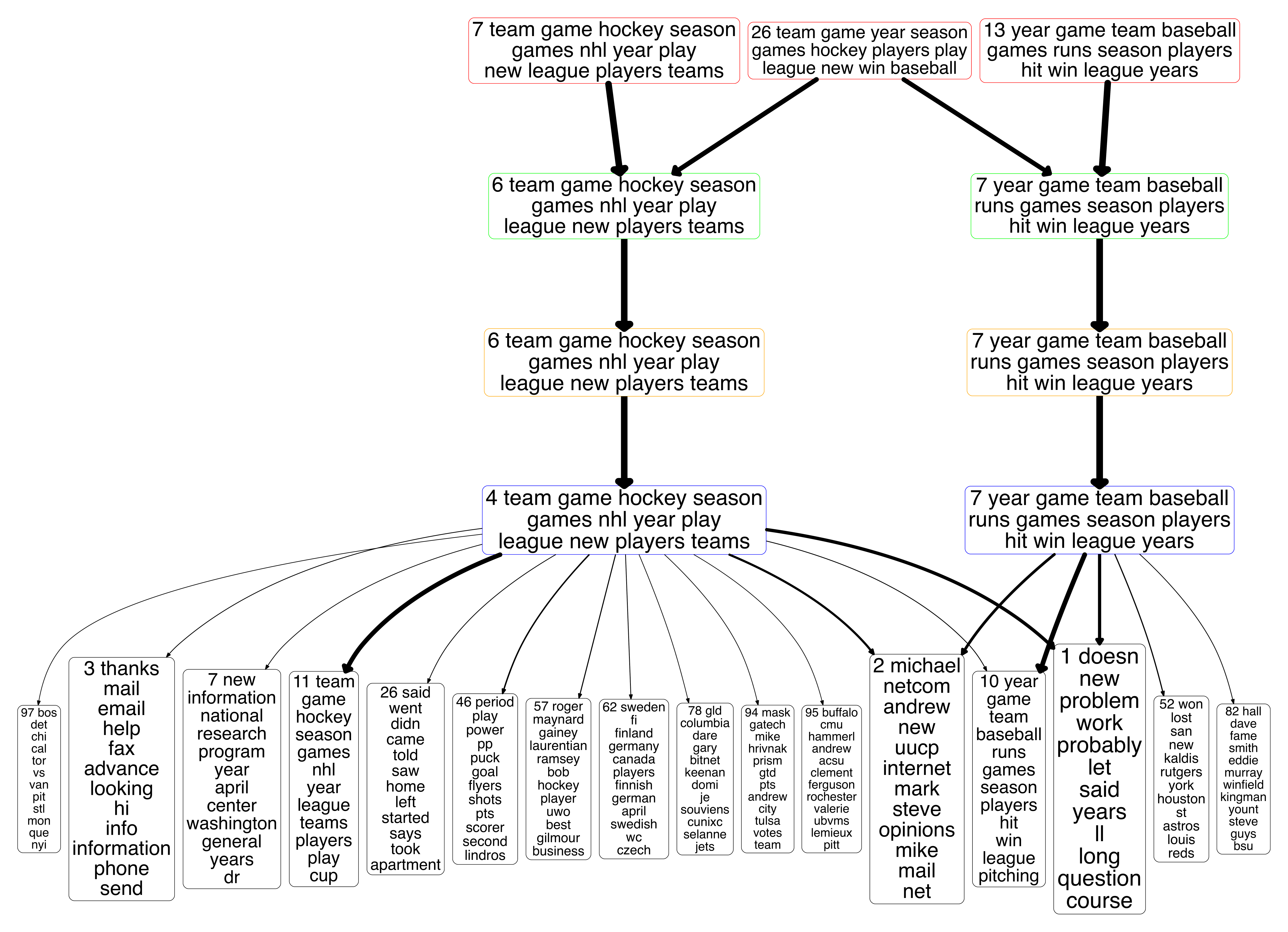}
\end{center}
\vspace{-10mm}
\caption{\small \label{fig:sports}
Analogous plot to Figure \ref{fig:windows} 
for a subnetwork on ``ice hockey'' and ``baseball,'' consisting of three trees rooted at nodes 7, 13, and 26, respectively, of layer one.
} 
\end{figure}

\begin{figure}[!t]
\begin{center}
 \vspace{-4mm}\includegraphics[width=0.9\textwidth]{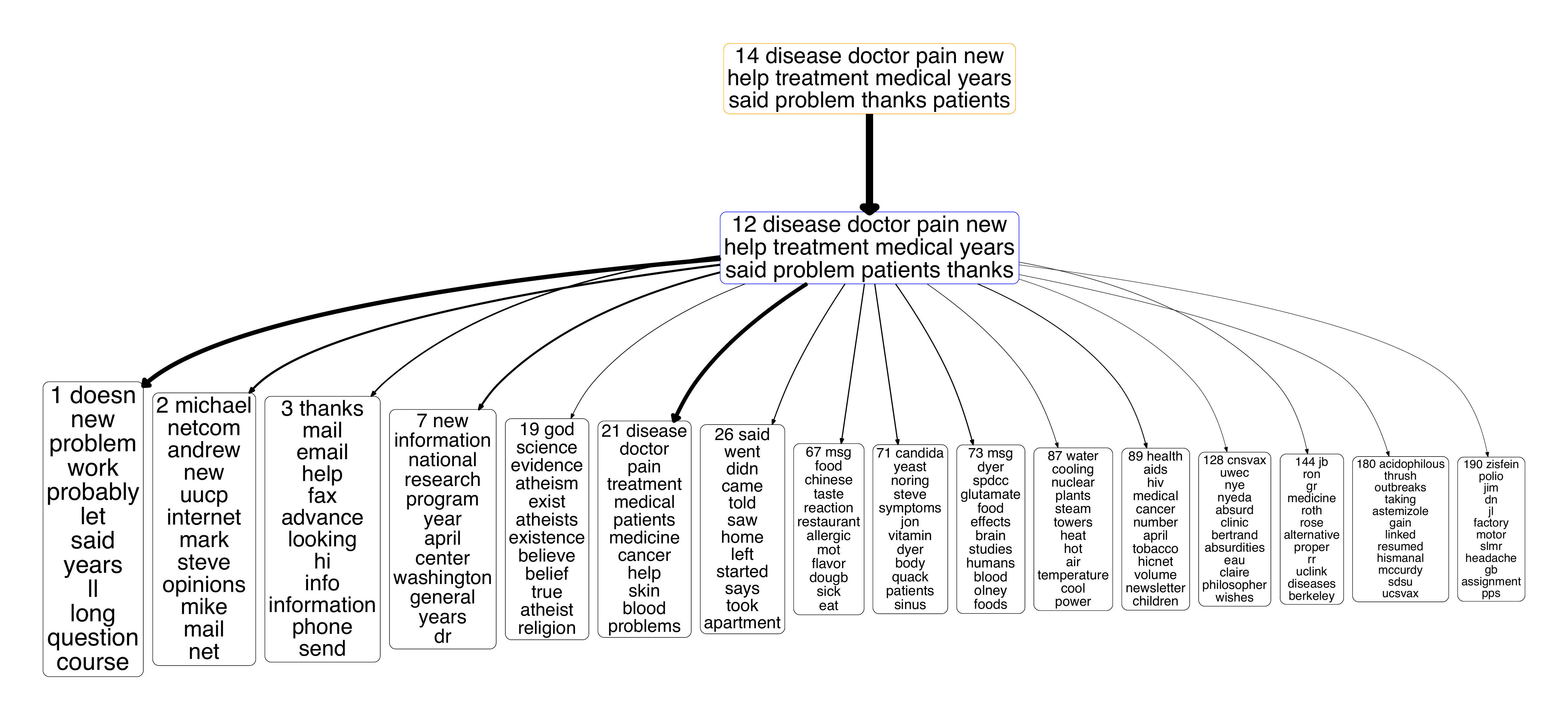}\vspace{-4mm}
 \includegraphics[width=0.8\textwidth]{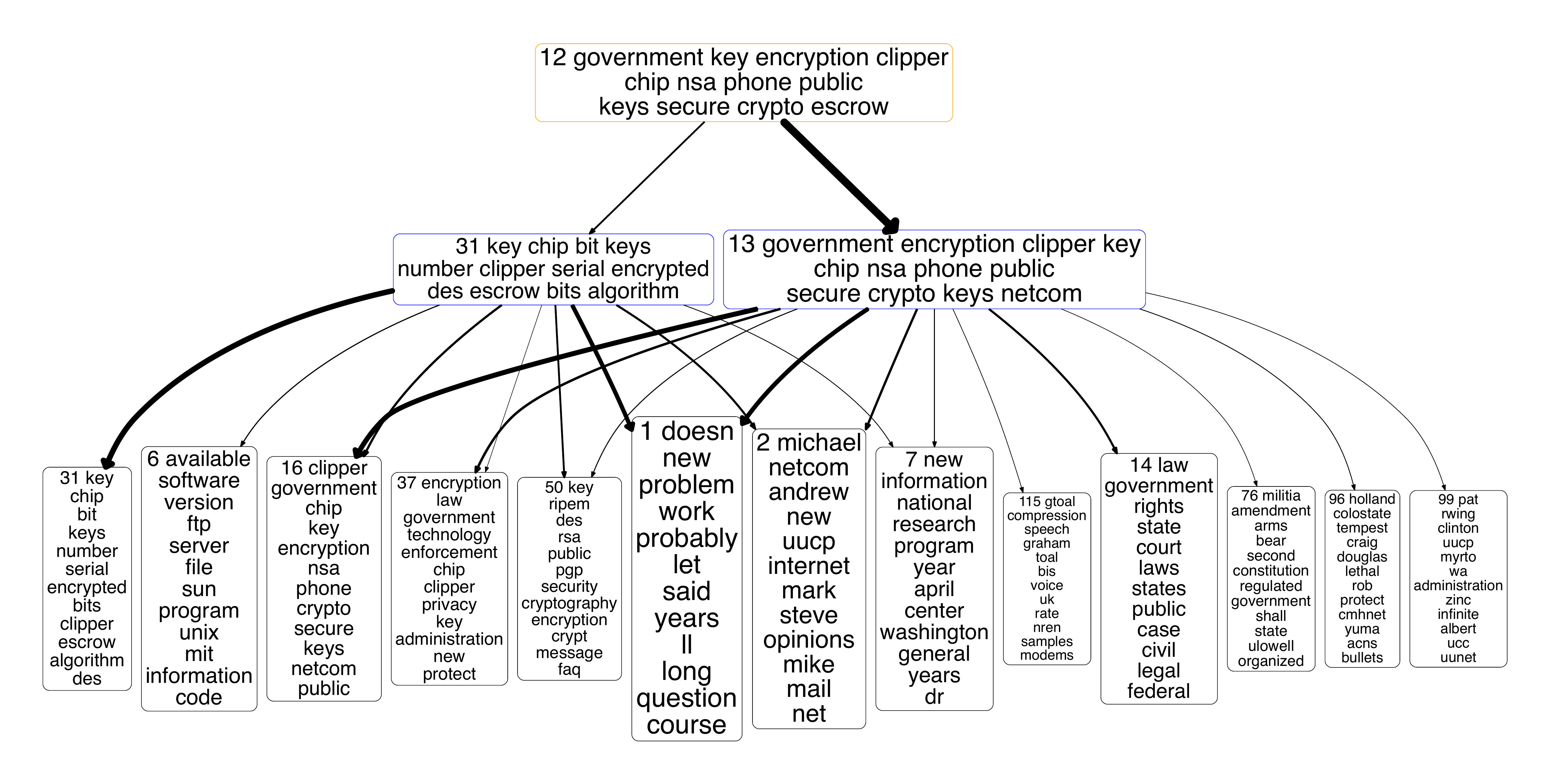}\vspace{-3mm}
 \includegraphics[width=0.94\textwidth]{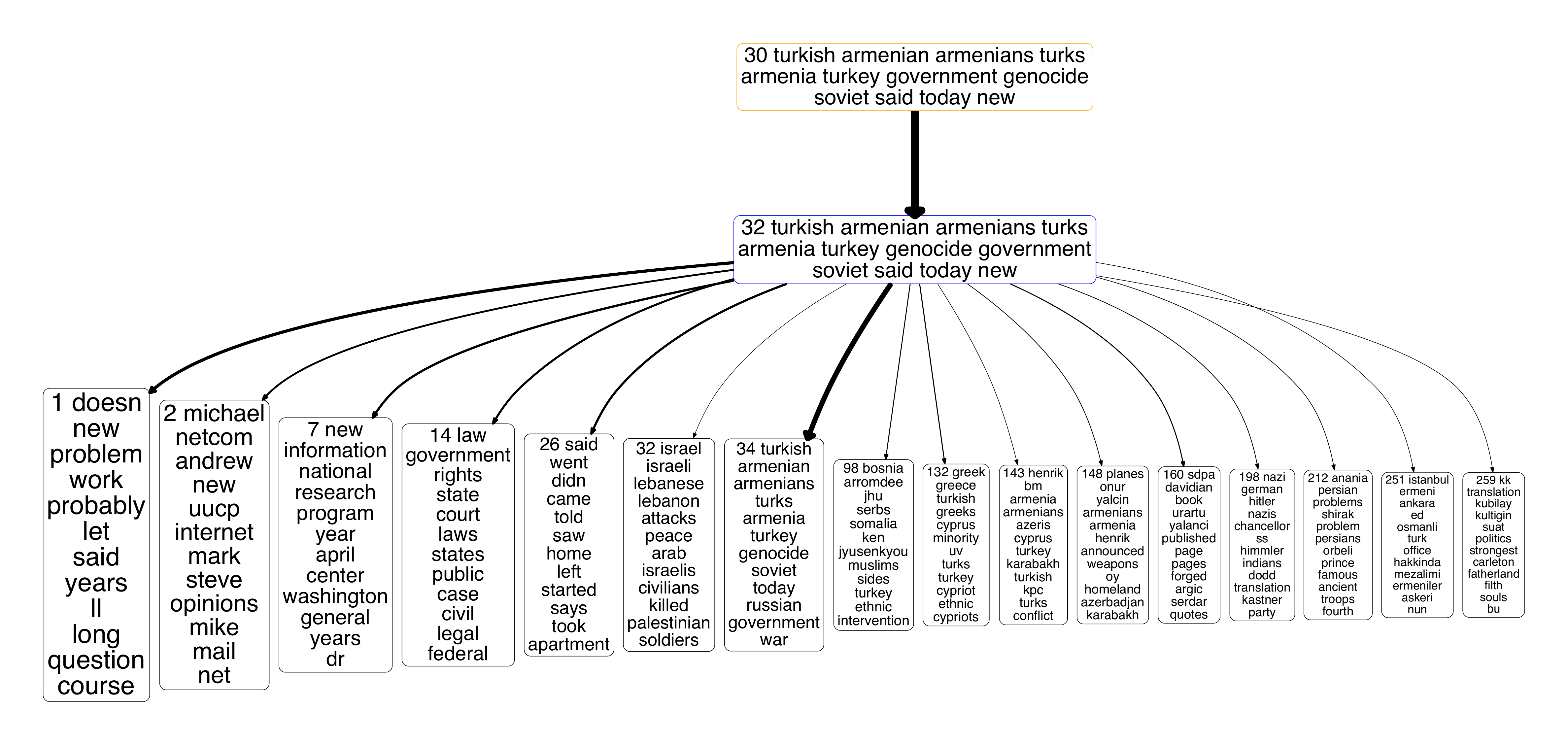}
\end{center}
\vspace{-8mm}
\caption{\small \label{fig:turkey}
Analogous plots to Figure \ref{fig:windows}, with $\tau_t=1$ to reveal more weak links. 
Top: the tree rooted at node 14 of layer three on ``medicine.'' Middle: the tree rooted at node 12 of layer three on ``encryption.'' Bottom: 
the tree rooted at node 30 of layer three on ``Turkey \& Armenia.''
}
\end{figure}

\begin{figure}[!tb]
 \centering
 \subfigure[] {
\includegraphics[scale=0.27]{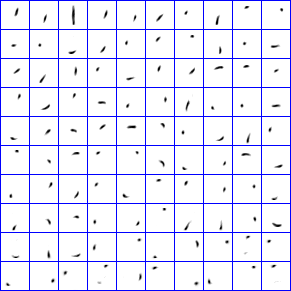}
}\!\!
 \subfigure[] {
\includegraphics[scale=0.27]{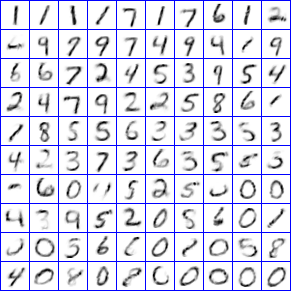}
}\!\!
 \subfigure[] {
\includegraphics[scale=0.27]{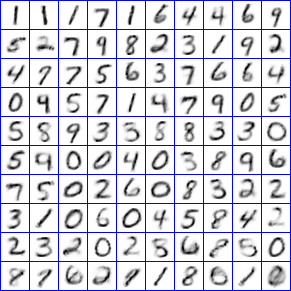}
}\!\!
 \subfigure[] {
\includegraphics[scale=0.27]{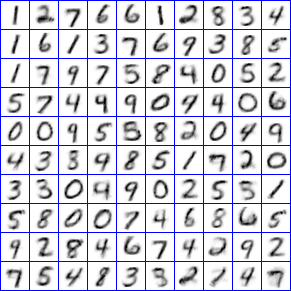}
}\!\!
 \subfigure[] {
\includegraphics[scale=0.27]{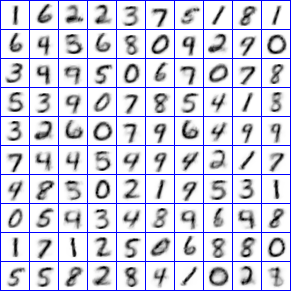}
}
\vspace{-0.3cm}
 \caption{\small Visualization of the inferred $\{\Phimat^{(1)},\cdot\cdot\cdot,\Phimat^{(T)}\}$ on the MNIST data set using the PRG-GBN with $K_{1max}=100$ and $\eta^{(t)}=0.05$ for all $t$. The latent factors  of all layers are projected to the first layer: (a) $\Phimat^{(1)}$, (b) $\Phimat^{(1)}\Phimat^{(2)}$, (c) $\Phimat^{(1)}\Phimat^{(2)}\Phimat^{(3)}$, (d) $\Phimat^{(1)}\Phimat^{(2)}\Phimat^{(3)}\Phimat^{(4)}$, and (e) $\Phimat^{(1)}\Phimat^{(2)}\Phimat^{(3)}\Phimat^{(4)}\Phimat^{(5)}$. \label{fig:AllProjPhis100}}
\end{figure}
%

%
\begin{figure}[!tb]
 \centering
 \subfigure[] {
\includegraphics[scale=0.22]{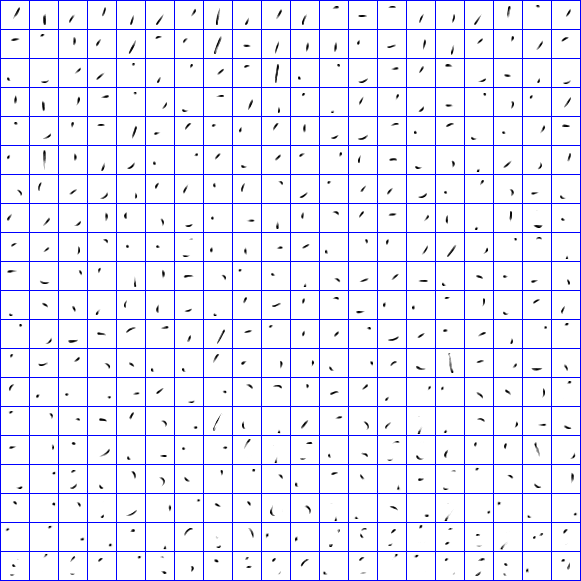}
}
 \subfigure[] {
\includegraphics[scale=0.22]{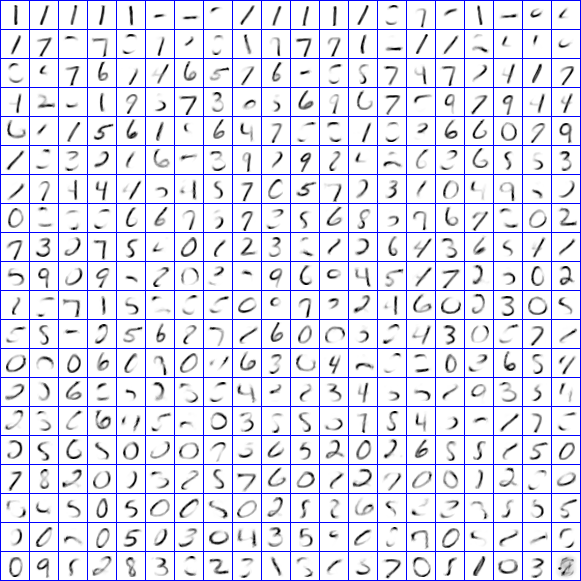}
}\\
 \subfigure[] {
\includegraphics[scale=0.22]{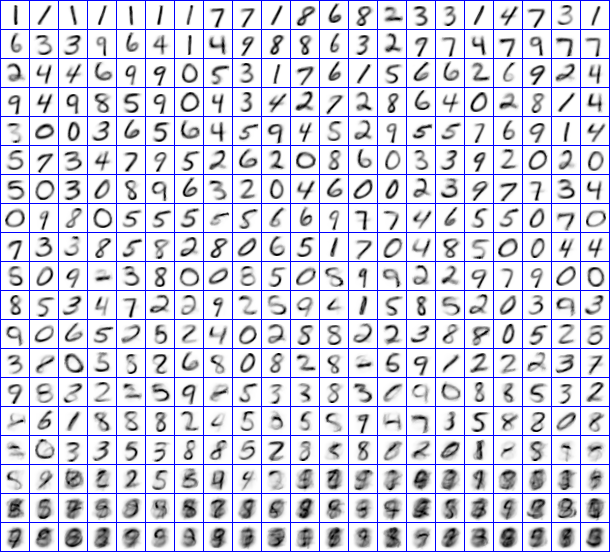}
}
 \subfigure[] {
\includegraphics[scale=0.22]{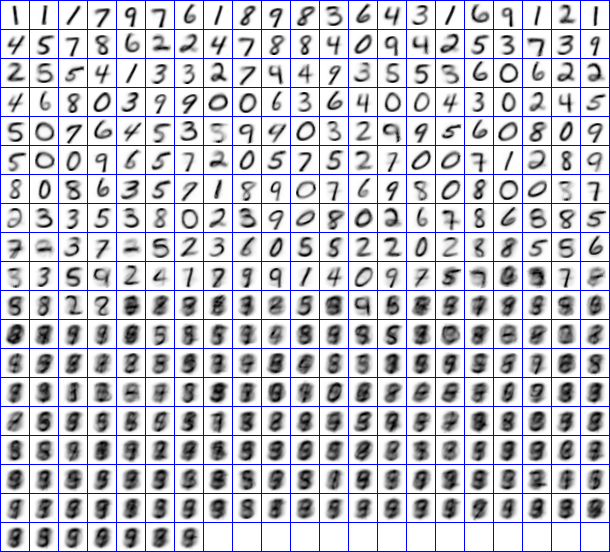}
}
 \subfigure[] {
\includegraphics[scale=0.22]{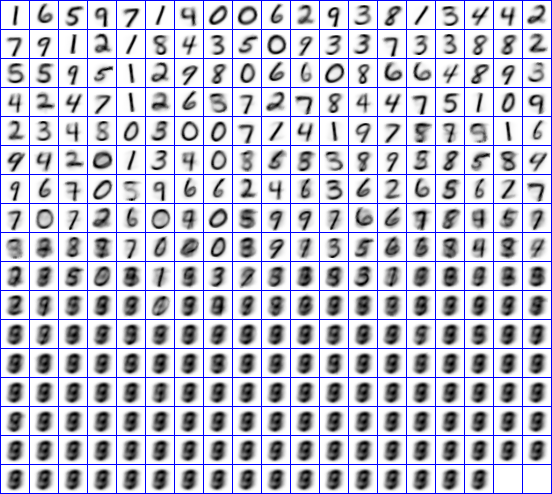}
}
\vspace{-0.3cm}
 \caption{\small Visualization of the inferred $\{\Phimat^{(1)},\cdot\cdot\cdot,\Phimat^{(T)}\}$ on the MNIST data set using the PRG-GBN with $K_{1max}=400$ and $\eta^{(t)}=0.05$ for all $t$. The latent factors of all layers are projected to the first layer: (a) $\Phimat^{(1)}$, (b) $\Phimat^{(1)}\Phimat^{(2)}$, (c) $\Phimat^{(1)}\Phimat^{(2)}\Phimat^{(3)}$, (d) $\Phimat^{(1)}\Phimat^{(2)}\Phimat^{(3)}\Phimat^{(4)}$, and (e) $\Phimat^{(1)}\Phimat^{(2)}\Phimat^{(3)}\Phimat^{(4)}\Phimat^{(5)}$. \label{fig:AllProjPhis400}}
\end{figure}

%

\begin{figure}[!tb]
 \centering
 \subfigure[] {
\includegraphics[width=0.46\textwidth]{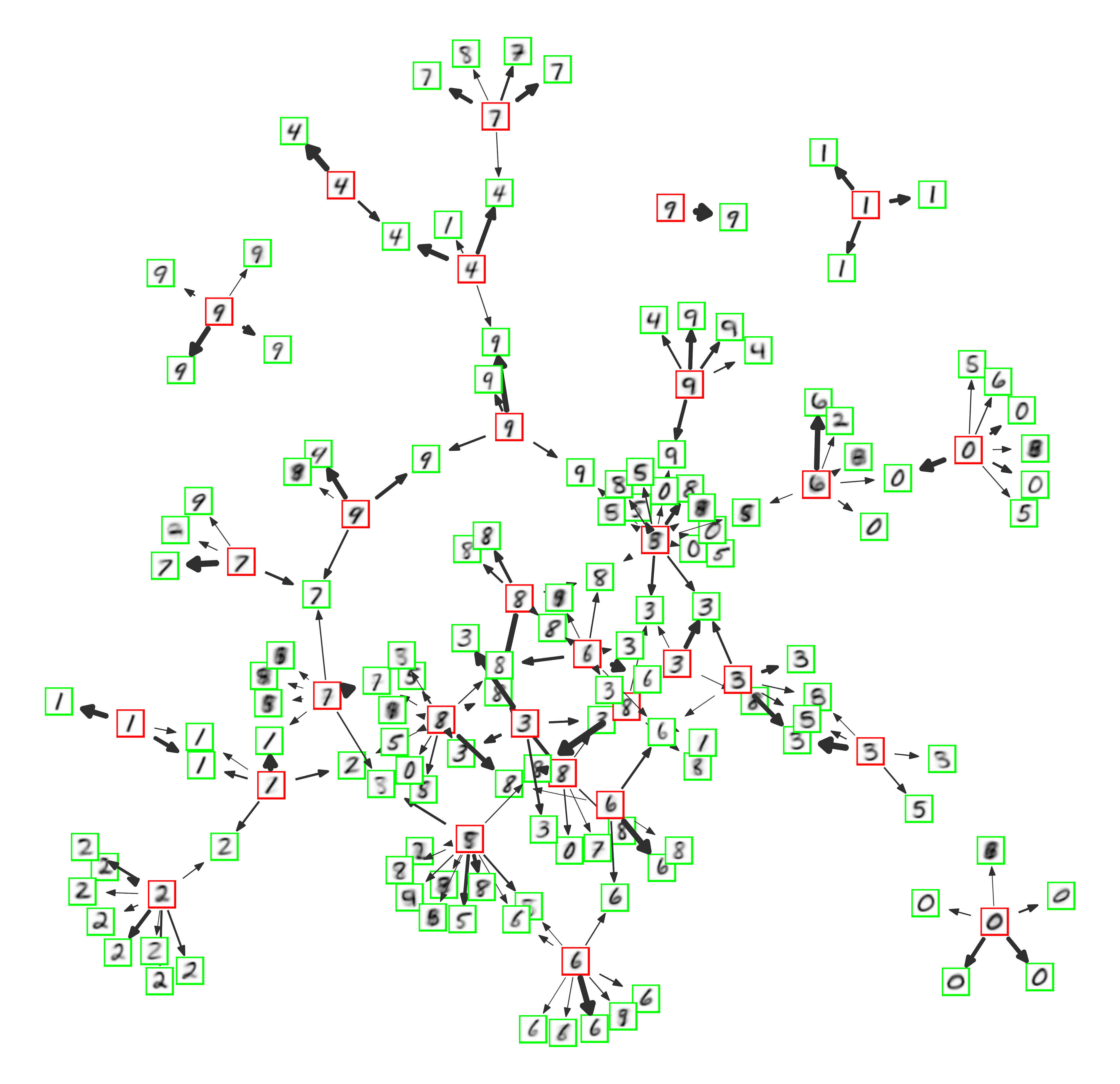}
}
 \subfigure[] {
\includegraphics[width=0.45\textwidth]{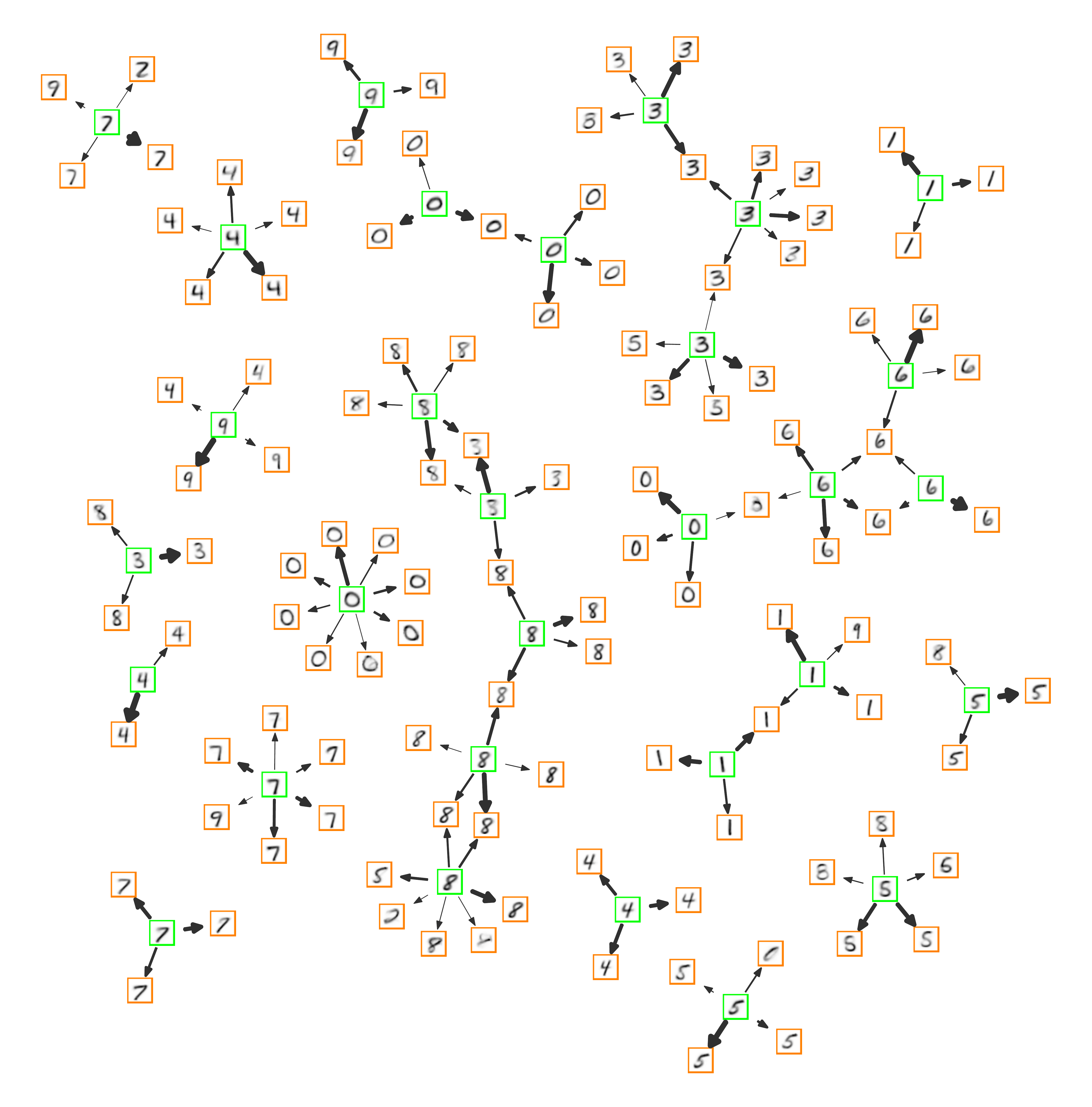}
}\\
 \subfigure[] {
\includegraphics[width=0.49\textwidth]{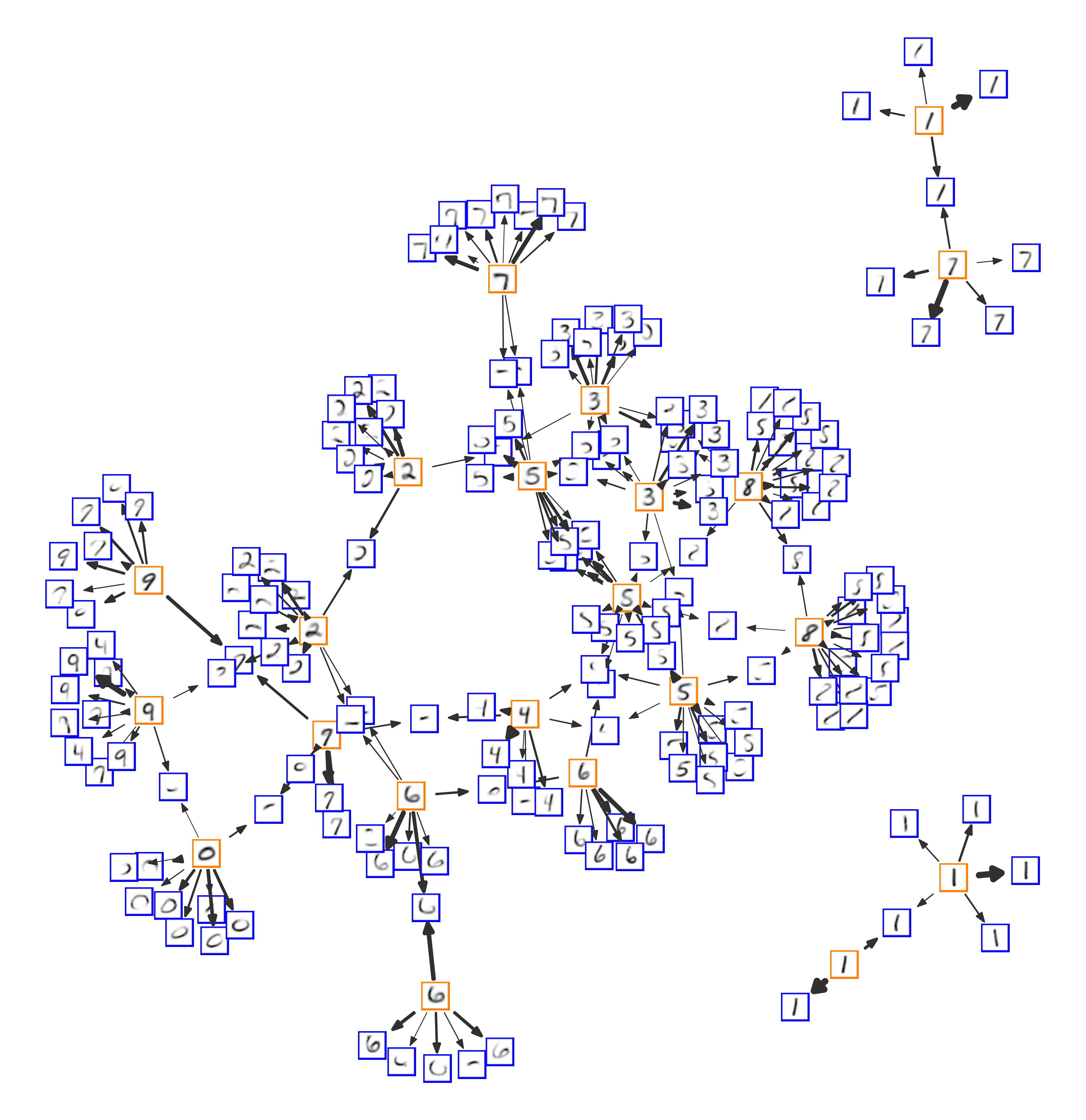}
}
 \subfigure[] {
\hspace{-0.00\textwidth}\includegraphics[width=0.40\textwidth]{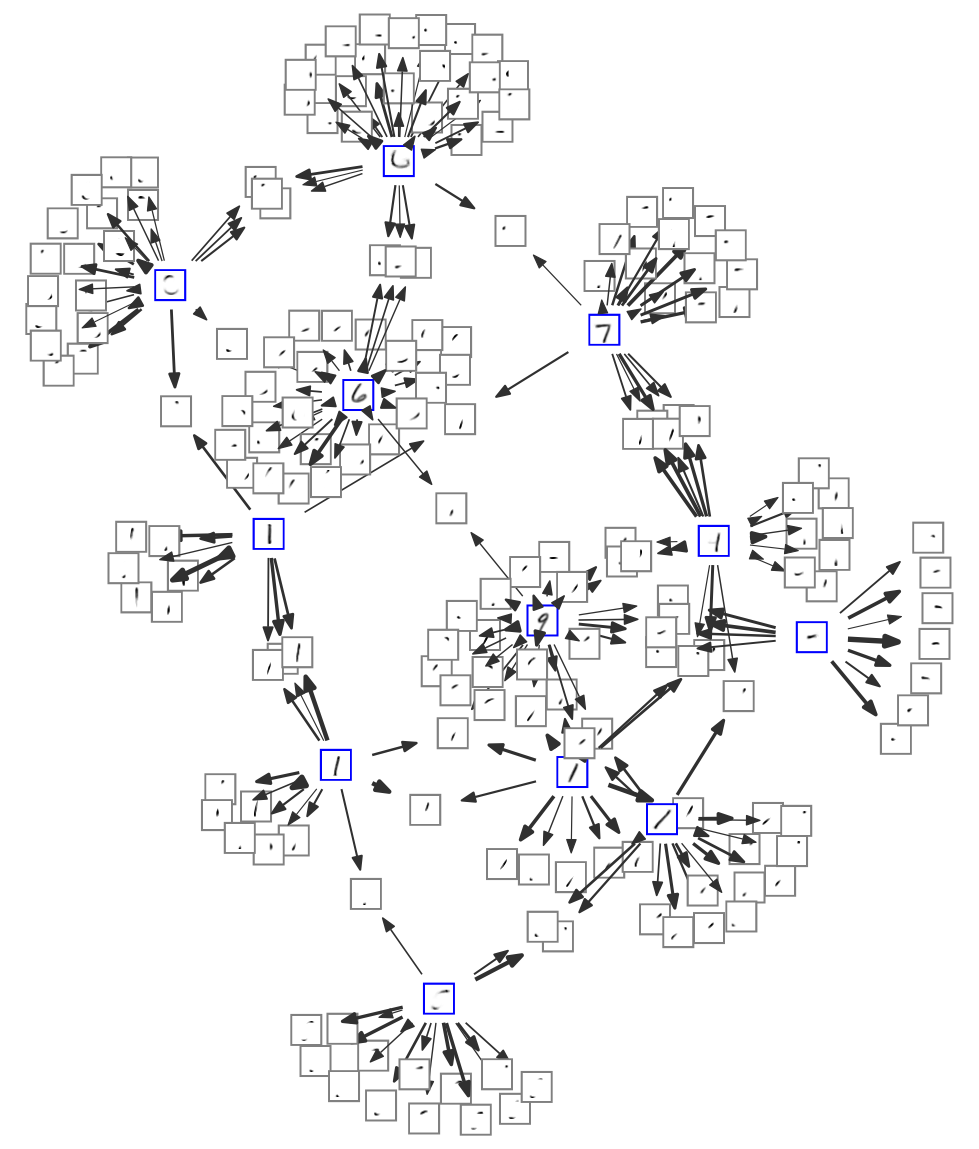}
}
\vspace{-0.2cm}
 \caption{\small 
 Visualization of the network structures inferred by the PRG-GBN on the MNIST data set with $K_{1max}=400$. 
 (a) Visualization of the factors $(\phiv^{(5)}_1,\phiv^{(5)}_{11},\phiv^{(5)}_{21},\ldots,\phiv^{(5)}_{111})$ of layer five and those of layer four that are strongly connected to them. (b) Visualization of the factors $(\phiv^{(4)}_1,\phiv^{(4)}_{6},\phiv^{(4)}_{11},\ldots,\phiv^{(4)}_{106})$ of layer four and those of layer three that are strongly connected to them. (c)the
 Visualization of the factors $(\phiv^{(3)}_1,\phiv^{(3)}_{6},\phiv^{(3)}_{11},\ldots,\phiv^{(3)}_{146})$ of layer three and those of layer two that are strongly connected to them. (d) Visualization of the factors $(\phiv^{(2)}_1,\phiv^{(2)}_{6},\phiv^{(2)}_{11},\ldots,\phiv^{(2)}_{146})$ of layer two and those of layer one that are strongly connected to them. 
 \label{fig:MNIST_tree}
 }
\end{figure}

\clearpage

\bibliography{References052016}

\end{document}